\theoremstyle{plain}
\newtheorem{theorem}{Theorem}
\newtheorem{proposition}[theorem]{Proposition}
\newtheorem{lemma}[theorem]{Lemma}
\newtheorem{corollary}[theorem]{Corollary}
\theoremstyle{definition}
\newtheorem{hypothesis}[theorem]{Hypothesis}
\theoremstyle{remark}
\newtheorem*{remark}{Remark}
\newcommand{\eqdef}{\coloneqq}
\newcommand{\NTK}{\textrm{eNTK}}
\newcommand*{\n}{N}
\newcommand*{\p}{P}
\renewcommand*{\c}{C}  %
\newcommand*{\Sigmahat}{\widehat\Sigma}
\newcommand*{\Sigmatilde}{\Sigma'}
\newcommand*{\tSigma}{\widetilde\Sigma}
\newcommand*{\tX}{\widetilde X}
\newcommand*{\tSigmahat}{\widehat{\widetilde\Sigma}}
\newcommand*{\tbeta}{\tilde\beta}
\newcommand*{\tkappa}{\tilde\kappa}
\newcommand*{\tm}{\widetilde m}
\newcommand*{\GCV}{\mathrm{GCV}}
\let\epsilon\varepsilon
\let\phi\varphi
\newcommand*{\Disttilde}{\smash{{\Dist'}}}
\newcommand*{\betatilde}{\beta'}
\newcommand*{\betatildehat}{\hat\beta'}
\newcommand*{\Sigmatildehat}{\smash{\widehat{\Sigma}'}}
\newcommand*{\xtilde}{x'}
\newcommand*{\Frob}{\mathrm{F}}
\newcommand*{\C}{\mathbb{C}}   %
\newcommand*{\HH}{\mathbb{H}}   %
\renewcommand*{\i}{\mathrm{i}}
\newcommand*{\Dist}{\mathcal{D}}
\DeclareMathOperator*{\E}{\mathbb{E}}
\let\Pr\relax
\DeclareMathOperator*{\Pr}{\mathbb{P}}  %
\newcommand*{\parfrac}[2]{\frac{\partial#1}{\partial#2}}  %
\DeclareMathOperator{\Tr}{Tr}
\newcommand*{\N}{\mathbb{N}}
\newcommand*{\R}{\mathbb{R}}
\newcommand*{\Risk}{\mathcal{R}}
\newcommand*{\TrainRisk}{\mathcal{R}_{\mathrm{empirical}}}
\newcommand*{\NormRisk}{\widehat{\mathcal{R}}_{\mathrm{norm}}}
\newcommand*{\SpecRisk}{\widehat{\mathcal{R}}_{\mathrm{spec}}}
\newcommand*{\OmniRisk}{{\mathcal{R}}_{\mathrm{omni}}}
\DeclareMathOperator*{\argmin}{arg\,min}  %
\newcommand*{\tensor}{\@ifnextchar_{\tens@r\otimes}{\otimes}}
\def\tens@r#1_#2{\mathbin{\mathchoice
  {\mathop{#1}\limits_{#2}}{{#1}_{#2}}{{#1}_{#2}}{{#1}_{#2}}}}
\newcommand*{\transp}{{\mkern-1mu\mathsf{T}\mkern-0.5mu}}
\let\brack\relax
\DeclarePairedDelimiter{\abs}{\lvert}{\rvert}
\DeclarePairedDelimiter{\norm}{\lVert}{\rVert}
\DeclarePairedDelimiter{\ceil}{\lceil}{\rceil}
\DeclarePairedDelimiter{\floor}{\lfloor}{\rfloor}
\DeclarePairedDelimiter{\paren}{\lparen}{\rparen}
\DeclarePairedDelimiter{\brack}{\lbrack}{\rbrack}
\DeclarePairedDelimiter{\bangle}{\langle}{\rangle}
\let\Re\relax
\let\Im\relax
\DeclareMathOperator{\Re}{Re}  %
\DeclareMathOperator{\Im}{Im}  %
\newcommand*{\given}{\mkern2mu\ifnum\currentgrouptype=16\middle\fi|\mkern2mu}
\icmltitlerunning{Random Matrix Models Predict How Real-World Neural Representations Generalize}
\begin{document}

\twocolumn[
\icmltitle{More Than a Toy: Random Matrix Models Predict How \texorpdfstring{\\}{} Real-World Neural Representations Generalize}

\begin{icmlauthorlist}
\icmlauthor{Alexander Wei}{berkeley}
\icmlauthor{Wei Hu}{berkeley}
\icmlauthor{Jacob Steinhardt}{berkeley}
\end{icmlauthorlist}

\icmlaffiliation{berkeley}{UC Berkeley, Berkeley, California, USA}

\icmlcorrespondingauthor{Alexander Wei}{awei@berkeley.edu}

\vskip 0.3in
]

\printAffiliationsAndNotice{}  %

\begin{abstract}
Of theories for why large-scale machine learning models generalize despite being vastly overparameterized, which of their assumptions are needed to capture the qualitative phenomena of generalization in the real world? On one hand, we find that most theoretical analyses fall short of capturing these qualitative phenomena even for kernel regression, when applied to kernels derived from large-scale neural networks (e.g., ResNet-50) and real data (e.g., CIFAR-100). On the other hand, we find that the classical GCV estimator (Craven and Wahba, 1978) accurately predicts generalization risk even in such overparameterized settings. To bolster this empirical finding, we prove that the GCV estimator converges to the generalization risk whenever a local random matrix law holds. Finally, we apply this random matrix theory lens to explain why pretrained representations generalize better as well as what factors govern scaling laws for kernel regression. Our findings suggest that random matrix theory, rather than just being a toy model, may be central to understanding the properties of neural representations in practice.
\end{abstract}

\section{Introduction}\label{sec:introduction}

\begin{figure}
    \centering
    \includegraphics[width=\columnwidth]{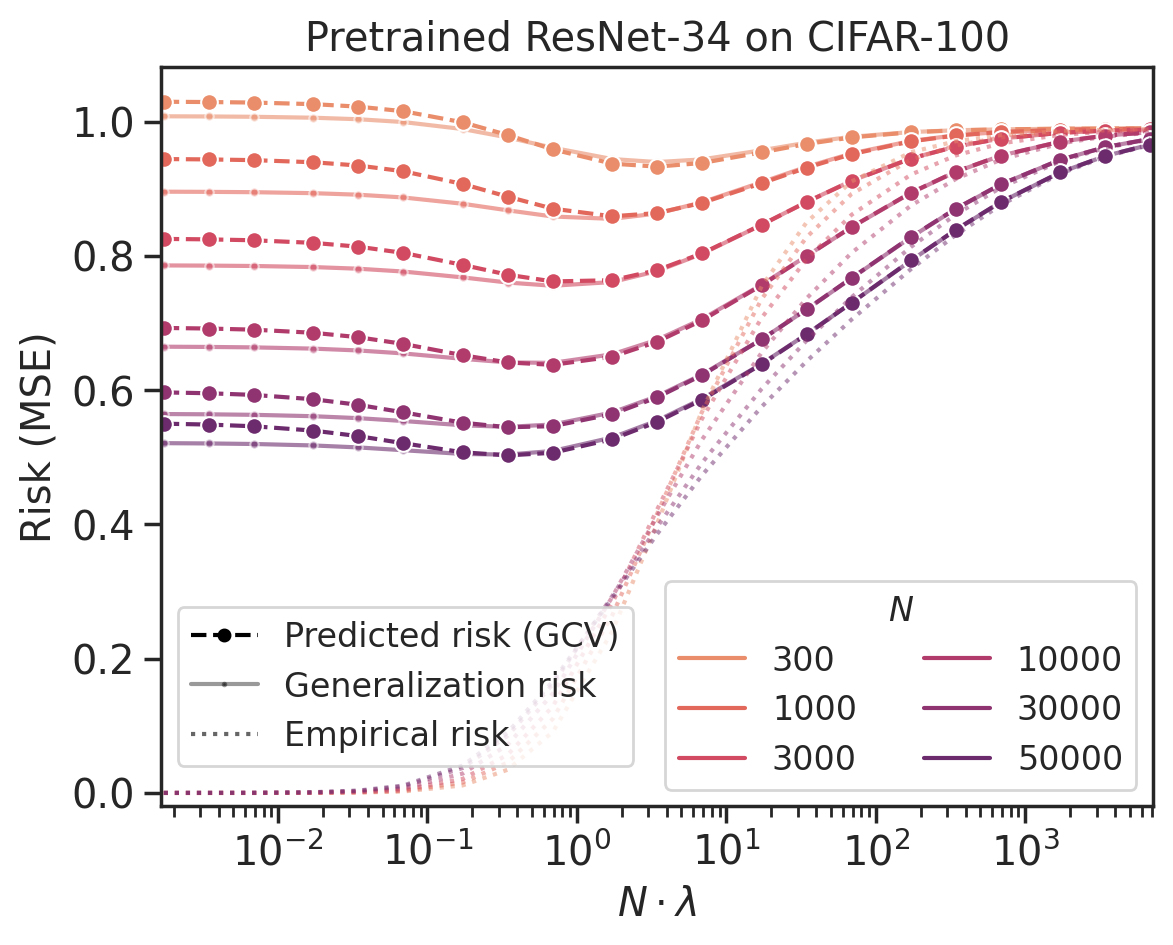}
    \caption{Predicted vs.\ actual generalization risk of a pretrained ResNet-34 empirical NTK on CIFAR-100 over dataset sizes $\n$ and ridge regularizations $\lambda$. Corresponding training risks are plotted in the background. The fit achieving the lowest MSE has $19.9\%$ test error on CIFAR-100 (vs.\ $15.9\%$ from finetuning the ResNet).}
    \label{fig:train-test}
\end{figure}

The fact that deep neural networks trained with many more parameters than data points can generalize well contradicts conventional statistical wisdom \cite{zhang17understanding}. This observation has inspired much theoretical work, with one of the goals being to explain the generalization and scaling behavior of such models. In this paper, we study how these theoretical perspectives map onto reality. What assumptions are necessary (or sufficient) to capture the qualitative phenomena (e.g., pretraining vs.\ random initialization, scaling laws) of large-scale models? And what do they reveal about generalization in the real world?

An adequate theoretical treatment should at least 
predict the behavior of high-dimensional \emph{linear} models. 
To assess this, we focus on linear models derived from neural representations (e.g., final layer activations or empirical neural tangent kernels) of large-scale networks on vision data. We test whether different theories can predict how kernel ridge regression on these representations generalizes, given only the training data.  %

In this setting of regression on realistic kernels, we find that most theoretical analyses already face severe challenges. A major difficulty is that the ground truth function has large---effectively infinite---kernel norm, which we verify empirically on several datasets. Consequently, norm-based generalization bounds are vacuous or even increase with dataset size, echoing concerns raised by \citet{belkin18understand} and \citet{nagarajan19uniform}. Other challenges for estimating generalization include the slow convergence of the empirical covariance matrix and the fact that noise and signal are indistinguishable in high-dimensional settings.

However, not all is lost. We find that the \emph{generalized cross-validation (GCV) estimator} \cite{craven78smoothing} does accurately predict the generalization risk, even when typical norm- or spectrum-based formulas struggle. GCV is accurate over a wide range of dataset sizes and regularization strengths, for classification tasks of varying complexities, and for representations extracted from residual networks both at random initialization and after pretraining. For instance, \Cref{fig:train-test} compares the GCV estimate against the true generalization risk for an ImageNet-pretrained ResNet-34 representation on CIFAR-100.

To justify the performance of the GCV estimator, we prove that it converges to the true generalization risk whenever a local random matrix law \cite{knowles17anisotropic} holds. Our analysis of this estimator allows for the highly anisotropic covariates and large-norm ground truth functions observed in our empirical setting.
Along the way, we also generalize recent random matrix analyses of high-dimensional ridge regression \cite{hastie21surprises, canatar21spectral, wu20optimal, jacot20kernel, loureiro21learning, richards21asymptotics, mel21arbitrary, simon21neural} to this setting. Finally, our analysis provides a new perspective on this classical estimator that explains how its form arises in connection to random matrix theory.

We next apply this random matrix theory lens to explore basic questions about neural representations: Why do pretrained models generalize better than randomly initialized ones? And what factors govern the rates observed in neural scaling laws \cite{kaplan20scaling}? We find that alignment---how easy it is to represent the ground truth function in the eigenbasis \cite{marquardt1975ridge, caponnetto07optimal, canatar21spectral}---is necessary to explain the performance of deep learning models. In particular, pretrained representations perform better than random representations due to better alignment, and \emph{despite} worse eigenvalue decay. Finally, we provide sample-efficient methods to estimate the alignment and eigenvalue decay, which circumvent the slow convergence of the sample covariance matrix, and show that these two quantities are sufficient to predict the scaling law rate of ridge regression on natural data.

Our empirical findings and theoretical analysis show that a random matrix theoretic perspective stands apart at capturing the generalization of high-dimensional linear models on real data. More classical approaches, which often boil down to norms and/or eigendecay, do not suffice because generalization typically depends on the specific alignment between a high-norm ground truth function and the population covariance matrix. More broadly, our results suggest that accounting for random matrix effects is necessary to model the qualitative phenomena of deep learning---and in the case of kernel regression, sufficient.

\begin{remark}
In addition to our scientific contribution, we develop a library for computing large-scale empirical neural tangent kernels (e.g., for all of CIFAR-10 on a ResNet-101): {\footnotesize\url{https://github.com/aw31/empirical-ntks}}. Our library fills in a gap in existing tools for exploring neural tangent kernels at scale.
\end{remark}

\subsection{Related Work}

Since \citet{zhang17understanding}, %
many researchers have sought to explain why overparameterized models generalize. High-dimensional linear models capture many of the central empirical phenomena and are a natural proving ground for theories of overparameterized models \cite{mei20generalization, belkin20two, bartlett20benign}. Recently, a flurry of works has precisely analyzed the generalization risk of high-dimensional ridge regression under various assumptions, typically Gaussian data in the asymptotic limit \cite{hastie21surprises, canatar21spectral, wu20optimal, jacot20kernel, rosset20fixed, loureiro21learning, richards21asymptotics, mel21arbitrary, simon21neural}. Our analysis, like that of \citet{hastie21surprises}, is based on a local random matrix law \cite{knowles17anisotropic} and produces non-asymptotic bounds that hold for general distributions.

Other, more classical, approaches to generalization include Rademacher complexity (e.g., \citet{bartlett01rademacher, bartlett02localized}), norm-based measures (e.g., \citet{bartlett96size, neyshabur15norm}), PAC-Bayes approaches for stochastic models (e.g., \citet{mcallester99pac, dziugaite17computing}), and spectral notions of effective dimension (e.g., \citet{zhang05learning, dobriban18high, bartlett20benign}). %
While some of these measures have been studied in large-scale experiments \cite{jiang20fantastic, dziugaite20search}, 
our evaluations focus on a different perspective: we study whether they capture the basic empirical phenomena of overparameterized models, such as scaling laws and the effect of pretraining.

To estimate generalization risk, we revisit the GCV estimator of \citet{craven78smoothing}. GCV was initially studied as an estimate of error over a \emph{fixed} sample \cite{golub79generalized, li86asymptotic, cao06oracle}. Such analyses, however, do not account for the randomness of the sample and thus fail to capture high-dimensional settings with disparate train and test risks. Recently, the high-dimensional setting has received more attention: \citet{jacot20kernel} analyze GCV for random, Gaussian covariates in the ``classical'' regime where train risk approximates test risk.\footnote{See \Cref{sec:classical} for a detailed discussion of the classical vs.\ non-classical regimes of high-dimensional ridge regression.} And,
\citet{hastie21surprises}, \citet{adlam20ntk}, and \citet{patil21uniform} asymptotically analyze GCV when the ratio $\p/\n$ between the dimension $\p$ and the sample size $\n$ converges to a fixed limit. In contrast, to study scaling in $\n$ (for fixed $\p$), we prove \emph{non-asymptotic} bounds on the convergence of GCV that hold: (i) beyond the classical regime, (ii) for a wide range of $\n/\p$, and (iii) for general covariance structures.
Experimentally, GCV has previously been studied by \citet{efron86biased} and \citet{rosset20fixed} in numerical simulations and by \citet{jacot20kernel} for shift-invariant kernels on the MNIST and Higgs datasets. Our experiments take these investigations to a significantly larger scale and focus on more realistic neural representations.

One phenomenon we study---neural scaling laws---was first observed by \citet{kaplan20scaling}. Since this observation, \citet{bahri21explaining} derive a spectrum-only formula for kernel regression scaling, and \citet{cui21generalization} derive precise rates for ridge regression scaling in random matrix regimes. In comparison, we show that alignment (and not just the eigenvalues) is essential for understanding scaling in practice, and we also use random matrix theory to give a more principled way to estimate the decay rates of the population eigenvalues and alignment coefficients.

Finally, the neural representations we study are motivated by the \emph{neural tangent kernel} (NTK) \cite{jacot18ntk}. There has been a rich line of theoretical work studying ultra-wide neural networks and their relationship to NTKs (e.g., \citet{arora19exact, lee19wide, yang19scaling}).
In contrast, we work with NTKs extracted from realistic, finite-width networks---including pretrained networks---and use them as a testbed for exploring measures of generalization.

\section{Preliminaries}\label{sec:preliminaries}

\subsection{High-dimensional Ridge Regression}\label{sec:model}

We study a simple model of linear regression, in which we predict labels $y\in\R$ from data points $x\in\R^\p$. Each $x$ is drawn from a distribution $\Dist$ with \emph{unknown} second moment $\Sigma\coloneqq\E_{x\sim\Dist}\bigl[xx^\transp\bigr]$, and its label $y$ is given by $y = \beta^\transp x$ for an \emph{unknown} ground truth function\footnote{We assume---for simplicity's sake---that the linear model is well-specified and that labels are noiseless. This holds without loss of generality in high dimensions: both noise and misspecification can be embedded into the model by adding an additional ``noise'' dimension. See \Cref{sec:challenges/noise,,sec:noise} for details.} $\beta\in\R^\p$. Let $\Sigma$ have eigendecomposition \smash{$\sum_{i=1}^\p \lambda_i v_iv_i^\transp$}, with $\lambda_1\ge\cdots\ge\lambda_\p$.

To estimate $\beta$, we assume we have a dataset $\{(x_i,y_i)\}_{i=1}^\n$ of $\n$ independent samples, with $x_i\sim\Dist$ and $y_i = \beta^\transp x_i$ for all $i$. For notational convenience, we write this dataset as $(X, y)$, where $X\in\R^{\n\times\p}$ has $i$-th row $x_i$ and \smash{$y\in\R^\n$} has $i$-th entry $y_i$. Let $\Sigmahat\coloneqq \frac 1\n X^\transp X$ be the empirical second moment matrix, with eigendecomposition \smash{$\sum_{i=1}^\p \hat\lambda_i \hat v_i\hat v_i^\transp$} such that \smash{$\hat\lambda_1\ge\cdots\ge\hat\lambda_\n$}.

Given training data $(X, y)$ and an estimator $\hat\beta = \hat\beta(X,y)$, our goal in this paper is to predict its generalization risk $\Risk$, defined as
$\Risk(\hat\beta) \coloneqq \E_{x\sim\Dist}\bigl[(\beta^\transp x - \hat\beta^\transp x)^2\bigr]$,
\emph{without access to} an independently drawn test dataset.

We focus on the ridge regression estimators $\hat\beta_\lambda$ given by
\[ \hat\beta_\lambda\coloneqq\argmin_{\hat\beta} \frac 1\n \sum_{i=1}^\n \bigl(y_i - \hat\beta^\transp x_i\bigr)^2 + \lambda\|\hat\beta\|_2^2 \]
for $\lambda > 0$, and $\hat\beta_0\coloneqq\lim_{\lambda\to 0^+} \hat\beta_\lambda$.%

Recent theoretical advances \cite{hastie21surprises, canatar21spectral, wu20optimal, jacot20kernel, loureiro21learning, richards21asymptotics, mel21arbitrary, simon21neural} have characterized \smash{$\Risk(\hat\beta_\lambda)$}
under a variety of random matrix assumptions. These works all show that $\Risk(\hat\beta_\lambda)$ can be approximated by the \emph{omniscient risk estimate}
\begin{equation}\label{eq:population}
\OmniRisk^\lambda\coloneqq \frac{\partial\kappa}{\partial\lambda}\cdot\kappa^2\sum_{i=1}^\p \biggl(\frac{\lambda_i}{(\kappa + \lambda_i)^2} \bigl(\beta^\transp v_i\bigr)^2 \biggr),
\end{equation}
where $\kappa = \kappa(\lambda, \n)$ is an \emph{effective regularization} term (see \eqref{eq:kappa} for a definition).
We call this expression the omniscient risk estimate because it depends on the \emph{unknown} second moment matrix $\Sigma$ and the \emph{unknown} ground truth $\beta$. Our analysis will approximate \eqref{eq:population} using only the empirical second moment matrix $\Sigmahat$ and the observations $y$, while also yielding a concrete relationship between train and test risk.

\subsection{Methods for Predicting Generalization Risk}

We discuss several baseline approaches for predicting generalization risk and then describe the GCV estimator.

The simplest method uses empirical risk (i.e., training error)
\smash{$\TrainRisk(\hat\beta)\coloneqq \frac 1\n \sum_{i=1}^\n (y_i - \hat\beta^\transp x_i)^2$} as a proxy. This is the foundation of uniform convergence approaches in learning theory (e.g., VC-dimension and Rademacher complexity). However, training error is a poor predictor of test error in the overparameterized regime, as seen in \Cref{fig:train-test}.

Ridge regression admits more specific analyses. A typical approach takes a bias-variance decomposition over label noise and bounds each term with norm- or spectrum-based quantities. For instance, the recent textbook of \citet{bach21learning} shows, based on matrix concentration inequalities, that
\begin{equation}\label{eq:naive}
\Risk(\hat\beta_\lambda)\le \underbrace{16\lambda\|\beta\|_2^2}_{\text{norm-based}} + \underbrace{16\frac{\sigma^2}{\n}\Tr(\Sigma(\Sigma + \lambda I)^{-1})}_{\text{spectrum-based}}
\end{equation}
holds when $\n\lambda$ is large enough, where $\sigma^2$ upper bounds the variance of the label noise. Such norm- or spectrum-based terms are typical of many theoretical analyses.

\paragraph{The GCV estimator.}
Cross-validation is a third approach to predicting generalization risk. However, cross-validation is not guaranteed to work in high dimensions and can fail in practice \cite{bates21crossvalidation}.
\citet{craven78smoothing} thus introduce the \emph{generalized cross-validation} (GCV) estimator\!\!
\begin{equation}\label{eq:gcv}
\GCV_\lambda\coloneqq{\left(\frac 1\n\sum_{i=1}^\n \frac \lambda{\lambda + \hat\lambda_i}\right)^{-2}}\TrainRisk(\hat\beta_\lambda),
\end{equation}
which they and \citet{golub79generalized} heuristically derive by modifying cross-validation to be rotationally invariant.\footnote{As we did for $\hat\beta_0$, we define $\GCV_0\coloneqq\lim_{\lambda\to 0^+} \GCV_\lambda$.} We will study this estimator empirically and show %
its form can be understood as a consequence of random matrix theory.

\subsection{Experimental Setup: Empirical NTKs}\label{sec:setup}

To benchmark our risk estimates in realistic settings, we use feature representations derived from large-scale, possibly pretrained neural networks.  %
Specifically, we use the \emph{empirical} neural tangent kernel (\NTK{}). Given a neural network $f(\,\cdot\,; \theta)$ with $\p$ parameters $(\theta\in\R^\p)$ and $\c$ output logits ($f(x; \theta)\in\R^\c$), the \NTK{} representation of a data point $x$ at $\theta_0$
is the Jacobian $\phi_{\NTK}(x) \eqdef \frac{\partial f}{\partial \theta}(x; \theta_0)\in\R^{\p\times\c}$.

\paragraph{Models and datasets.}

We consider \NTK{} representations of residual networks on several computer vision datasets, both at random initialization and after pretraining. %
Specifically, we consider  ResNet-\{18, 34, 50, 101\} applied to the %
CIFAR-\{10, 100\} \cite{krizhevsky09learning}, Fashion-MNIST \cite{xiao2017fashion}, Flowers-102 \cite{nilsback08flowers}, and Food-101 \cite{bossard14food} datasets. All random initialization was done following \citet{he15delving}; pretrained networks (obtained from PyTorch) were pretrained on ImageNet and had randomly re-initialized output layers.

To verify that pretrained \NTK{} representations achieve competitive generalization performance, we compare kernel regression on pretrained \NTK{}s to regression on the last layer activations and to finetuning the full network with SGD (see \Cref{table:ntks}). We find that pretrained \NTK{}s achieve accuracy much closer to that of finetuning than that of regression on the last layer.
The \NTK{}s we consider also have stronger empirical performance than the best-known infinite-width NTKs~\citep{arora19exact, li19enhanced, lee20finite}.

\begin{table}
    \centering
    \small
    \begin{tabular}{crrr} \toprule
        Configuration           & {\color{darkgray}\em\!Finetuning\!} & \NTK{}   & Last layer \\ \midrule
        CIFAR-10 / ResNet-18    & {\color{darkgray} $4.3\%$ }      & $6.7\%$  & $14.0\%$   \\
        CIFAR-100 / ResNet-34   & {\color{darkgray} $15.9\%$}      & $19.0\%$ & $33.9\%$   \\
    \!\!Flowers-102 / ResNet-50\!\! & {\color{darkgray} $5.6\%$}      & $7.0\%$  & $9.7\%$    \\
        Food-101 / ResNet-101   & {\color{darkgray} $15.3\%$}      & $21.3\%$ & $33.7\%$   \\ \bottomrule
    \end{tabular}
    \caption{Test classification error rates of finetuning with SGD, kernel regression on the \NTK{}, and linear regression on the last layer activations for various datasets and pretrained models.}
    \label{table:ntks}
\end{table}

\paragraph{Computational considerations.}
For computations with \NTK{} representations, we apply the kernel trick and instead work with the \emph{\NTK{} matrix} $\smash{\bigl[\phi_{\NTK{}}(x_i)^\transp \phi_{\NTK{}}(x_j)\bigr]_{i,j=1}^{\smash{\n}}}
\in \R^{\smash{(\n\times\c)\times (\n\times\c)}}$. To further speed up computation, we take advantage of the fact that, since our models have randomly initialized output layers, the expected \NTK{} can be written as $I_\c\tensor K_0$, for some kernel $K_0\in\R^{\n\times\n}$ and the $\c\times\c$ identity matrix $I_\c$ \cite{lee20finite}. The full $\n\c\times\n\c$ \NTK{} can thus be approximated by $I_\c\tensor K$, where $K$ is the \NTK{} with respect to a single randomly initialized output logit. Notice that kernel regression with respect to $I_\c\tensor K$ decomposes into $\c$ independent kernel regression problems, each with respect to $K$. To reduce compute, we apply this approximation in all of our experiments.

\paragraph{Baseline approaches.}
To illustrate some of the challenges inherent to this setting, we compare GCV against two norm- and spectrum-based expressions, similar to those of \eqref{eq:naive}. We describe these baselines in detail in \Cref{sec:empirics}.

\section{Challenges of High-Dimensional Regression from the Real World}\label{sec:challenges}

We make several empirical observations that challenge most theoretical analyses: 
\begin{enumerate*}[label=(\roman*)]
    \item The ground truth $\beta$ has effectively infinite norm, leading  $\norm{\hat\beta_\lambda}_2$ to grow quickly with $\n$ and making norm-based bounds vacuous.
    \item When $\n\ll\p$, the empirical second moment $\Sigmahat$ is not close to its population mean $\Sigma$.
    \item Many analyses estimate risk in terms of noise in the training set, but noise and signal are interchangeable in high dimensions, making such estimates break down.
\end{enumerate*}

\subsection{Norm-based Bounds Are Vacuous}\label{sec:norm}

\begin{figure}
    \centering
    \includegraphics[width=\columnwidth]{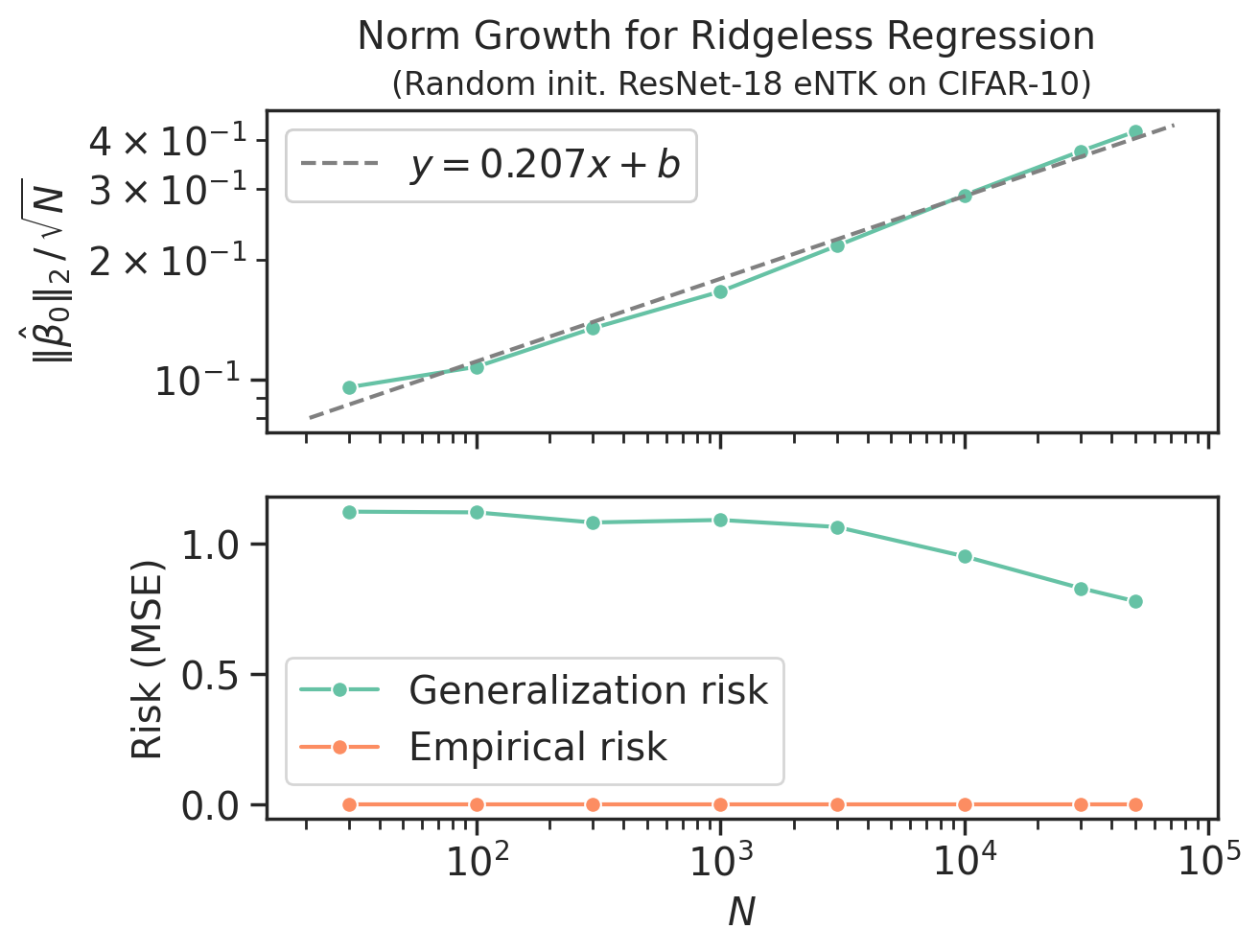}
    \caption{The top graph plots the growth of \smash{$\frac{\norm{\hat\beta_0}_2}{\sqrt{\n}}$} in $\n$ for linear regression on the \NTK{} of a randomly initialized ResNet-18 on Fashion-MNIST. The bottom graph shows that the generalization risk of $\hat\beta_0$ decreases in $\n$ under the same setup, despite the growth in $\norm{\hat\beta_0}_2$, while the empirical risk of $\hat\beta_0$ remains $0$ throughout.}
    \label{fig:norm-growth}
\end{figure}

The norms $\norm{\beta}_2$ and $\norm{\hat\beta}_2$ are often used to measure function complexity in generalization bounds. Here, we examine how these norms behave for kernel regression in practice.

Many theoretical analyses, including Rademacher complexity \cite{bartlett01rademacher}, give risk bounds for an estimator $\hat\beta$ in terms of the quantity $\norm{\hat\beta}_2 / \sqrt{\n}$ (or a monotonic function thereof). However, \smash{$\norm{\hat\beta}_2/\sqrt{\n}$} can \emph{increase} as $\n$ increases (and the generalization risk decreases): \Cref{fig:norm-growth} depicts this for $\hat\beta_0$ computed on the \NTK{} of a randomly initialized ResNet-18 on Fashion-MNIST. Moreover, this finding is consistent across models and datasets (see \Cref{sec:additional}).
Consequently, norm-based bounds give the wrong qualitative prediction for scaling. This echoes the findings of \citet{nagarajan19uniform} and shows norm-based bounds can fail even for practical linear models.

Other analyses rely on the norm $\norm{\beta}_2$ of the ground truth, either directly in the risk estimate (e.g., \citet{dobriban18high}) or as a term in the error bound (e.g., \citet{hastie21surprises}). However, \Cref{fig:norm-growth} suggests that $\beta$ has large norm: for a clean dataset like CIFAR-10, we can assume that the labels are close to noiseless.\footnote{Empirical studies on CIFAR-10 find mislabeled points at a less than 1\% prevalence \cite{northcutt21pervasive}, and the best models achieve over 99\% test accuracy \cite{dosovitskiy21image}.} In this case, \smash{$\hat\beta_0$} is the projection of $\beta$ onto $X$, from which it follows that $\norm{\hat\beta_0}_2\le\norm{\beta}_2$. Supposing that the superlinear growth of $\smash{\norm{\hat\beta_0}_2}$ in $\n$ continues, the norm $\norm{\beta}_2$ must be large. It may thus make the most sense to think of $\beta$ as having effectively infinite norm. However, this has the effect of making bounds that rely on $\norm{\beta}_2$ vacuous.\footnote{\citet{belkin18understand} suggest the perceptron analysis \cite{novikoff62convergence} as a way to understand generalization in the noiseless setting; however, a large $\norm{\beta}_2$ makes this approach ineffective as well.}

\subsection{$\Sigmahat$ Converges Slowly to $\Sigma$}

\begin{figure}
    \centering
    \includegraphics[width=\columnwidth]{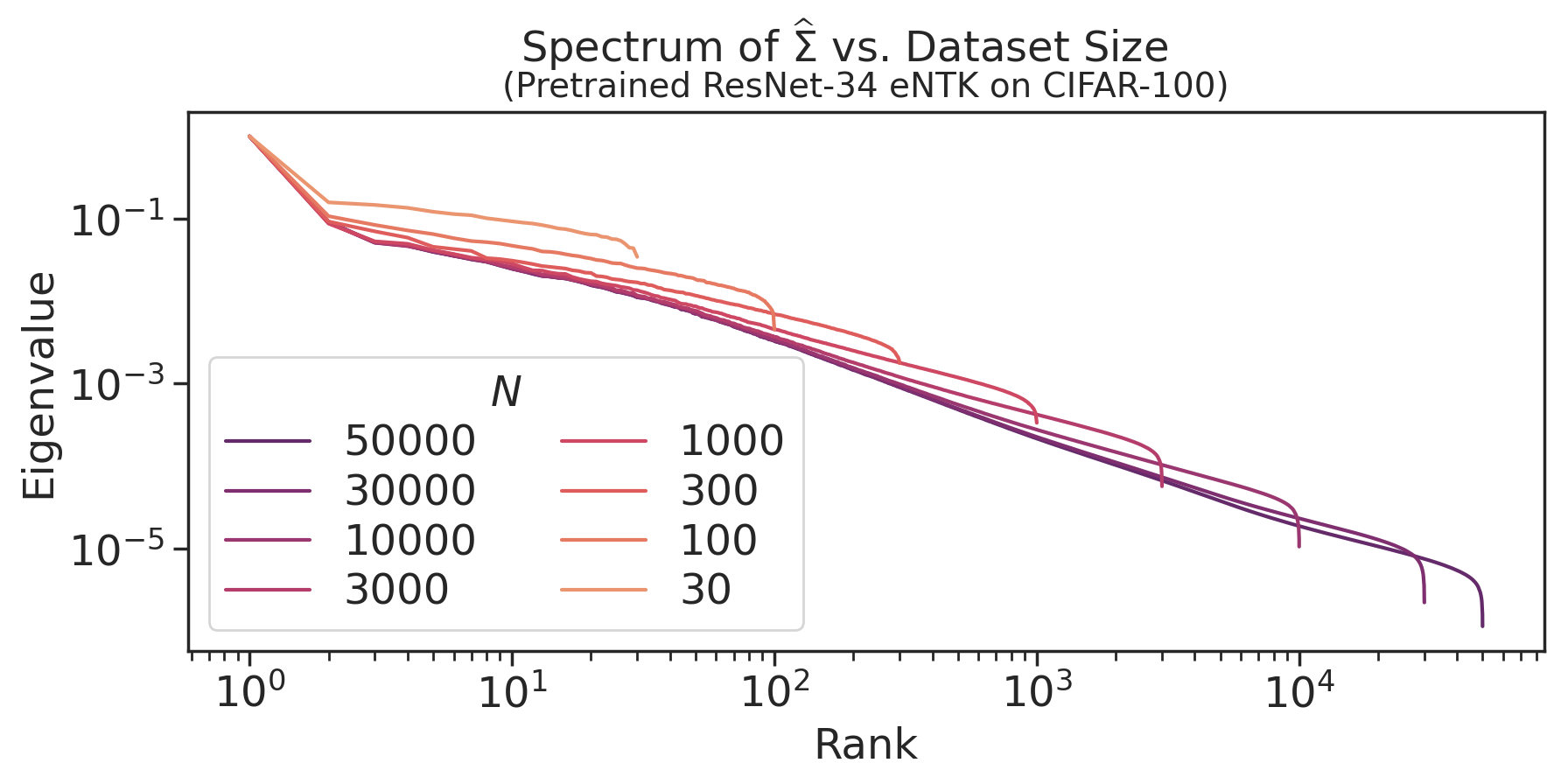}
    \caption{Each line plots the pairs $(i, \hat\lambda_i)$ for a \smash{$\Sigmahat$} computed from $\n$ pretrained ResNet-34 \NTK{} representations of CIFAR-100 images. The $\Sigmahat$ eigenvalues converge slowly, and it is not obvious---particularly from considering only a single $\n$---what the scaling trend is.}
    \label{fig:spectrum}
\end{figure}

The high dimensionality of our setting ($\p\gg\n$) implies the empirical second moment matrix $\Sigmahat$ is slow to converge to its expectation $\Sigma$. \Cref{fig:spectrum} depicts this slow convergence for the spectrum of $\Sigmahat$ derived from a pretrained ResNet-34 on CIFAR-100. Similar conclusions hold for other models and datasets---see \Cref{sec:additional}. We now discuss the consequences.

First, the slow convergence of $\Sigmahat$ makes it hard to empirically estimate quantities that depend on the spectrum of $\Sigma$, such as $\OmniRisk^\lambda$; \citet{loureiro21learning} and \citet{simon21blog} both note this challenge. Moreover, as shown in \Cref{fig:spectrum}, trends for eigenvalue decay extrapolated from $\Sigmahat$ may not hold for $\Sigma$. This can be problematic for estimating scaling law rates \cite{bahri21explaining, cui21generalization}.

The slow convergence also hurts analyses that rely on the approximation $\Sigmahat\approx\Sigma$, e.g.~those of of \citet{hsu14ridge} and \citet{bach21learning} for ridge regression: the assumptions needed to derive $\smash{\Sigmahat}\approx\Sigma$ would also imply $\TrainRisk(\hat\beta)\approx\smash{\Risk(\hat\beta)}$ (see \Cref{sec:classical}), which we know does not hold (see \Cref{fig:train-test}). Therefore, we do not have $\smash{\Sigmahat}\approx\Sigma$ in the manner needed for such analyses to apply.

\subsection{Kernel Regression Is Effectively Noiseless} \label{sec:challenges/noise}

Many works (e.g., \citet{belkin18understand, bartlett20benign}) have sought to explain the finding of \citet{zhang17understanding} that large models can generalize despite being able to interpolate random labels, and thus focus on analyzing overfitting with label noise. However, high-dimensional phenomena occur even on nearly noiseless datasets like CIFAR-10. We now discuss how label noise is unnecessary in a stronger sense: in high dimensions, any noisy instance of linear regression is \emph{indistinguishable} from a noiseless instance with a complex ground truth.

To show this, we embed linear regression with noisy labels into the noiseless model of \Cref{sec:model} by constructing for each noisy instance a sequence of noiseless instances that approximate it. We sketch the construction here, and present it in full in \Cref{sec:noise}.
Suppose that $y = \beta^\transp x + \xi$, where $\xi$ represents mean-zero noise. We rewrite $y$ as $y = \beta'^\transp x'$, where \smash{$x' = \begin{bsmallmatrix}x \\ t^{1/2}\xi\end{bsmallmatrix}$}, \smash{$\beta' = \begin{bsmallmatrix}\smash{\beta}\vphantom{t} \\ t^{-1/2}\end{bsmallmatrix}$}, and $t > 0$. As $t\to 0$, ridge regression on the ``augmented'' covariates $x'$ converges \emph{uniformly} over all $\lambda\ge 0$ to ridge regression on the original covariates $x$. The original, noisy instance is thus the limit of a sequence of noiseless instances.\footnote{In \Cref{sec:noise}, we show that the same reduction applies to misspecified problems. As an application, we additionally show how terms for variance from previous works can be read off of \eqref{eq:population}.}

This discussion suggests noiseless regression (allowing for $\beta$ of large norm) can capture our empirical setting, whereas analyses that require label noise may not directly apply.

\begin{figure*}
    \centering
    \includegraphics[width=2.05\columnwidth]{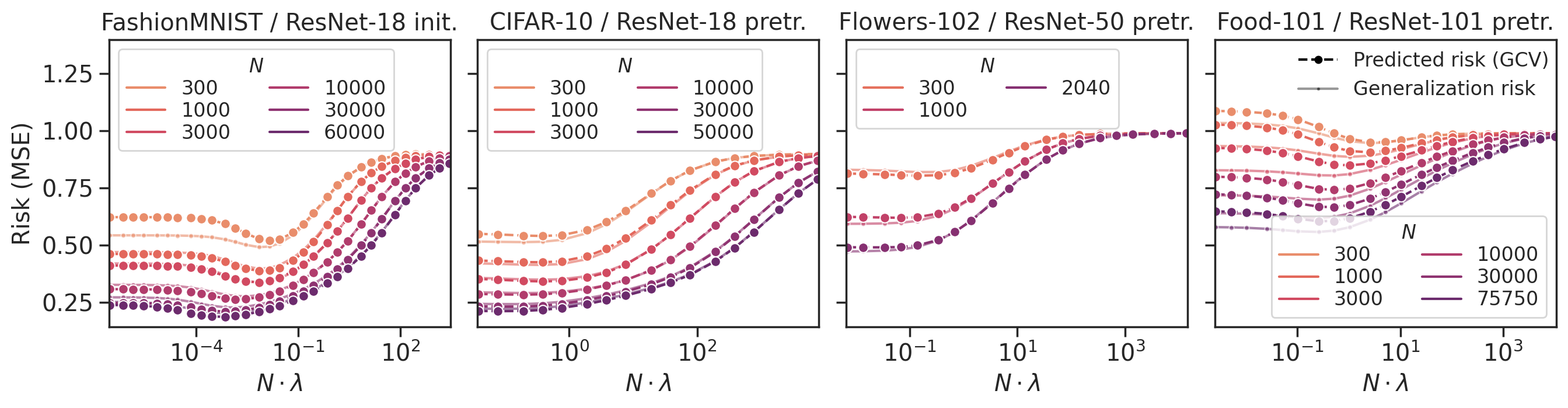}
    \caption{Generalization risk vs.\ the GCV prediction, for various datasets and networks, across sample sizes $N$ and regularization levels $\lambda$.}
    \label{fig:curves}
\end{figure*}

\section{Empirically Evaluating GCV}\label{sec:empirics}

Having demonstrated some of the challenges that our empirical setting poses for typical theories, we now empirically show that the GCV estimator,
\[ 
\GCV_\lambda = {\left(\frac 1\n\sum_{i=1}^\n \frac \lambda{\lambda + \hat\lambda_i}\right)^{-2}}\TrainRisk(\hat\beta_\lambda),
\]
accurately predicts the generalization risk. In \Cref{sec:isolation}, we first study the GCV estimator in isolation, following the setup in \Cref{sec:setup}. We observe excellent agreement between the predicted and actual generalization risks across a wide range of dataset sizes $\n$ and regularization strengths $\lambda$. In \Cref{sec:comparison}, we then quantitatively compare the GCV estimator against both norm- and spectrum-based measures of generalization, and find that GCV both has better correlation with the actual generalization risk and better predicts the asymptotic scaling. %

\subsection{The Predictive Ability of GCV}\label{sec:isolation}

To evaluate the GCV estimator, we compute an \NTK{} for each model-dataset pair listed in \Cref{tab:main}. For each \NTK{}, we then compare the GCV estimate to the actual generalization risk over a wide range of dataset sizes $\n$ and regularization levels $\lambda$. Full details of the experimental setup are provided in \Cref{sec:details}. Furthermore, in \Cref{sec:additional}, we run the same experiment for ridge regression on last layer activations.

The results of this experiment are plotted in \Cref{fig:train-test,,fig:curves}.
All curves demonstrate significant agreement between the predicted and actual empirical risks, with over 90\% of all predictions having at most 0.09 error in both relative and absolute terms.
For most instances, the predictions of GCV are nearly perfect for large $\n\lambda$ and only diverge slightly for small $\n\lambda$.
Importantly, the predictions are accurate in two regimes: (i) when mean-squared error is minimized,
and (ii) beyond the ``classical'' regime (i.e., even when the train-test gap is large). Finally, observe that predictions for fixed $\n\lambda$ tend to improve as $\n$ increases, suggesting convergence in the large $\n$ limit.

\subsection{Comparison to Alternate Approaches}\label{sec:comparison}

\begin{table*}[t]
    \centering
    \begin{tabular}{ccccccccc} \toprule
    \multirow{2}{*}{Configuration} & \multicolumn{2}{c}{\color{darkgray}\em Ground truth} & \multicolumn{2}{c}{GCV} & \multicolumn{2}{c}{Spectrum-only} & \multicolumn{2}{c}{Norm-based} \\ \cmidrule(lr){2-3} \cmidrule(lr){4-5} \cmidrule(lr){6-7} \cmidrule(lr){8-9}
    & {\color{darkgray}$r$} & {\color{darkgray}$\alpha$} & $r$ & $\alpha$ & $r$ & $\alpha$ & $r$ & $\alpha$ \\ \midrule
        Fashion-MNIST / ResNet-18 init.   &  {\color{darkgray}$1.000$}  &  {\color{darkgray}$-0.166$}  &  $0.996$  &  $-0.192$  &  $0.080$  &  $-0.008$  &  $-0.584$  &  $0.121$  \\
        CIFAR-10 / ResNet-18 pretr.       &  {\color{darkgray}$1.000$}  &  {\color{darkgray}$-0.162$}  &  $0.999$  &  $-0.182$  &  $0.977$  &  $-0.134$  &  $-0.641$  &  $0.044$  \\
        CIFAR-100 / ResNet-34 pretr.      &  {\color{darkgray}$1.000$}  &  {\color{darkgray}$-0.124$}  &  $0.996$  &  $-0.124$  &  $0.846$  &  $-0.070$  &  $-0.507$  &  $0.166$  \\
        Flowers-102 / ResNet-50 pretr.    &  {\color{darkgray}$1.000$}  &  {\color{darkgray}    --- }  &  $0.999$  &      ---   &  $0.665$  &      ---   &  $-0.786$  &     ---   \\
        Food-101 / ResNet-101 pretr.      &  {\color{darkgray}$1.000$}  &  {\color{darkgray}$-0.099$}  &  $0.979$  &  $-0.085$  &  $0.718$  &  $-0.035$  &  $-0.483$  &  $0.188$  \\ \bottomrule
    \end{tabular}
    \caption{The $r$ columns display the correlations of each prediction to generalization risk, and the $\alpha$ columns display the estimated scaling exponents. We do not run the scaling experiment for Flowers-102 because it only consists of 2040 images.}
    \label{tab:main}
\end{table*}

We next use the same setup to compare GCV against two alternative measures, based on the norm of $\hat\beta$ and the spectrum of $\Sigmahat$, respectively.  %

As discussed in \Cref{sec:norm}, the norm-based approach gives bounds of the form $\norm{\hat\beta}_2/\sqrt{\n}$.
Thus, we consider the estimate \smash{$\NormRisk^\lambda\coloneqq\norm{\hat\beta_\lambda}_2 / \sqrt{\n}$} in our experiments. More general norm-based quantities have been proposed to bound the generalization risk of neural networks (see, e.g., \citet{jiang20fantastic}); however, when specialized to linear models, these bounds simply become increasing functions of $\norm{\hat\beta}_2/\sqrt{\n}$.

For our spectrum-only estimate, we use a precise estimate %
of generalization risk in terms of ``effective dimension'' quantities \cite{zhang05learning} when $\beta$ is drawn from an isotropic prior.
We consider, for $\hat\kappa\coloneqq\bigl(\tfrac1N \textstyle\sum_{i=1}^\n (\lambda + \hat\lambda_i)^{-1}\smash{\bigr)^{-1}}$, the family
\begin{align*}
\SpecRisk^{\alpha,\sigma,\lambda}\coloneqq \hat\kappa^2 \paren*{\alpha^2\sum_{i=1}^\n \frac{\hat\lambda_i}{(\lambda + \hat\lambda_i)^2} + \frac{\sigma^2}{\n}\sum_{i=1}^\n\frac{1}{(\lambda + \hat\lambda_i)^2}}
\end{align*}
of estimates
derived from the main theorem of \citet{dobriban18high}.\footnote{See \Cref{sec:isotropic} for a derivation of this estimator.}
We fit $\alpha^2$ and $\sigma^2$ so that the predictions best match the observed generalization risks, obtaining an upper bound on the performance of this method over all $\alpha$ and $\sigma$.
This family of estimators lets us explore whether %
naturally-occurring data can be summarized by the two parameters of ``signal strength'' $\alpha$ and ``noise level'' $\sigma$.

To evaluate the ability of each predictor to model generalization, we consider two benchmarks. First, we measure the {correlation} between the predictions and the generalization risk for each dataset on the sets of $(\n, \lambda)$ pairs shown in \Cref{fig:train-test,,fig:curves}. Correlation lets us equitably compare un-scaled predictors, such as $\NormRisk$, to more precise estimates, such as $\GCV$ and $\SpecRisk$. Second, we test how well these estimators predict the scaling of optimally tuned ridge regression. For this, we find an optimal ridge parameter $\lambda^*_\n$ for each $\n$ and then estimate the power law rate (given by $\n^{-\alpha}$ for some $\alpha > 0$) of predicted generalization risk with respect to the sample size $\n$. (Applied to the ground truth, this would yield the scaling rate of the model.) %
Full details are provided in \Cref{sec:details}.

The results of these experiments are displayed in \Cref{tab:main}. Plots of the spectrum- and norm-based predictions are also presented in \Cref{sec:additional}. We find, perhaps unsurprisingly in light of \Cref{sec:norm}, that the norm-based measure has the \emph{wrong sign} when predicting generalization, both in terms of correlation and in terms of scaling.\footnote{This cannot be explained by excess regularization reducing the norm while also making performance worse: \Cref{fig:norm-growth} shows that the trend points the wrong way even when $\lambda=0$.} The spectrum-only approach also struggles to accurately predict generalization risk: it does not predict any scaling on Fashion-MNIST and achieves much lower correlations across the board. Finally, we observe that GCV correlates well with the actual generalization risks and accurately predicts scaling behavior on all datasets.

\section{A Random Matrix Perspective on GCV}\label{sec:theory}

We next justify the impressive empirical performance of GCV with a theoretical analysis. We prove a \emph{non-asymptotic} bound on the absolute error $\abs[\big]{\GCV_\lambda - \Risk(\hat\beta_\lambda)}$ of GCV under a random matrix hypothesis.

Our analysis of the GCV estimator has the following features: (i) It holds even for $\beta$ with large norm, requiring only a bound on $\E_{x\sim\Dist}\big[(\beta^\transp x)^2\big] = \beta^\transp\Sigma\beta$.
This is important for our empirical setting because, while $\norm{\beta}_2$ may be large (as discussed in \Cref{sec:norm}), the fact that our labels are $1$-hot implies $(\beta^\transp x)^2\le 1$ always. (ii) It is the first, to our knowledge, non-asymptotic analysis of GCV that applies beyond the ``classical'' regime, holding even when the train-test gap is large. (iii) It makes \emph{no additional assumptions} beyond a generic random matrix hypothesis and thus makes clear the connection between the GCV estimator and random matrix effects.\footnote{This random matrix hypothesis is known to hold for commonly considered random matrix models \cite{knowles17anisotropic} and is believed to hold even more broadly.} In particular, we do not make further assumptions about independence, moments, or dimensional ratio.

To illustrate the main technical ideas, we outline our theoretical approach at a high level in the remainder of this section and defer our formal treatment to \Cref{sec:proofs}. %

\subsection{The Random Matrix Hypothesis}

We assume a local version of the Marchenko-Pastur law as our random matrix hypothesis.
To state this hypothesis, we %
first define $\kappa = \kappa(\lambda, \n)$ as %
the (unique) positive solution to
\begin{equation}\label{eq:kappa}
 1 = \frac{\lambda}{\kappa} + \frac 1\n\sum_{i=1}^\p \frac{\lambda_i}{\kappa + \lambda_i},
\end{equation}
with $\kappa(0, \n)\coloneqq \lim_{\lambda\to 0^+} \kappa(\lambda, \n)$. We call $\kappa$ the \emph{effective regularization}, as it captures the combined effect of the explicit regularization $\lambda$ and the ``implicit regularization'' \cite{neyshabur2017implicit, jacot2020implicit} of ridge regression.
In terms of $\kappa$, the Marchenko-Pastur law can be roughly thought of as the statement
$\lambda(\lambda I + \smash{\Sigmahat})^{-1}\approx \kappa(\kappa I + \Sigma)^{-1}$.
We assume this approximation holds in the following sense:

\begin{hypothesis}[Marchenko-Pastur law over $\R_{>0}$, informal]\label{hypothesis:local-mp}
The \emph{local Marchenko-Pastur law} holds over $S\subseteq\R_{> 0}$ if,
for every deterministic $v\in\R^\p$ such that $v^\transp\Sigma v\le 1$, the following hold uniformly over all $\lambda\in S$:
\begin{align}
\frac{1}{\n}\sum_{i=1}^{\n}&\frac{1}{\hat\lambda_i + \lambda} \approx \frac{1}{\kappa} \label{eq:local-mp-A} \\
v^\transp \lambda \paren[\big]{\lambda I + \Sigmahat}^{-1}& v \approx v^\transp \kappa\paren[\big]{\kappa I + \Sigma}^{-1} v. \label{eq:local-mp-B}
\end{align}
\end{hypothesis}

\Cref{hypothesis:local-mp} is known to hold when $x$ is a linear function of independent (but not necessarily i.i.d.) random variables \cite{knowles17anisotropic}, which includes Gaussian covariates as a special case.
\Cref{hypothesis:local-mp} is expected, in fact, to hold in even greater generality, as an instance of the universality phenomenon for random matrices. %

While one cannot verify \Cref{hypothesis:local-mp} directly, since it depends on the unknown quantities $\Dist$ and $\beta$, we present evidence for its empirical validity in \Cref{sec:verifying-mp}. Specifically, we verify that \eqref{eq:local-mp-A} and \eqref{eq:local-mp-B} are consistent with each other in our empirical setting, by checking the relationships that they predict between empirically measurable quantities.

\subsection{The GCV Theorem}

We show the following error bound for the GCV estimator, which states that $\GCV_\lambda$ accurately predicts generalization risk under \Cref{hypothesis:local-mp} over a wide range of $\n$ and $\lambda$. Our bounds are stated under the normalizations $\E\brack[\big]{y^2}\le 1$ and $\E\brack[\big]{\norm{x}_2^2}\le 1$.
\begin{theorem}[Informal]\label{theorem:gcv}
Suppose \Cref{hypothesis:local-mp} holds over $S = (\frac 12\lambda, \frac 32\lambda)$. Then, for any $\epsilon > 0$,
\[ \abs*{\GCV_\lambda - \Risk(\hat\beta_\lambda)}\lesssim \frac{1}{\n^{\frac 12 - o(1)}}\cdot\frac{1}{\lambda}. \]
\end{theorem}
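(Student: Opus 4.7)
The plan is to sandwich both $\GCV_\lambda$ and $\Risk(\hat\beta_\lambda)$ against the omniscient estimate $\OmniRisk^\lambda$ from \eqref{eq:population} and then combine via the triangle inequality. Writing $R(\lambda) \coloneqq (\Sigmahat + \lambda I)^{-1}$, direct algebra gives $\TrainRisk(\hat\beta_\lambda) = \lambda^2\beta^\transp R(\lambda)\Sigmahat R(\lambda)\beta$ and $\Risk(\hat\beta_\lambda) = \lambda^2\beta^\transp R(\lambda)\Sigma R(\lambda)\beta$, while the GCV correction factor equals $(\lambda/\bar G(\lambda))^{2}$ for $\bar G(\lambda) \coloneqq \tfrac{1}{\n}\sum_{i=1}^\n \lambda/(\lambda + \hat\lambda_i)$.

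For the bound $|\GCV_\lambda - \OmniRisk^\lambda|$: the scalar law \eqref{eq:local-mp-A} yields $\bar G(\lambda) \approx \lambda/\kappa$, hence $(\lambda/\bar G(\lambda))^{2} \approx \kappa^2$. Applying \eqref{eq:local-mp-B} at $v = \beta$---which satisfies $v^\transp\Sigma v = \E[y^2] \le 1$ under the paper's normalization---gives $\beta^\transp\lambda R(\lambda)\beta \approx \beta^\transp\kappa(\kappa I + \Sigma)^{-1}\beta$ uniformly over $\lambda \in S = (\lambda/2, 3\lambda/2)$. Using this uniformity, a finite difference in $\lambda$ within $S$ approximates the derivative $\partial/\partial\lambda$ of both sides: the LHS derivative equals $\beta^\transp\Sigmahat R(\lambda)^2\beta = \TrainRisk(\hat\beta_\lambda)/\lambda^2$ (via $\partial_\lambda[\lambda R] = R\Sigmahat R$), and the RHS derivative (by the chain rule with $\kappa=\kappa(\lambda)$, using $\partial_\kappa[\kappa(\kappa I + \Sigma)^{-1}] = \Sigma(\kappa I + \Sigma)^{-2}$) equals $\frac{\partial\kappa}{\partial\lambda}\beta^\transp\Sigma(\kappa I + \Sigma)^{-2}\beta = \OmniRisk^\lambda/\kappa^2$. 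Multiplying by the GCV correction factor gives $\GCV_\lambda \approx \OmniRisk^\lambda$.

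The main obstacle is showing $\Risk(\hat\beta_\lambda) \approx \OmniRisk^\lambda$: the risk is a \emph{two-resolvent} bilinear form that \Cref{hypothesis:local-mp} does not directly control. Indeed, naively substituting $R(\lambda) \approx (\kappa/\lambda)(\kappa I + \Sigma)^{-1}$ into each factor recovers the correct functional form $\kappa^2\beta^\transp\Sigma(\kappa I + \Sigma)^{-2}\beta$ but misses the critical $\frac{\partial\kappa}{\partial\lambda}$ correction in $\OmniRisk^\lambda$. My plan is to derive a two-resolvent deterministic equivalent by (i) using polarization to upgrade \eqref{eq:local-mp-B} to a bilinear form $u^\transp\lambda R(\lambda) w \approx u^\transp\kappa(\kappa I + \Sigma)^{-1}w$, applied along a family of test directions $u,w$ drawn from the eigenbasis of $\Sigma$ (and rescaled so that the quadratic-form bound $v^\transp\Sigma v \le 1$ remains satisfied); and (ii) isolating the $\Sigma$-sandwich via the spectral decomposition $R(\lambda)\Sigma R(\lambda) = \sum_i \lambda_i\, (R(\lambda) v_i)(R(\lambda) v_i)^\transp$. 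The critical correction $\partial\kappa/\partial\lambda$ then arises by tracking how the effective regularization $\kappa$ fluctuates under the finite-difference differentiation in $\lambda$, analogously to the GCV calculation above---this is the main technical work, since one must carefully control error accumulation across all eigendirections of $\Sigma$. Assembling the pieces yields $\Risk(\hat\beta_\lambda) \approx \OmniRisk^\lambda$ with error $O(\n^{-1/2+o(1)})$, and the triangle inequality gives $|\GCV_\lambda - \Risk(\hat\beta_\lambda)| \lesssim \n^{-1/2+o(1)}/\lambda$, where the $1/\lambda$ factor reflects the propagation of MP errors through division by $\bar G(\lambda) \approx \lambda/\kappa$.
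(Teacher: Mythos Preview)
Your plan for $\GCV_\lambda\approx\OmniRisk^\lambda$ matches the paper's: both use \eqref{eq:local-mp-A} to control the correction factor and then differentiate \eqref{eq:local-mp-B} in $\lambda$ (the paper does this via a complex-analytic bound on the derivative of a bounded analytic function rather than a finite difference, but the idea is the same). That half is fine.

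The gap is in $\Risk(\hat\beta_\lambda)\approx\OmniRisk^\lambda$. Your spectral-decomposition-plus-polarization route cannot succeed, and you have already put your finger on why. Applying the bilinear law along each eigendirection gives, for each $i$,
\[
\beta^\transp\lambda R(\lambda)v_i \;=\; \frac{\kappa}{\kappa+\lambda_i}\,(\beta^\transp v_i) \;+\; O\bigl(\n^{-1/2+o(1)}\bigr),
\]
and summing $\lambda_i(\beta^\transp R v_i)^2$ therefore yields the leading term
$\kappa^2\sum_i \lambda_i(\kappa+\lambda_i)^{-2}(\beta^\transp v_i)^2$, which is $\OmniRisk^\lambda$ \emph{divided by} $\partial\kappa/\partial\lambda$. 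The missing factor is $O(1)$, not $O(\n^{-1/2})$: it cannot be produced by ``tracking fluctuations of $\kappa$ under differentiation in $\lambda$,'' because $R(\lambda)\Sigma R(\lambda)$ is \emph{not} the $\lambda$-derivative of any single-resolvent expression in $\Sigmahat$. The identity $\partial_\lambda(\lambda R)=R\Sigmahat R$ works for the training risk precisely because the sandwiched matrix is $\Sigmahat$; for the test risk the sandwich is $\Sigma$, and no analogous identity in $\lambda$ exists. So the per-direction errors would have to conspire to an $O(1)$ sum, which the hypothesis gives you no mechanism to show.

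The paper's fix is to differentiate in a \emph{different} parameter chosen so that the derivative brings down $\Sigma$ rather than $\Sigmahat$: set
\[
f(t)\;=\;\beta^\transp(I+t\Sigma)^{-1}\beta \;-\; \beta^\transp\lambda\bigl(\Sigmahat+\lambda(I+t\Sigma)\bigr)^{-1}\beta,
\]
so that $f'(0)=\lambda^2\beta^\transp R(\lambda)\Sigma R(\lambda)\beta=\Risk(\hat\beta_\lambda)$ exactly. The corresponding population quantity $g(t)$, built from the effective regularization $\tilde\kappa(t)$ of the tilted covariance $\Sigma(I+t\Sigma)^{-1}$, satisfies $g'(0)=\OmniRisk^\lambda$, with the factor $\partial\kappa/\partial\lambda$ appearing automatically from $\partial_t\tilde\kappa^{-1}|_{t=0}=\partial\kappa/\partial\lambda-1$. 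This requires the Marchenko--Pastur law not just for $x$ but for the family $(I+t\Sigma)^{-1/2}x$ (the paper's extended hypothesis), which is a genuinely stronger assumption than what you invoke.
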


To prove \Cref{theorem:gcv}, our first step is to show
\[ \GCV_\lambda\approx\OmniRisk^\lambda. \]
We then prove a sharpened version of the result of \citet{hastie21surprises} to show that, if $\E\brack[\big]{y^2} = \beta^\transp\Sigma\beta\le 1$, then
\[ \OmniRisk^\lambda\approx\Risk(\hat\beta_\lambda). \]
The first step can be stated as follows.
\begin{proposition}[Informal]\label{proposition:gcv}
Suppose \Cref{hypothesis:local-mp} holds over $S = (\frac 12\lambda, \frac 32\lambda)$. Then, for any $\epsilon > 0$,
\[ \abs*{\GCV_\lambda - \OmniRisk^\lambda}\lesssim\frac{1}{\n^{1/2-o(1)}}\cdot\paren*{1 + \frac{1}{\paren{\n\lambda}^{3/2}}}. \]
\end{proposition}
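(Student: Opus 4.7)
The plan is to reduce $\GCV_\lambda$ algebraically to a quantity matching $\OmniRisk^\lambda$ up to two applications of \Cref{hypothesis:local-mp}. Setting $K\coloneqq XX^\transp/\n$ (whose eigenvalues are $\hat\lambda_1,\ldots,\hat\lambda_\n$), I would first use the resolvent identity $y - X\hat\beta_\lambda = \lambda(K+\lambda I)^{-1}y$ together with $\frac{1}{\n}\sum_{i=1}^{\n}\frac{\lambda}{\lambda+\hat\lambda_i} = \frac{\lambda}{\n}\Tr(K+\lambda I)^{-1}$ to cancel the explicit factors of $\lambda$ from the definition of $\GCV$, yielding
\[ \GCV_\lambda = \frac{\n\,y^\transp(K+\lambda I)^{-2}y}{\bigl(\Tr(K+\lambda I)^{-1}\bigr)^2}. \]
Since $y = X\beta$, the push-through identity $X^\transp(K+\lambda I)^{-1} = (\Sigmahat+\lambda I)^{-1}X^\transp$ applied twice gives $y^\transp(K+\lambda I)^{-2}y = \n\,\beta^\transp\Sigmahat(\Sigmahat+\lambda I)^{-2}\beta$, while \eqref{eq:local-mp-A} controls the denominator as $\Tr(K+\lambda I)^{-1} = (\n/\kappa)(1+O(\epsilon))$ with $\epsilon\coloneqq\n^{-1/2+o(1)}$. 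Combining, $\GCV_\lambda = \kappa^2\,\beta^\transp\Sigmahat(\Sigmahat+\lambda I)^{-2}\beta\cdot(1+O(\epsilon))$.

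The next step is to show $\kappa^2\,\beta^\transp\Sigmahat(\Sigmahat+\lambda I)^{-2}\beta\approx\OmniRisk^\lambda$. The key observation is that both expressions are naturally $\lambda$-derivatives: on the empirical side, $\Sigmahat(\Sigmahat+\lambda I)^{-2} = \frac{d}{d\lambda}\bigl[\lambda(\Sigmahat+\lambda I)^{-1}\bigr]$, so $\beta^\transp\Sigmahat(\Sigmahat+\lambda I)^{-2}\beta = f'(\lambda)$ for $f(\mu)\coloneqq\beta^\transp\mu(\Sigmahat+\mu I)^{-1}\beta$; on the population side, differentiating $g(\mu)\coloneqq\beta^\transp\kappa(\mu)(\kappa(\mu) I+\Sigma)^{-1}\beta$ via the chain rule yields $g'(\lambda) = \frac{\partial\kappa}{\partial\lambda}\,\beta^\transp\Sigma(\kappa I+\Sigma)^{-2}\beta$, so that $\kappa^2 g'(\lambda) = \OmniRisk^\lambda$ on the nose. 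It therefore suffices to bound $|f'(\lambda) - g'(\lambda)|$. Hypothesis \eqref{eq:local-mp-B}, applied with $v = \beta$ and using $\beta^\transp\Sigma\beta\le 1$ (which follows from $\E[y^2]\le 1$), gives $|f(\mu) - g(\mu)|\lesssim\epsilon$ uniformly for $\mu\in(\lambda/2,3\lambda/2)$.

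To upgrade this pointwise approximation into a derivative bound, I would use a symmetric finite-difference scheme at step $h\in(0,\lambda/2)$, which decomposes $f'(\lambda) - g'(\lambda)$ into a term bounded by $\epsilon/h$ (from finite-differencing $f - g$) plus Taylor remainders of order $h^2\max\{\|f'''\|, \|g'''\|\}$ on $(\lambda/2, 3\lambda/2)$. Expanding $f$ in the eigenbasis of $\Sigmahat$ and $g$ via the chain rule, both third derivatives are bounded by polynomial factors in $1/\lambda$, using the constraint $\beta^\transp\Sigma\beta\le 1$ together with the a priori bounds $\kappa\ge\lambda$ and $\kappa'\le\kappa/\lambda$ derived from \eqref{eq:kappa}. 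Optimizing $h$ (or passing to a higher-order difference scheme) balances $\epsilon/h$ against the Taylor remainder and produces the announced rate $\n^{-1/2+o(1)}\bigl(1+1/(\n\lambda)^{3/2}\bigr)$; multiplying by $\kappa^2$, which is $O(1)$ under $\E\|x\|_2^2\le 1$, preserves it.

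The principal technical hurdle is precisely this derivative-transfer step: the random matrix hypothesis supplies only pointwise uniform control of $f-g$, whereas the target quantity is a derivative. The $(\n\lambda)^{-3/2}$ factor in the final rate reflects the tradeoff between the maximal admissible step (capped at $\lambda/2$ by the interval of validity in \Cref{hypothesis:local-mp}) and the polynomial blow-up in $1/\lambda$ of high derivatives of $g$ at small $\lambda$, where the implicit dependence $\kappa(\lambda)$ through \eqref{eq:kappa} becomes most delicate.
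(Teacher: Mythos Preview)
Your overall strategy matches the paper's: write $\GCV_\lambda = \hat\kappa^2 f'(\lambda)$ and $\OmniRisk^\lambda = \kappa^2 g'(\lambda)$ for $f(\mu) = \beta^\transp\mu(\Sigmahat+\mu I)^{-1}\beta$ and $g(\mu) = \beta^\transp\kappa(\mu)(\Sigma+\kappa(\mu)I)^{-1}\beta$, use \eqref{eq:local-mp-A} to replace $\hat\kappa$ by $\kappa$, and reduce the problem to transferring the pointwise bound $|f-g|\lesssim\delta$ to a bound on $|f'-g'|$. You are also right that this transfer is the crux.

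The gap is in the transfer step itself. A symmetric second-order finite difference gives $|h'(\lambda)|\lesssim \delta/s + s^2\|h'''\|_\infty$; since direct computation yields only $\|f'''\|_\infty,\|g'''\|_\infty\lesssim \lambda^{-4}$ (using $\beta^\transp\Sigma\beta\le 1$ and $\kappa\ge\lambda$), optimizing over $s\in(0,\lambda/2)$ gives $|h'|\lesssim \delta^{2/3}\lambda^{-4/3}$, i.e.\ an $\n^{-1/3+o(1)}$ rate rather than $\n^{-1/2+o(1)}$. Your parenthetical ``passing to a higher-order difference scheme'' is the right instinct, but it does not close the gap automatically: a $k$-point scheme has coefficient sum growing in $k$ and requires control of $h^{(k)}$, so one must simultaneously manage both. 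The paper handles this by analytically continuing $h$ to a complex disk of radius $\sim\lambda$, bounding $|h|\le M$ there (crudely, $M\lesssim\n^D/\lambda$ via Markov), using Cauchy estimates to bound \emph{all} Taylor coefficients, truncating at degree $k\approx\log(M/\delta)$, and applying the Markov brothers' inequality to the resulting polynomial. This yields $|h'|\lesssim(\delta/\lambda)\log^2(M/\delta)$, with the logarithmic loss absorbed into the $\n^{o(1)}$.

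A smaller point: your attribution of the $(\n\lambda)^{-3/2}$ factor to the step-size/Taylor-remainder tradeoff is not where it comes from in the paper. In the formal version of the hypothesis the approximation error carries a multiplier $\kappa^{-1}\sqrt{\partial\kappa/\partial\lambda}$; after the derivative transfer and multiplication by $\kappa^2$, the bound picks up factors of $(\kappa/\lambda)^{3/2}$, and the lemma $\kappa/\lambda\le 1+\Tr(\Sigma)/(\n\lambda)$ converts this to $(1+1/(\n\lambda))^{3/2}$ under the normalization $\Tr(\Sigma)\le 1$.
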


For intuition, we give a heuristic proof of \Cref{proposition:gcv}. As a simplification, we use the approximate equalities $\approx$ in \Cref{hypothesis:local-mp} instead of precise error bounds. We further assume that $\approx$ is preserved by differentiation. We justify these approximations in our full analysis in \Cref{sec:proofs}.

\begin{proof}[Heuristic proof.]
By the closed form of $\TrainRisk(\hat\beta_\lambda)$, 
\small
\[ 
  \GCV_\lambda =
  {\paren*{\frac 1\n\sum_{i=1}^\n \frac{1}{\lambda+\hat\lambda_i}}^{\!\!-2}}
  \!\beta^\transp(\Sigmahat + \lambda I)^{-1}\Sigmahat(\Sigmahat + \lambda I)^{-1}\beta.
\]
\normalsize
\Cref{hypothesis:local-mp} implies
\smash{$\paren[\big]{\smash{\frac 1\n\sum_{i=1}^\n} \paren{\lambda+\hat\lambda_i}^{-1}}^{-2} \approx \kappa^2$}
and
\begin{equation}\label{eq:differentiation}
  \frac{\partial}{\partial\lambda}\paren{\beta^\transp \lambda (\Sigmahat + \lambda I)^{-1} \beta}
  \approx \frac{\partial}{\partial\lambda}\paren*{\beta^\transp \kappa (\Sigma + \kappa I)^{-1} \beta},
\end{equation}
assuming we may differentiate through the $\approx$.
Hence,
\begin{align*}
  \beta^\transp(\Sigmahat + \lambda I)^{-1}&\Sigmahat(\Sigmahat + \lambda I)^{-1}\beta \\
  &= \frac{\partial}{\partial\lambda}\paren{\beta^\transp \lambda (\Sigmahat + \lambda I)^{-1} \beta} \\
  &\approx \frac{\partial}{\partial\lambda}\paren*{\beta^\transp \kappa (\Sigma + \kappa I)^{-1} \beta} \\
  &= \frac{\partial\kappa}{\partial\lambda}\cdot\beta^\transp(\Sigma + \kappa I)^{-1}\Sigma(\Sigma + \kappa I)^{-1}\beta.
\end{align*}
Substituting into the equation for $\GCV_\lambda$, we obtain
\begin{align*}
  \GCV_\lambda
  &\approx \kappa^2\paren*{\frac{\partial\kappa}{\partial\lambda}\cdot\beta^\transp(\Sigma + \kappa I)^{-1}\Sigma(\Sigma + \kappa I)^{-1}\beta} \\
  &= \frac{\partial\kappa}{\partial\lambda}\cdot\kappa^2\sum_{i=1}^\p \left(\frac{\lambda_i}{(\kappa + \lambda_i)^2} \bigl(\beta^\transp v_i\bigr)^2 \right) \\
  &= \OmniRisk^\lambda. \qedhere
\end{align*}
\end{proof}

\section{Pretraining and Scaling Laws through a Random Matrix Lens}\label{sec:applications}

Having shown that a random matrix approach can fruitfully model generalization risk both in theory and in practice, we apply this theory towards answering: what factors determine whether a neural representation scales well when applied to a downstream task? To answer this question, we revisit the omniscient risk estimate,
\[ \OmniRisk^\lambda = \frac{\partial\kappa}{\partial\lambda}\cdot\kappa^2\sum_{i=1}^\p \biggl(\frac{\lambda_i}{(\kappa + \lambda_i)^2} \bigl(\beta^\transp v_i\bigr)^2 \biggr), \]
a quantity that depends on the eigenvalues $\lambda_i$ and the \emph{alignment coefficients} $(\beta^\transp v_i)^2$ between the eigenvectors and $\beta$. %

In \Cref{sec:pretraining}, we use eigendecay and alignment to understand why pretrained representations generalize better than randomly initialized ones. We find, perhaps unintuitively, that pretrained representations have \emph{slower} eigenvalue decay (and thus \emph{higher} effective dimension), but nonetheless scale better due to better alignment between the eigenvectors and the ground truth. Thus, it is necessary to consider alignment in addition to eigenvalue decay to explain the effectiveness of pretraining.

Motivated by this, in \Cref{sec:scaling-laws}, we study scaling laws for the eigendecay and alignment coefficients \cite{caponnetto07optimal, cui21generalization}. We show how to estimate their power law exponents in terms of empirically observable quantities. Combining these yields an empirically accurate estimate of the power law exponent of generalization, suggesting that eigendecay and alignment are \emph{sufficient} statistics for predicting scaling. %

\subsection{Pretraining}\label{sec:pretraining}

\begin{figure}
    \centering
    \includegraphics[width=\columnwidth]{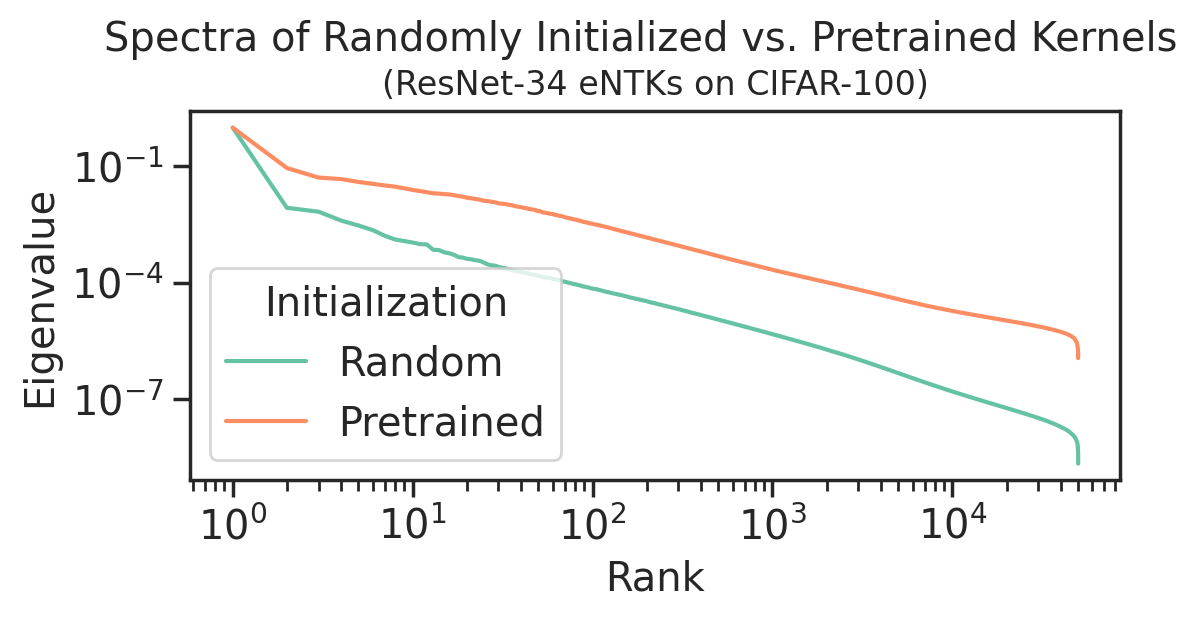}
    \caption{The pairs $(i, \hat\lambda_i)$ plotted for two ResNet-34 \NTK{}s: one at random initialization and one after pretraining. Note that the pretrained kernel has \emph{higher} effective dimension.}
    \label{fig:pretraining}
\end{figure}

A common intuition is that pretraining equips models with simple, ``low-dimensional'' representations of complex data.
Thus, one might expect that pretrained representations have lower effective dimension and that this is the cause of better generalization. \Cref{fig:pretraining}, however, shows the opposite to be true: on CIFAR-100, a pretrained ResNet-34 \NTK{} has \emph{slower} eigenvalue decay than a randomly initialized representation (and higher effective dimension). Moreover, this holds consistently across datasets and models, as shown in \Cref{sec:additional}. Thus, dimension alone cannot explain the benefit of pretraining.

The omniscient risk estimate $\OmniRisk^\lambda$ suggests a possible remedy.
While slower eigendecay will increase $\OmniRisk^\lambda$, the increase can be overcome if the alignment coefficients $(\beta^\transp v_i)^2$ decay faster. %
We will confirm this in \Cref{sec:scaling-laws} once we develop tools to estimate the decay rates of the eigenvalues and the alignment coefficients: across several models and datasets, pretrained representations exhibit slower eigendecay but better alignment (\Cref{tab:powerlaws}).
Our finding suggests that the role of pretraining is to make ``likely'' ground truth functions easily representable and in fact does not reduce data dimensionality.
In particular, the covariates cannot be considered in isolation from potential downstream tasks.

\subsection{Scaling Laws}\label{sec:scaling-laws}

\begin{table}
    \centering
    \resizebox{\columnwidth}{!}{%
    \begin{tabular}{ccccc}\toprule
        Configuration & $\hat\gamma$ & $\hat\delta$ & $\hat\alpha$ & $\alpha$ \\ \midrule
        {\footnotesize F-MNIST / ResNet-18 init.   }  & $0.657$ & $-0.462$ & $0.195$ & $0.166$ \\
        {\footnotesize F-MNIST / ResNet-18 pretr.  }  & $0.353$ & $-0.149$ & $0.204$ & $0.188$ \\
        {\footnotesize CIFAR-10 / ResNet-18 init.  }  & $0.535$ & $-0.468$ & $0.066$ & $0.059$ \\
        {\footnotesize CIFAR-10 / ResNet-18 pretr. }  & $0.270$ & $-0.089$ & $0.181$ & $0.162$ \\
        {\footnotesize CIFAR-100 / ResNet-34 init. }  & $0.482$ & $-0.466$ & $0.016$ & $0.014$ \\
        {\footnotesize CIFAR-100 / ResNet-34 pretr.}  & $0.257$ & $-0.128$ & $0.128$ & $0.124$ \\
        {\footnotesize Food-101 / ResNet-101 pretr.}  & $0.200$ & $-0.113$ & $0.087$ & $0.985$ \\ \bottomrule
    \end{tabular}%
    }
    \caption{The first two columns display the estimated power law rates $\hat\gamma$ (of eigendecay) and $\hat\delta$ (of alignment). The last two columns compare the estimate $\hat\alpha\coloneqq\hat\gamma + \hat\delta$ for the scaling rate of optimally tuned ridge regression against the actual scaling rate $\alpha$ of \smash{$\Risk(\hat\beta_{\lambda^*})$}.}
    \label{tab:powerlaws}
\end{table}

The omniscient risk estimate $\OmniRisk^\lambda$ shows that both alignment and eigendecay matter for generalization. To better understand the behavior of these quantities, which are given in terms of the unobserved $\Sigma$ and $\beta$, we show how the power law rates of these terms can be estimated from empirically observable quantities. We then use these rates to estimate the scaling law rate of the generalization error for optimally regularized ridge regression. We find the estimated rates accurately reflect observed scaling behavior, suggesting that power law models of alignment and eigendecay suffice to capture the scaling behavior of regression on natural data.

In this section, we suppose that the population eigenvalues and the alignment coefficients scale as
\[ \lambda_i\asymp i^{-1-\gamma} \quad\text{and}\quad (\beta^\transp v_i)^2\asymp i^{-\delta}, \]
for $\gamma > 0$ and $\delta < 1$. (Note that the latter implies $\norm{\beta}_2$ is effectively infinite when $\n\ll\p$.)
Assuming known $\gamma$ and $\delta$, \citet{cui21generalization} analyze \smash{$\OmniRisk^{\lambda^*}$}, for $\lambda^*$ the optimal ridge regularization, and show in the noiseless regime that
\begin{equation}\label{eq:cui}
\OmniRisk^{\lambda^*}\asymp N^{-\alpha},\quad\text{for $\alpha = \gamma + \delta$.}
\end{equation}
However, they do not give a satisfactory way to estimate $\gamma$ and $\delta$ from data: they propose simply using the eigenvalues $\Sigmahat$ as a proxy for those of $\Sigma$. But as we previously observed in \Cref{fig:spectrum}, convergence of $\Sigmahat$ to $\Sigma$ can be slow for high-dimensional regression problems.

The following propositions (proven in \Cref{sec:proofs-2}) provide a more principled way to estimate $\gamma$ and $\delta$ in terms of empirically observable quantities, using the same random matrix hypothesis from before.
\begin{proposition}\label{proposition:A}
Suppose that \Cref{hypothesis:local-mp} holds as $\lambda \to 0$ and that $\lambda_i\asymp i^{-1-\gamma}$. Then,
\[ \n^{-1}\Tr\bigl((XX^\transp)^{-1}\bigr)\asymp \n^{\gamma}. \]
\end{proposition}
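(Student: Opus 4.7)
The plan is to reduce the trace to a sum over the empirical eigenvalues of $\Sigmahat$, apply the local Marchenko-Pastur law in the limit $\lambda\to 0^+$ to replace that sum by the effective regularization $\kappa_0\coloneqq \kappa(0,\n)$, and then estimate $\kappa_0$ by solving \eqref{eq:kappa} under the power-law decay of the $\lambda_i$. Concretely, I would first observe that $XX^\transp$ and $\n\Sigmahat$ share the same nonzero eigenvalues $\{\n\hat\lambda_i\}_{i=1}^{\n}$, so
\[ \n^{-1}\Tr\bigl((XX^\transp)^{-1}\bigr) \;=\; \frac{1}{\n}\cdot\frac{1}{\n}\sum_{i=1}^\n \frac{1}{\hat\lambda_i}. \]
Then \eqref{eq:local-mp-A}, taken in the $\lambda\to 0^+$ limit, gives $\frac{1}{\n}\sum_{i=1}^\n 1/\hat\lambda_i \approx 1/\kappa_0$, so the task reduces to showing $\kappa_0 \asymp \n^{-(1+\gamma)}$.

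For the estimate of $\kappa_0$, I would take the $\lambda\to 0^+$ limit of the defining equation \eqref{eq:kappa} to get
\[ 1 \;=\; \frac{1}{\n}\sum_{i=1}^\p \frac{\lambda_i}{\kappa_0+\lambda_i} \;=\; \frac{1}{\n}\sum_{i=1}^\p \frac{1}{1+\kappa_0/\lambda_i}. \]
Under $\lambda_i\asymp i^{-1-\gamma}$, the change of variables $u = \kappa_0^{1/(1+\gamma)} i$ transforms the summand into $1/(1+u^{1+\gamma})$ (up to $\asymp$-constants) with step size $\kappa_0^{1/(1+\gamma)}$. Sandwiching the resulting discrete sum by integrals via the monotonicity of $x\mapsto 1/(1+x)$ yields
\[ \frac{1}{\n}\sum_{i=1}^\p \frac{1}{1+\kappa_0/\lambda_i} \;\asymp\; \frac{1}{\n\,\kappa_0^{1/(1+\gamma)}}\int_0^\infty \frac{du}{1+u^{1+\gamma}} \;=\; \frac{C_\gamma}{\n\,\kappa_0^{1/(1+\gamma)}}, \]
where $C_\gamma<\infty$ precisely because $\gamma>0$. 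Setting this equal to $1$ gives $\kappa_0\asymp\n^{-(1+\gamma)}$, and combining with the first paragraph yields $\n^{-1}\Tr((XX^\transp)^{-1}) \asymp 1/(\n\kappa_0) \asymp \n^\gamma$.

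The main obstacle is justifying the Riemann-sum approximation with matching $\asymp$ constants. This requires: (i) the effective lower limit of integration $\kappa_0^{1/(1+\gamma)}$ to tend to $0$ as $\n\to\infty$, which can be bootstrapped from a coarse a priori bound $\kappa_0\to 0$ (itself a consequence of the self-consistent equation requiring $\p\gtrsim\n$); (ii) the effective upper limit $\p\,\kappa_0^{1/(1+\gamma)}\asymp \p/\n$ to be bounded away from $0$ so that the bulk region $u\asymp 1$ of the integral is captured, which is automatic in the high-dimensional regime $\p\gtrsim\n$; and (iii) passing \eqref{eq:local-mp-A} to the $\lambda\to 0^+$ limit, which can be handled by invoking the hypothesis along a sequence $\lambda_k\to 0^+$ in $S$ and using monotone convergence of both $\lambda\mapsto 1/(\lambda+\hat\lambda_i)$ and $\lambda\mapsto 1/\kappa(\lambda,\n)$. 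Notably, no estimate involving $\beta$ enters, since the claim concerns only the spectra of $\Sigma$ and $\Sigmahat$.
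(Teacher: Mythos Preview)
Your proposal is correct and follows essentially the same approach as the paper: reduce $\n^{-1}\Tr((XX^\transp)^{-1})$ to $1/(\n\kappa_0)$ via the Marchenko-Pastur law \eqref{eq:local-mp-A} at $\lambda\to 0$, then estimate $\kappa_0$ from the self-consistent equation \eqref{eq:kappa} under the power-law decay $\lambda_i\asymp i^{-1-\gamma}$. The only cosmetic difference is in how the sum $\sum_i \lambda_i/(\kappa_0+\lambda_i)$ is estimated: the paper splits at the crossover index $i^*$ where $\lambda_{i^*}\asymp\kappa_0$ and bounds the two pieces separately, while you perform the equivalent rescaling $u=\kappa_0^{1/(1+\gamma)}i$ and compare to $\int_0^\infty du/(1+u^{1+\gamma})$; both routes yield $\kappa_0\asymp\n^{-(1+\gamma)}$.
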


\begin{proposition}\label{proposition:B}
Suppose that \Cref{hypothesis:local-mp} holds at $\lambda > 0$ and that $\lambda_i\asymp i^{-1-\gamma}$ and $(\beta^\transp v_i)^2\asymp i^{-\delta}$. Then,
\begin{equation}\label{eq:delta}
y^\transp\paren*{XX^\transp + \n\lambda I}^{-1} y\asymp\kappa(\lambda, \n)^{-\frac{1 - \delta}{1 + \gamma}}.
\end{equation}
\end{proposition}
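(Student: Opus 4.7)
The plan is to reduce $y^\transp (XX^\transp + n\lambda I)^{-1} y$ to a population quadratic form via the local Marchenko--Pastur law, and then evaluate the resulting weighted sum under the power-law assumptions. First, combining $y = X\beta$ with the pushthrough identity $X^\transp(XX^\transp + n\lambda I)^{-1} X = \Sigmahat(\Sigmahat + \lambda I)^{-1}$ rewrites the left-hand side of \eqref{eq:delta} as $\beta^\transp \Sigmahat(\Sigmahat + \lambda I)^{-1}\beta$. Next, I would apply \Cref{hypothesis:local-mp} to the normalized vector $v = \beta / \sqrt{\beta^\transp \Sigma \beta}$ to get $\beta^\transp \lambda(\Sigmahat + \lambda I)^{-1} \beta \approx \beta^\transp \kappa(\Sigma + \kappa I)^{-1} \beta$. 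Using $\Sigmahat(\Sigmahat+\lambda I)^{-1} = I - \lambda(\Sigmahat+\lambda I)^{-1}$ together with the analogous identity for $\Sigma(\Sigma+\kappa I)^{-1}$ and subtracting at each finite $p$ (so the $\|\beta\|_2^2$ contributions cancel exactly), I obtain
\[
\beta^\transp \Sigmahat(\Sigmahat + \lambda I)^{-1} \beta \;\approx\; \beta^\transp \Sigma(\Sigma + \kappa I)^{-1} \beta \;=\; \sum_{i=1}^p \frac{\lambda_i}{\lambda_i + \kappa}\,(\beta^\transp v_i)^2.
\]

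The second step is to evaluate this population sum under $\lambda_i \asymp i^{-1-\gamma}$ and $(\beta^\transp v_i)^2 \asymp i^{-\delta}$. I would split at the crossover $i_\ast := \kappa^{-1/(1+\gamma)}$, where $\lambda_{i_\ast} \asymp \kappa$. For $i \le i_\ast$, the weight $\lambda_i/(\lambda_i+\kappa) \asymp 1$, so the head contributes $\asymp \sum_{i \le i_\ast} i^{-\delta} \asymp i_\ast^{\,1-\delta}$, using $\delta < 1$. For $i > i_\ast$, the weight $\asymp \lambda_i/\kappa$, so the tail contributes $\asymp \kappa^{-1} \sum_{i > i_\ast} i^{-1-\gamma-\delta} \asymp \kappa^{-1} i_\ast^{-\gamma-\delta} = i_\ast^{\,1-\delta}$, where tail convergence uses the mild condition $\gamma+\delta > 0$ (which is implied by $\beta^\transp\Sigma\beta < \infty$). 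Both regimes contribute $\asymp i_\ast^{\,1-\delta} = \kappa^{-(1-\delta)/(1+\gamma)}$, as required.

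The main obstacle is justifying the cancellation in the first step: subtracting $\|\beta\|_2^2$ from both sides is exact at finite $p$, but $\|\beta\|_2^2 \asymp p^{1-\delta}$ can be very large. The saving grace is that \Cref{hypothesis:local-mp} controls the error \emph{relative} to $\beta^\transp \Sigma \beta$, which is bounded under the paper's normalization $\E[y^2]\le 1$, so the absolute error on $\beta^\transp\Sigmahat(\Sigmahat+\lambda I)^{-1}\beta$ is governed by the MP rate and is \emph{not} inflated by $\|\beta\|_2^2$. One then needs to verify that this error is dominated by the main term $\kappa^{-(1-\delta)/(1+\gamma)}$, which is automatic in the small-$\kappa$ regime where the main term diverges; carrying this through in both directions delivers the two-sided $\asymp$ rather than merely one-sided control.
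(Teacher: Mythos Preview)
Your proposal is correct and follows essentially the same approach as the paper's proof: rewrite the left-hand side as $\beta^\transp\Sigmahat(\Sigmahat+\lambda I)^{-1}\beta$, invoke the local Marchenko--Pastur law to pass to $\sum_i \lambda_i(\lambda_i+\kappa)^{-1}(\beta^\transp v_i)^2$, and split at the crossover index $i^\ast\asymp\kappa^{-1/(1+\gamma)}$ to obtain $(i^\ast)^{1-\delta}$. You are somewhat more explicit than the paper about the $\|\beta\|_2^2$-cancellation that converts the hypothesis on $\lambda(\Sigmahat+\lambda I)^{-1}$ into control of $\Sigmahat(\Sigmahat+\lambda I)^{-1}$, which the paper simply asserts in one line.
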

Consequently, $\gamma$ can be estimated by fitting the slope of the points $\bigl(\log\n, \log\bigl(\n^{-1}\Tr((XX^\transp)^{-1})\bigr)\bigr)\in\R^2$. And $\delta$ can be estimated by inverting \eqref{eq:delta} and applying the estimate $\eqref{eq:local-mp-A}$ for $\kappa$ and the preceding estimate for $\gamma$.

To test this approach, we estimate $\gamma$ and $\delta$ as $\hat\gamma$ and $\hat\delta$ via the quantities in \Cref{proposition:A,,proposition:B} and apply these estimates to the datasets listed in \Cref{tab:powerlaws}. We also estimate $\hat\alpha = \hat\gamma + \hat\delta$ following \eqref{eq:cui} and compare $\hat\alpha$ to the actual rate $\alpha$ in \Cref{tab:powerlaws}.

We find that $\hat\alpha$ accurately approximates $\alpha$ for all datasets, suggesting that the power law assumption can be used to model naturally-occurring data. Additionally, we observe for all datasets that $\smash{\hat\delta} < 0$, which suggests that the coefficients $(\beta^\transp v_i)^2$ \emph{grow} in $i$, reinforcing our conclusion from \Cref{sec:challenges} that $\beta$ has large norm. Finally, for all pairs of randomly initialized and pretrained models in \Cref{tab:powerlaws}, note that the pretrained model has smaller $\gamma$ and thus slower eigenvalue decay, but much larger $\delta$. This verifies our hypothesis that pretrained representations scale better due to improved alignment (and despite higher dimension).

\section{Discussion} \label{sec:conclusion}

In this paper, we identify that the GCV estimator accurately predicts the generalization of ridge regression on neural representations of large-scale networks and real data, while other more classical approaches fall short. We then elucidate the connection between GCV and random matrix laws, showing that GCV accurately predicts generalization risk whenever a local Marchenko-Pastur law holds. Finally, we find that this perspective lets us answer basic conceptual questions about neural representations.
Our findings suggest several promising directions for future inquiry, which we now discuss.

First, we believe that the random matrix approach has much more to offer towards understanding the statistics of high-dimensional learning: the structure imposed by a random matrix assumption stood apart at capturing the qualitative phenomena of ridge regression. It is thus conceivable that such structure will be \emph{necessary} to understand settings beyond ridge regression, e.g., classification accuracy for logistic regression or modeling natural covariate shifts.

However, there remain open problems even in the setting of ridge regression. For instance, current understanding of random matrix laws does not encompass all the regimes of interest: a natural scaling of regularization is $\lambda\asymp\n^{-1}$, but the existing theory \cite{knowles17anisotropic} requires $\lambda$ to be bounded away from $0$. Additionally, it would be of interest to achieve a bound on the error of $\OmniRisk^\lambda$ that scales well in the $\lambda\asymp\n^{-1}$ limit (like what we have for \Cref{proposition:gcv}).

Finally, and most broadly, we hope that the perspective we take towards studying neural representations can inspire more insight towards what is learned by neural networks. We find that eNTK representations reveal much more than the typically considered final-layer activations and serve as a reasonable proxy for understanding finetuning on a pretrained model. Can eNTKs be used as a tractable model to untangle more of the mysteries around large-scale models? For instance, what do \NTK{}s reveal about features learned via different training procedures? And can the evolution of the \NTK{} and its associated metrics (e.g., eigendecay and alignment) during training shed light on feature learning?

\section*{Acknowledgements}

We would like to thank Yasaman Bahri, Karolina Dziugaite, Preetum Nakkiran, and Nilesh Tripuraneni for their valuable feedback. Alexander Wei acknowledges support from an NSF Graduate Research Fellowship.

\bibliography{main}
\bibliographystyle{icml2022}

\newpage
\appendix
\onecolumn

\section{Analysis of the GCV Estimator (Proofs for \Cref{sec:theory})}\label{sec:proofs}

In this section, we prove our main theoretical result: that the GCV estimator approximates the generalization risk of ridge regression (\Cref{theorem:gcv}). We now give formal statements of \Cref{hypothesis:local-mp,,theorem:gcv}. For \Cref{theorem:gcv}, we will assume that \Cref{hypothesis:local-mp} holds for $\Dist$ as well as a family of linear transformations of $\Dist$.

Before giving formal statements, we make note of a few mathematical conventions that we use throughout this section:
\begin{itemize}
    \item We say that a family of events $A^\n$ indexed by $\n$ occurs \emph{with high probability} if, for any (large) constant $D > 0$, there exists a threshold $\n_D$ such that $A^\n$ occurs with probability at least $1 - \n^{-D}$ for all $\n\ge\n_D$.
    \item For any two families of functions $f^\n, g^\n\colon S\to\R_{\ge 0}$ indexed by $\n$,
    we say that \emph{$f\lesssim g$ uniformly over $S$} if there exists a constant $C > 0$ such that, with high probability, $f^\n(z) \le C\cdot g^\n(z)$ {uniformly} over all $z\in S$. In particular, $\lesssim$ omits constant factors from bounds.
    \item We let $\i$ (in roman type) denote the imaginary unit and use $i$ (in italic type) as an indexing variable.
\end{itemize}

With these conventions in mind, \Cref{hypothesis:local-mp} is formalized as follows:

\begin{hypothesis}[Local Marchenko-Pastur law over $\R_{>0}$]\label{hypothesis:local-mp-formal}
The \emph{local Marchenko-Pastur law} holds over an open set $S\subseteq\R_{>0}$ if, for every deterministic vector $v\in\R^\p$ such that $v^\transp\Sigma v\le 1$, both
\begin{equation}\label{eq:global-real-conv}
\abs*{\frac{1}{\kappa} - \frac{1}{\n}\sum_{i=1}^{\n} \frac{1}{\hat\lambda_i + \lambda}}\lesssim \n^{-\frac 12 + o(1)}\cdot\frac{1}{\kappa}\sqrt{\parfrac{\kappa}{\lambda}}
\end{equation}
and
\begin{equation}\label{eq:local-real-conv}
\abs*{v^\transp \bigl(\kappa(\kappa I + \Sigma)^{-1}\bigr) v - v^\transp \bigl(\lambda (\lambda I + \Sigmahat)^{-1}\bigr) v} \lesssim \n^{-\frac 12 + o(1)}\cdot\frac{1}{\kappa}\sqrt{\parfrac{\kappa}{\lambda}}
\end{equation}
hold uniformly over all $\lambda\in S$.
\end{hypothesis}

To analyze the omniscient risk estimate, we will need a slight extension of \Cref{hypothesis:local-mp-formal}, requiring that \Cref{hypothesis:local-mp-formal} hold for a family of linear transformations of the data distribution $\Dist$:

\begin{hypothesis}\label{hypothesis:local-mp-extended}
\Cref{hypothesis:local-mp-formal} holds for $z = (I + t\Sigma)^{-\frac 12}x$, where $x\sim\Dist$, uniformly\footnote{Since $t$ is $1$-dimensional, this uniformity assumption can be relaxed with a standard $\epsilon$-net argument, which we omit for brevity.} over all $t\in \{s\in\R : \abs s < \frac 12\norm{\Sigma}_{\mathrm{op}}^{-1}\}$.
\end{hypothesis}

\Cref{theorem:gcv} can now formally be stated as follows:

\begin{theorem}\label{theorem:gcv-formal}
Suppose $\lambda > 0$ is such that \Cref{hypothesis:local-mp-extended} holds over $S = (\frac 12\lambda, \frac 32\lambda)$. Then,
\[ \abs*{\GCV_\lambda - \Risk(\hat\beta_\lambda)}\lesssim {\n^{-\frac 12 + o(1)}}\cdot\beta^\transp\Sigma\beta\cdot\brack*{\frac{\norm{\Sigma}_{\mathrm{op}}}{\lambda} + \paren*{\frac{\Tr(\Sigma)}{\n\lambda}}^{3/2}}. \]
\end{theorem}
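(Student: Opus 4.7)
The plan is to bound $\abs{\GCV_\lambda - \Risk(\hat\beta_\lambda)}$ by the triangle inequality through the omniscient estimate,
\[
  \abs{\GCV_\lambda - \Risk(\hat\beta_\lambda)} \le \abs{\GCV_\lambda - \OmniRisk^\lambda} + \abs{\OmniRisk^\lambda - \Risk(\hat\beta_\lambda)}.
\]
The first summand is handled by the formal version of \Cref{proposition:gcv} and, after tracking the $\beta^\transp\Sigma\beta$ normalization, will contribute the $(\Tr(\Sigma)/(\n\lambda))^{3/2}$ piece of the stated bound. The bulk of the new work is therefore controlling the deterministic-equivalent gap $\abs{\OmniRisk^\lambda - \Risk(\hat\beta_\lambda)}$, which is where the $\norm{\Sigma}_{\mathrm{op}}/\lambda$ contribution should come from.

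For the risk side I would first record the closed form $\Risk(\hat\beta_\lambda) = \lambda^2\beta^\transp(\Sigmahat + \lambda I)^{-1}\Sigma(\Sigmahat + \lambda I)^{-1}\beta$ that follows from $\beta - \hat\beta_\lambda = \lambda(\Sigmahat + \lambda I)^{-1}\beta$ in the noiseless model, and then use the key identity
\[
  \Risk(\hat\beta_\lambda) = -\lambda\cdot\frac{d}{ds}\bigg|_{s=0}\beta^\transp\lambda(\Sigmahat + \lambda I + s\Sigma)^{-1}\beta,
\]
which reduces the estimate to approximating a single anisotropic resolvent quadratic form near $s=0$. To feed this into the MP law I would change variables $s = \lambda t$ and use the factorization
\[
  \Sigmahat + \lambda I + \lambda t\Sigma = (I+t\Sigma)^{1/2}(\tSigmahat + \lambda I)(I+t\Sigma)^{1/2},
\]
where $\tSigmahat$ is the empirical covariance of the transformed data $z = (I+t\Sigma)^{-1/2}x$. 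The extended hypothesis \Cref{hypothesis:local-mp-extended} applies to $z$ uniformly over small $t$, with test vector $v_t = (I+t\Sigma)^{-1/2}\beta$ whose second moment under the transformed population is bounded by $\beta^\transp\Sigma\beta$, so after normalization the MP law gives
\[
  \beta^\transp\lambda(\Sigmahat + \lambda I + \lambda t\Sigma)^{-1}\beta \;\approx\; \beta^\transp\tkappa(t)\bigl(\tkappa(t) I + (1 + t\tkappa(t))\Sigma\bigr)^{-1}\beta.
\]
A direct differentiation of the right-hand side at $t=0$, together with the fixed-point equation \eqref{eq:kappa} for $\tkappa$ (which gives $\tkappa'(0)$ in terms of $\partial\kappa/\partial\lambda$), should recover $\OmniRisk^\lambda$ exactly, including the $\partial\kappa/\partial\lambda\cdot\kappa^2$ prefactor.

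The main obstacle is converting the pointwise-in-$t$ approximation into an approximation of $f'(0)$, since $f(t)\approx g(t)$ does not in general imply $f'(0)\approx g'(0)$. The standard random-matrix remedy is to exploit analyticity: both sides of the MP approximation are essentially Stieltjes-like in the spectral parameter and extend to analytic functions on a disk around $t=0$, so the Cauchy integral formula lets one write $f'(0)$ as a contour average of $f$ itself, promoting the pointwise error to a derivative error at the price of a factor inverse to the contour radius. Choosing that radius as a small multiple of $\norm{\Sigma}_{\mathrm{op}}^{-1}$ (the range guaranteed by \Cref{hypothesis:local-mp-extended}) produces the $\norm{\Sigma}_{\mathrm{op}}/\lambda$ dependence. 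Combining this bound on $\abs{\OmniRisk^\lambda - \Risk(\hat\beta_\lambda)}$ with the \Cref{proposition:gcv} bound on $\abs{\GCV_\lambda - \OmniRisk^\lambda}$, and verifying that the $\n^{-1/2+o(1)}$ probabilistic rate from \Cref{hypothesis:local-mp-extended} is preserved throughout, yields the stated conclusion.
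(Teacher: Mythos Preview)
Your overall strategy matches the paper's exactly: triangle through $\OmniRisk^\lambda$, with the formal \Cref{proposition:gcv} handling the first piece; for the second, parametrize by $t$ via $(I+t\Sigma)^{-1/2}$, apply \Cref{hypothesis:local-mp-extended} to the transformed data with test vector $(I+t\Sigma)^{-1/2}\beta$, and differentiate at $t=0$ (the paper's \Cref{lemma:mtilde} supplies the needed derivative of $1/\tkappa$ in terms of $\partial\kappa/\partial\lambda$). Your identification of $\norm{\Sigma}_{\mathrm{op}}^{-1}$ as the relevant radius, producing the $\norm{\Sigma}_{\mathrm{op}}/\lambda$ factor, is also right.

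The genuine gap is in the derivative step. \Cref{hypothesis:local-mp-extended} (built on \Cref{hypothesis:local-mp-formal}) only bounds the error $h(t)=f(t)-g(t)$ for \emph{real} $t$; it says nothing on a complex contour. A plain Cauchy integral over a disk of radius $R\asymp\norm{\Sigma}_{\mathrm{op}}^{-1}$ therefore yields only $\abs{h'(0)}\le M/R$, where $M$ is a crude a-priori bound on $\abs{h}$ over the complex disk---and that crude bound (of order $1/\lambda$, via \Cref{lemma:omni-bounded}) does \emph{not} carry the $\n^{-1/2+o(1)}$ factor, so the resulting estimate is useless. The paper's resolution is \Cref{lemma:derivative}: if $\abs{h}\le\delta$ on a real interval but only $\abs{h}\le M$ on the enclosing disk, then $\abs{h'(0)}\lesssim(\delta/R)\bigl(1+\log(M/\delta)\bigr)^2$. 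The proof uses Cauchy only to bound the Taylor coefficients by powers of $M$, truncates the series at degree $k\sim\log(M/\delta)$ so the tail is $\le\delta$, and then applies the Markov brothers' inequality on the real interval to the degree-$k$ polynomial. This log-only dependence on $M$ is precisely what lets the $\n^{-1/2+o(1)}$ rate survive differentiation; without it your argument as written does not close. The supporting work of analytically continuing $\tkappa$ to complex $t$ and establishing the disk bound $M$ (\Cref{lemma:omni-continuation,,lemma:omni-bounded,,lemma:right}) is also nontrivial but is of the kind you would expect once the real-interval-plus-crude-disk mechanism is in place.
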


Recall from \Cref{sec:theory} that our analysis of the GCV estimator proceeds in two steps: showing that $\GCV_\lambda\approx\OmniRisk^\lambda$ and then showing that \smash{$\OmniRisk^\lambda\approx\Risk(\hat\beta_\lambda)$}. For the first step, we show the following proposition (formally restating \Cref{proposition:gcv}):

\begin{proposition}\label{proposition:gcv-app}
Suppose $\lambda > 0$ is such that \Cref{hypothesis:local-mp-formal} holds over $S = (\frac 12\lambda, \frac 32\lambda)$. Then,
\[ \abs*{\GCV_\lambda - \OmniRisk^\lambda}\lesssim\n^{-\frac 12 + o(1)}\cdot\beta^\transp\Sigma\beta\cdot\paren*{1 + \frac{\Tr(\Sigma)}{\n\lambda}}^{3/2}. \]
\end{proposition}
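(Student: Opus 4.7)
The plan is to rigorously execute the four-step heuristic immediately following the statement of the proposition, carefully tracking how each approximation propagates through a product and through a differentiation in $\lambda$. Starting from $y = X\beta$, the standard closed form gives
\[ \TrainRisk(\hat\beta_\lambda) = \lambda^2\,\beta^\transp (\Sigmahat + \lambda I)^{-1}\Sigmahat(\Sigmahat + \lambda I)^{-1}\beta, \]
so $\GCV_\lambda$ factors as the product of the scalar $\bigl(\tfrac1\n\sum_i (\lambda+\hat\lambda_i)^{-1}\bigr)^{-2}$ and a quadratic form in $\beta$. The first assertion of \Cref{hypothesis:local-mp-formal} identifies the scalar with $\kappa^2$ up to relative error of order $\n^{-1/2+o(1)}\sqrt{\partial\kappa/\partial\lambda}$, so the real task is to compare the quadratic form with its population analogue; indeed, $\kappa^2$ times $\frac{\partial\kappa}{\partial\lambda}\,\beta^\transp(\Sigma+\kappa I)^{-1}\Sigma(\Sigma+\kappa I)^{-1}\beta$, expanded in the eigenbasis of $\Sigma$, is exactly $\OmniRisk^\lambda$.

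The essential algebraic move is the identity
\[ (\Sigmahat + \lambda I)^{-1}\Sigmahat(\Sigmahat + \lambda I)^{-1} = \frac{\partial}{\partial \lambda}\bigl[\lambda(\Sigmahat + \lambda I)^{-1}\bigr], \]
together with its population analogue $\tfrac{\partial\kappa}{\partial\lambda}(\Sigma+\kappa I)^{-1}\Sigma(\Sigma+\kappa I)^{-1} = \tfrac{\partial}{\partial\lambda}\bigl[\kappa(\Sigma+\kappa I)^{-1}\bigr]$. Applying the second assertion of \Cref{hypothesis:local-mp-formal} to the vector $v = \beta/\sqrt{\beta^\transp\Sigma\beta}$ (unit in $\Sigma$-norm) yields the value bound
\[ \bigl|\beta^\transp\lambda(\Sigmahat + \lambda I)^{-1}\beta - \beta^\transp\kappa(\Sigma + \kappa I)^{-1}\beta\bigr|\lesssim \n^{-1/2+o(1)}\cdot\beta^\transp\Sigma\beta\cdot\kappa^{-1}\sqrt{\partial\kappa/\partial\lambda}, \]
uniformly over $\lambda\in S$. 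The rest reduces to transferring this uniform value bound on $S$ into a pointwise bound on the $\lambda$-derivative at the centre of $S$.

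This differentiation step will be the main obstacle: a small $L^\infty$ error on an interval does not in general control the derivative. The plan is to exploit the fact that both sides are Stieltjes-type transforms of positive measures on $[0,\infty)$ and so extend to functions holomorphic on $\C\setminus(-\infty, 0]$; \Cref{hypothesis:local-mp-formal}, modelled on the local laws of \citet{knowles17anisotropic}, should in fact hold throughout a complex disk of radius $\asymp\lambda$ around $\lambda$, and Cauchy's estimate on this disk loses at most a factor $1/\lambda$, converting a value bound of order $\epsilon$ into a derivative bound of order $\epsilon/\lambda$. (An equivalent purely real-variable route uses a second-order finite-difference argument together with monotonicity and convexity of the underlying Stieltjes transforms to control the remainder.) Multiplying the resulting derivative bound by the $\kappa^2$ outer factor produces an overall error of order $\n^{-1/2+o(1)}\,\beta^\transp\Sigma\beta\cdot(\kappa/\lambda)\sqrt{\partial\kappa/\partial\lambda}$; the elementary estimates $\kappa\le\lambda+\Tr(\Sigma)/\n$ and $\partial\kappa/\partial\lambda\le\kappa/\lambda$, both immediate from \eqref{eq:kappa}, then collapse this to $\n^{-1/2+o(1)}\,\beta^\transp\Sigma\beta\cdot(1+\Tr(\Sigma)/(\n\lambda))^{3/2}$, which is exactly the claimed shape. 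The separate contribution coming from the scalar-factor error on $\bigl(\tfrac1\n\sum_i(\lambda+\hat\lambda_i)^{-1}\bigr)^{-2}$ is of lower order and is absorbed by the same bookkeeping.
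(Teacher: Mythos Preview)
Your overall architecture is exactly the paper's: factor $\GCV_\lambda$ as $\hat\kappa^2$ times a quadratic form, use the identity $\partial_\lambda[\lambda(\Sigmahat+\lambda I)^{-1}] = (\Sigmahat+\lambda I)^{-1}\Sigmahat(\Sigmahat+\lambda I)^{-1}$ and its population analogue, apply \eqref{eq:local-real-conv} to $v=\beta/\sqrt{\beta^\transp\Sigma\beta}$, and finish with $\kappa/\lambda\le 1+\Tr(\Sigma)/(\n\lambda)$ and $\partial\kappa/\partial\lambda\le\kappa/\lambda$. That part is fine.

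The gap is the differentiation step. You propose Cauchy's estimate on a complex disk of radius $\asymp\lambda$, but that needs the hypothesis to hold \emph{on the disk}, and \Cref{hypothesis:local-mp-formal} is stated only for $\lambda$ on the real interval $S\subseteq\R_{>0}$. You write that the hypothesis ``should in fact hold'' off the real line because it is modelled on Knowles--Yin, but that is a strengthening of the assumption, not a deduction from it; the paper is explicit that only the real-line law is used here. If instead you feed Cauchy the crude bound you do have on the disk---namely $|h(w)|\lesssim 1/\lambda$ from the resolvent operator norm---you get $|h'(\lambda)|\lesssim 1/\lambda^2$, which is far too weak. Your alternative ``real-variable finite-difference plus monotonicity/convexity'' suggestion is not enough either: convexity of each of $f$ and $g$ separately gives no control on the derivative of their difference.

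The paper fills this gap with a dedicated lemma (\Cref{lemma:derivative}): for a real-analytic $h$ with $|h|\le\delta$ on the real segment $K\cap\R$ and $|\tilde h|\le M$ on the complex disk $K$, one has $|h'(x_0)|\lesssim(\delta/R)(1+\log(M/\delta))^2$. The proof truncates the Taylor series at degree $k\asymp\log(M/\delta)$ (so the tail, controlled by the Cauchy bound $|c_j|\le(2/R)^jM$, is $\le\delta$) and then applies the Markov brothers' inequality to the degree-$k$ polynomial. The point is that one only needs the \emph{sharp} bound $\delta$ on the real segment and a \emph{crude} bound $M$ on the complex disk; the latter comes not from the hypothesis but from \Cref{lemma:gcv-bounded}, which shows $\E[\sup_{\Re w\ge\lambda/2}|f(w)|]\le 2/\lambda$ and $|g(w)|\le 2/\lambda$ deterministically, so Markov's inequality gives $M=\n^D/\lambda$ with high probability. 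The $\log(M/\delta)$ factor is then $O(\log\n)$ and is absorbed into the $\n^{o(1)}$. This is the missing idea in your proposal.
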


For the second step, we show the following proposition:
\begin{proposition}\label{proposition:hastie-app}
Suppose $\lambda > 0$ is such that \Cref{hypothesis:local-mp-extended} holds over $S = \frac (12\lambda,\frac 32\lambda)$. Then,
\[ \abs*{\OmniRisk^\lambda - \Risk(\hat\beta_\lambda)} \lesssim \n^{-\frac 12 + o(1)}\cdot\beta^\transp\Sigma\beta\cdot\frac{\norm{\Sigma}_{\mathrm{op}}}{\lambda}. \]
\end{proposition}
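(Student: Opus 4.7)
The plan is to express $\Risk(\hat\beta_\lambda)$ as a derivative in an auxiliary parameter and then control that derivative via the local Marchenko--Pastur approximation. Since $\hat\beta_\lambda = (\Sigmahat + \lambda I)^{-1}\Sigmahat\beta$, the residual is $\beta - \hat\beta_\lambda = \lambda(\Sigmahat + \lambda I)^{-1}\beta$, so
\[
\Risk(\hat\beta_\lambda) = \lambda^2\,\beta^\transp(\Sigmahat + \lambda I)^{-1}\Sigma(\Sigmahat + \lambda I)^{-1}\beta = -\lambda^2\,\frac{d}{dt}\bigg|_{t=0}\tilde m^{(t)},
\]
where $\tilde m^{(t)} \coloneqq \beta^\transp(\Sigmahat + \lambda I + t\Sigma)^{-1}\beta$. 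This identity moves $\Sigma$ out of the resolvent sandwich, replacing it by an auxiliary parameter $t$ in which to differentiate.

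A change of variables then brings $\tilde m^{(t)}$ into a form covered by the local MP law. With $s = t/\lambda$ and $T_s \coloneqq (I + s\Sigma)^{-1/2}$, the commutation of $T_s$ with $\Sigma$ gives $T_s(\Sigmahat + \lambda I + t\Sigma)T_s = T_s\Sigmahat T_s + \lambda I$, hence
\[
\tilde m^{(t)} = (T_s\beta)^\transp(T_s\Sigmahat T_s + \lambda I)^{-1}(T_s\beta).
\]
The right-hand side is the quadratic form of the resolvent of the empirical covariance of the transformed covariates $z_i = T_s x_i$, whose population covariance is $T_s\Sigma T_s = \Sigma(I + s\Sigma)^{-1}$. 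Invoking \Cref{hypothesis:local-mp-extended} for this transformed problem, with effective regularization $\tilde\kappa_s$ defined by the analogue of \eqref{eq:kappa}, and simplifying via $T_s^2(\tilde\kappa_s I + T_s\Sigma T_s)^{-1} = (\tilde\kappa_s I + (1 + s\tilde\kappa_s)\Sigma)^{-1}$, yields
\[
\lambda\,\tilde m^{(t)} \approx \Phi(s) \coloneqq \sum_{i=1}^{\p} c_i^{\,2}\,\frac{\tilde\kappa_s}{\tilde\kappa_s + (1 + s\tilde\kappa_s)\lambda_i},\qquad c_i \coloneqq v_i^\transp\beta.
\]

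A direct computation then reproduces $\OmniRisk^\lambda$ as $-\Phi'(0)$. Taking $d/ds$ termwise at $s=0$, each summand contributes $c_i^2\,\lambda_i(\tilde\kappa_0' - \kappa^2)/(\kappa+\lambda_i)^2$; crucially the $i$-dependent and $i$-independent pieces separate cleanly. Implicit differentiation of the defining equation for $\tilde\kappa_s$ gives $\tilde\kappa_0' = -\kappa^2\,\tfrac{\partial\kappa}{\partial\lambda}\cdot \tfrac{1}{\n}\sum_j \lambda_j^2/(\kappa+\lambda_j)^2$, while differentiating \eqref{eq:kappa} in $\lambda$ gives the algebraic identity $\tfrac{\partial\kappa}{\partial\lambda} = \bigl(1 - \tfrac{1}{\n}\sum_j \lambda_j^2/(\kappa+\lambda_j)^2\bigr)^{-1}$. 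Combining these, $\kappa^2 - \tilde\kappa_0' = \kappa^2\,\tfrac{\partial\kappa}{\partial\lambda}$, so $-\Phi'(0) = \tfrac{\partial\kappa}{\partial\lambda}\cdot\kappa^2\sum_i c_i^2\lambda_i/(\kappa+\lambda_i)^2 = \OmniRisk^\lambda$. Thus the derivative of the \emph{deterministic} approximation is exactly the target quantity.

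The chief obstacle is that \Cref{hypothesis:local-mp-extended} controls $\tilde m^{(t)} - \Phi(t/\lambda)/\lambda$ pointwise in $t$, not its $t$-derivative at $0$; a naive finite-difference argument would halve the exponent $\tfrac12$ in the rate. I would close this gap by analytic continuation and a Cauchy-integral estimate: both $t \mapsto \tilde m^{(t)}$ and $t \mapsto \Phi(t/\lambda)/\lambda$ extend analytically to the complex disk $|t| < \tfrac{\lambda}{2\|\Sigma\|_{\mathrm{op}}}$, on which the local MP bound extends as well (via the Stieltjes-transform machinery of \citet{knowles17anisotropic} underlying \Cref{hypothesis:local-mp-formal}). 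Writing the derivative at $0$ as $(2\pi\i)^{-1}\oint (\tilde m^{(t)} - \Phi(t/\lambda)/\lambda)\,dt/t^2$ on a circle of radius $r \asymp \lambda/\|\Sigma\|_{\mathrm{op}}$ converts the pointwise error $\n^{-1/2+o(1)}\cdot\beta^\transp\Sigma\beta$ into a derivative error of $\n^{-1/2+o(1)}\cdot\beta^\transp\Sigma\beta\cdot\|\Sigma\|_{\mathrm{op}}/\lambda$, matching the claimed bound. The range $|t| < \tfrac{1}{2}\|\Sigma\|_{\mathrm{op}}^{-1}$ built into \Cref{hypothesis:local-mp-extended} is precisely what supports this contour under the rescaling $s = t/\lambda$.
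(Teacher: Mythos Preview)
Your overall strategy matches the paper's almost exactly: write $\Risk(\hat\beta_\lambda)$ as a $t$-derivative of a resolvent quadratic form, change variables $z=(I+s\Sigma)^{-1/2}x$ so that \Cref{hypothesis:local-mp-extended} applies to the transformed covariates, and identify the derivative of the deterministic approximation at $t=0$ with $\OmniRisk^\lambda$. Your computation of $-\Phi'(0)=\OmniRisk^\lambda$ is correct and is the content of the paper's \Cref{lemma:mtilde}.

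The genuine gap is in the last step, where you pass from the pointwise MP approximation to the derivative. Your Cauchy-integral argument requires the local MP error bound $\lambda\,\tilde m^{(t)}\approx\Phi(t/\lambda)$ to hold \emph{on a complex circle} $|t|\asymp\lambda/\norm{\Sigma}_{\mathrm{op}}$. But \Cref{hypothesis:local-mp-extended} is stated only for \emph{real} $t\in\{s\in\R:|s|<\tfrac12\norm{\Sigma}_{\mathrm{op}}^{-1}\}$; the Knowles--Yin machinery you cite complexifies the spectral parameter $z=-\lambda$, not the deformation parameter $t$ of the covariance $\Sigma(I+t\Sigma)^{-1}$. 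For complex $t$ the transformed empirical covariance $T_s\Sigmahat T_s$ is not even Hermitian, so the local law is not available with the $\n^{-1/2+o(1)}$ rate you need. Applying Cauchy with only the crude $O(1/\lambda)$ bound on the circle would give no smallness in $\n$ at all.

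The paper circumvents this with a different derivative-control device (\Cref{lemma:derivative}): truncate the Taylor series at order $k\asymp\log(M/\delta)$, use the crude complex bound $M\lesssim 1/\lambda$ (established with real work in \Cref{lemma:omni-bounded,lemma:right,lemma:tensors}, since $\Sigmahat+\lambda(I+t\Sigma)$ is non-normal for complex $t$) to control the tail, and then apply the Markov brothers' inequality to the degree-$k$ polynomial using only the \emph{real-interval} bound $\delta\lesssim\n^{-1/2+o(1)}$ from \Cref{hypothesis:local-mp-extended}. This yields $|h'(0)|\lesssim(\delta/R)\log^2(M/\delta)$, so the dependence on the complex bound is merely logarithmic and gets absorbed into the $o(1)$ exponent. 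You also need the analytic continuation of $\tilde\kappa_s$ to complex $s$, which is not automatic; the paper devotes \Cref{lemma:omni-continuation} (Brouwer fixed point, Schwarz lemma, implicit function theorem) to this.
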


In the remainder of this section, we prove \Cref{theorem:gcv-formal}. To be self-contained, we briefly recap the setup, precise assumptions, and some background material in \Cref{sec:theory-prelims}. Next, we prove a general lemma to justify the differentiation step (i.e., \eqref{eq:differentiation}) in \Cref{sec:differentiation}. Then, we prove \Cref{theorem:gcv-formal} via \Cref{proposition:gcv-app,,proposition:hastie-app} in \Cref{sec:gcv-analysis,,sec:reanalysis}.

\subsection{Theoretical Preliminaries}\label{sec:theory-prelims}

\subsubsection{Model}

We recall our basic setup from \Cref{sec:preliminaries}.
We consider a random design model of linear regression, in which covariates $x_i$ are drawn i.i.d.\ from a distribution $\Dist$ over $\R^\p$ with second moment $\Sigma\in\R^{\p\times\p}$.
Labels are generated by a ground truth $\beta\in\R^p$, with the $i$-th label given by $y_i = \beta^\transp x_i$. In this model, the distribution $\Dist$ (and in particular its second moment $\Sigma$) and the ground truth $\beta$ are \emph{unobserved}. Instead, all we observe are $\n$ independent samples $(x_1, y_1),\ldots,(x_\n, y_\n)$.

For our theoretical analysis, we additionally impose the mild assumption that $\lambda\ge N^{-C}$ for some (large) constant $C > 0$.\footnote{This assumption is made for convenience: relaxing it worsens the bound by only a $\log(1/\lambda)$ factor.} Note that, beyond our random matrix hypothesis, we \emph{do not} assume anything about the dimensional ratio $\p/\n$, allowing for it to vary widely, and we \emph{do not} assume anything about the covariate distribution $\Dist$.

For the sake of simplicity, we focus on the case where $\E_{x\sim\Dist}[x] = 0$. We note that our analysis can be extended to obtain a correction for non-zero means via the Sherman-Morrison rank-$1$ update formula, but we do not pursue this extension further at this time. %

\subsubsection{Ridge Regression}

We first recall the notation defined in \Cref{sec:preliminaries}.
Let $X\in\R^{\n\times\p}$ be the matrix of covariates and $y\in\R^\n$ be the vector of labels. The empirical second moment matrix is denoted by $\Sigmahat\coloneqq\frac{1}{\n} X^\transp X$. The eigendecompositions of $\Sigma$ and \smash{$\Sigmahat$} are written as \smash{$\Sigma = \sum_{i=1}^\p \lambda_i v_i v_i^\transp$} and \smash{$\Sigmahat = \sum_{i=1}^\n \hat\lambda_i \hat v_i\hat v_i^\transp$}, respectively, with $\lambda_1\ge\cdots\ge\lambda_\p$ and \smash{$\hat\lambda_1\ge\cdots\ge\hat\lambda_\n$}.

Let \smash{$\hat\beta_\lambda$} be the ridge regression estimator
\[ \hat\beta_\lambda\coloneqq\argmin_{\hat\beta} \frac 1\n \sum_{i=1}^\n \bigl(y_i - \hat\beta^\transp x_i\bigr)^2 + \lambda\|\hat\beta\|_2^2 \]
for $\lambda > 0$, and let $\hat\beta_0\coloneqq\lim_{\lambda\to 0^+} \hat\beta_\lambda$. For $\lambda > 0$, one has the closed form $\hat\beta_\lambda = \paren[\big]{\Sigmahat + \lambda I}^{-1}\frac{1}{\n}X^\transp y = \paren[\big]{\Sigmahat + \lambda I}^{-1}\Sigmahat\beta$.

Given an estimator $\hat\beta\in\R^\p$ for $\beta$, its generalization and empirical risks are
\[ \Risk(\hat\beta)\coloneqq \E_{x\sim\Dist}[(\beta^\transp x - \hat\beta^\transp x)^2]\quad\text{and}\quad\TrainRisk(\hat\beta)\coloneqq \frac{1}{\n} \sum_{i=1}^{\n} (y_i - \hat\beta^\transp x_i)^2, \]
respectively. For ridge regression when $\lambda > 0$, one has the closed form expressions
\begin{equation}\label{eq:closed}
\Risk(\hat\beta_\lambda) = \lambda^2\beta^\transp\paren[\big]{\Sigmahat + \lambda I}^{-1}\Sigma\paren[\big]{\Sigmahat + \lambda I}^{-1}\beta \quad\text{and}\quad\TrainRisk(\hat\beta_\lambda) = \lambda^2\beta^\transp\paren[\big]{\Sigmahat + \lambda I}^{-1}\Sigmahat\paren[\big]{\Sigmahat + \lambda I}^{-1}\beta.
\end{equation}

\subsubsection{The Asymptotic Stieltjes Transform}

To relate our random matrix hypothesis (\Cref{hypothesis:local-mp}) to the existing random matrix literature, we define the $\n$-sample \emph{asymptotic Stieltjes transform} $m$ of $\Sigma$, as $m$ is the more standard object to consider in random matrix theory. We will state a version of \Cref{hypothesis:local-mp} in terms of $m$ and later use the properties of $m$ to analyze the GCV estimator.

Before defining $m$, it is helpful to recall the definition of {effective regularization} $\kappa = \kappa(\lambda, \n)$, for $\lambda > 0$, as the (unique) positive solution to
\begin{equation}\label{eq:kappa-app}
 1 = \frac{\lambda}{\kappa} + \frac 1\n\sum_{i=1}^\p \frac{\lambda_i}{\kappa + \lambda_i}.
\end{equation}
The $\n$-sample asymptotic Stieltjes transform $m$ of $\Sigma$ is the analytic continuation of $m(z) = 1 / \kappa(-z, \n)$ (as a function on the negative reals) to $\C\setminus\R_{\ge 0}$. We define $m$ using an equation similar to \eqref{eq:kappa-app}. Let $\HH\coloneqq\{z\in\C : \Im(z) > 0\}$ denote the complex upper half-plane. For each $z\in\HH$, one can show that there exists a unique solution in $\HH$ to
\[ 1 = -zm + \frac{1}{\n}\sum_{i=1}^{\p} \frac{m\lambda_i}{1 + m \lambda_i}, \]
which we take to be $m(z)$. By the Schwarz reflection principle, this function on $\HH$ has a unique analytic continuation to $\C\setminus\R_{\ge 0}$. A key property of $m$ is that there exists a unique positive measure $\varrho$ on $[0, \infty)$ such that
\begin{equation}\label{eq:stieltjes}
m(z) = \int\frac{d\varrho(x)}{x - z}.
\end{equation}
In other words, $m$ is the \emph{Stieltjes transform} of $\varrho$. This measure $\varrho$ is known as the $\n$-sample \emph{asymptotic eigenvalue density} of $\Sigma$. For proofs of these claims, we refer the reader to \citet{bai2010spectral} and \citet[Section 2.2]{knowles17anisotropic}.

\subsubsection{The Random Matrix Hypothesis}\label{sec:laws}

To make our analysis as general as possible and to make the connection to random matrix theory clear, we give our analysis for any distribution $\Dist$ that satisfies \Cref{hypothesis:local-mp-formal}. This hypothesis is a modern interpretation of the Marchenko-Pastur law and formalizes the heuristic random matrix theory identity\footnote{In comparison, the classical Marchenko-Pastur law \cite{marchenkopastur} derives \smash{$\Tr\paren[\big]{\paren{\lambda I + \Sigmahat}^{-1}}\approx\frac{\kappa}{\lambda}\Tr\paren[\big]{\paren{\kappa I + \Sigma}^{-1}}$} over the complex plane, from which it follows that the spectral measure of \smash{$\Sigmahat$} converges to the measure whose Stieltjes transform is given by the right-hand side.}
\[ \lambda\paren[\big]{\lambda I + \Sigmahat}^{-1}\approx\kappa\paren[\big]{\kappa I + \Sigma}^{-1}. \]
To further connect \Cref{hypothesis:local-mp-formal} to the random matrix literature, we state here a stronger version of \Cref{hypothesis:local-mp-formal} (in that it implies \Cref{hypothesis:local-mp-formal}) that has been shown to hold for commonly studied random matrix models \cite{knowles17anisotropic}. While this stronger hypothesis provides uniform convergence for \emph{complex}-valued $\lambda$, we will only need uniform convergence for $\lambda$ on the positive real line as in \Cref{hypothesis:local-mp-formal}.

Let $\Omega\coloneqq\{z\in\C : \Re(z) < 0\}$. The stronger hypothesis, in terms of the asymptotic Stieltjes transform $m$, is as follows:

\begin{hypothesis}[Local Marchenko-Pastur law over $\Omega\setminus\R$]\label{hypothesis:local-mp-app}
The \emph{local Marchenko-Pastur law} holds over an open set $S\subseteq\Omega\setminus\R$ if for every deterministic vector $v\in\R^\p$ such that $v^\transp\Sigma v\le 1$, both
\begin{equation}\label{eq:global}
\abs*{m(z) - \frac{1}{\n}\sum_{i=1}^{\n} \frac{1}{\hat\lambda_i - z}}\lesssim \n^{-\frac 12 + o(1)}\sqrt{\frac{\Im(m(z))}{\Im(z)}}
\end{equation}
and
\begin{equation}\label{eq:local}
\abs*{v^\transp \bigl(I + m(z)\Sigma\bigr)^{-1} v - v^\transp\bigl(I - z^{-1}\Sigmahat\bigr)^{-1} v} \lesssim \n^{-\frac 12 + o(1)}\sqrt{\frac{\Im(m(z))}{\Im(z)}}.
\end{equation}
hold uniformly over all $z\in S$.
\end{hypothesis}

While we do not make further assumptions, we note that \Cref{hypothesis:local-mp-app} is known to hold under general, \emph{non-asymptotic} assumptions, which subsume the typical random matrix theory assumptions of Gaussian covariates and fixed dimensional ratio $\p/\n$. For instance, \citet[Theorem 3.16 and Remark 3.17]{knowles17anisotropic} show that \Cref{hypothesis:local-mp-app} holds for any open $S\subseteq\Omega\setminus\R$ if the following conditions are satisfied, for an a priori fixed (large) constant $C > 0$:
\begin{itemize}
    \itemsep0em
    \item \emph{Sufficient independence.} The following two assumptions hold:
    \begin{itemize}
        \item The covariates $x\sim\Dist$ are distributed as a linear transformation $T z$ of independent (but not necessarily identically distributed) random variables $z_1,\ldots,z_\p$ such that $\E[z_i] = 0$, and $\E[z_i^2] = 1$ for all $i$.\footnote{The assumption $\E[z_i^2] = I$ is without loss: we can absorb any scaling of $z_i$ into $T$.}\textsuperscript{,}\footnote{To see the necessity of this condition, note that if $x = z_1\cdot (1, 1, \ldots, 1)^\transp$, then we would not obtained the desired convergence.}
        
        \item At least a $C^{-1}$ fraction of the eigenvalues of $\Sigma$ are at least $C^{-1}$, and $\norm{\Sigma}_{\mathrm{op}}\le C$ (i.e., the spectrum of $\Sigma$ is not concentrated at $0$ relative to $\norm{\Sigma}_{\mathrm{op}}$).\footnote{To see the necessity of this condition, note that if we allowed for $T = (1, 1, \ldots, 1)^\transp\cdot (1, 0, 0, \ldots, 0)$ (in which case $\Sigma$ would have only one non-zero eigenvalue), then we would again have $x = z_1 \cdot (1, 1, \ldots, 1)^\transp$.}
    \end{itemize}
    \item \emph{Bounded moments.} The random variables $z_1,\ldots,z_\p$ have uniformly bounded $p$-th moments for all $p < \infty$.
    \item \emph{Bounded domain.} The domain $S$ is such that $C^{-1} \le \abs z\le C$ for all $z\in S$. 
    \item \emph{Log-bounded dimensional ratio\footnote{\citet[Section 2.1]{knowles17anisotropic} note that their results can be obtained under a relaxed dimensional ratio assumption using the techniques of \citet{bloemendal2016principal}.}.} The dimensions $\n$, $\p$ satisfy $\n^{1/C}\le \p\le \n^{C}$.
\end{itemize}
The non-asymptotic nature of the dimensional ratio assumption is particularly relevant to us because $\n$ varies while $\p\gg\n$ is fixed when we study scaling in our empirical setting. As a consequence, the dimensional ratio $\p/\n$ takes on a wide range of values. (In contrast, the classical asymptotic assumptions of $\p\to\infty$ and $\p/\n\to\gamma$ are insufficient for our purposes.)

The following lemma shows that \Cref{hypothesis:local-mp-app} implies \Cref{hypothesis:local-mp-formal} (note the change in sign due to $z = -\lambda$):

\begin{lemma}
Let $S\subseteq\Omega$ be open. If \Cref{hypothesis:local-mp-app} holds on $S\setminus\R$, then \Cref{hypothesis:local-mp-formal} holds on $\{\lambda : -\lambda\in S\cap\R\}$. %
\end{lemma}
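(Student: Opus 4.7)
My plan is to obtain Hypothesis \ref{hypothesis:local-mp-formal} as the boundary limit of Hypothesis \ref{hypothesis:local-mp-app} as $z$ tends to $-\lambda$ from the complex upper half-plane. Since $S$ is open, for each $\lambda$ with $-\lambda \in S \cap \R$, the sequence $z_\epsilon \coloneqq -\lambda + \i\epsilon$ lies in $S \setminus \R$ for all sufficiently small $\epsilon > 0$. I will fix the single high-probability event on which \eqref{eq:global} and \eqref{eq:local} hold uniformly over $z \in S \setminus \R$, show that both sides of each inequality extend continuously in $z$ to $z = -\lambda$, and recover \eqref{eq:global-real-conv} and \eqref{eq:local-real-conv} in the limit $\epsilon \to 0^+$. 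Using the defining relation $m(-\lambda) = 1/\kappa(\lambda, \n)$, the left-hand sides match up directly at $z = -\lambda$: we have $\frac{1}{\n}\sum_i(\hat\lambda_i - z)^{-1}|_{z=-\lambda} = \frac{1}{\n}\sum_i(\hat\lambda_i + \lambda)^{-1}$, along with $(I + m(-\lambda)\Sigma)^{-1} = \kappa(\kappa I + \Sigma)^{-1}$ and $(I - z^{-1}\Sigmahat)^{-1}|_{z=-\lambda} = \lambda(\lambda I + \Sigmahat)^{-1}$. Each of these is continuous in $z$ at $z = -\lambda$, since $\hat\lambda_i, \lambda_i \geq 0$ and $m$ is analytic on $\C \setminus [0, \infty)$.

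\textbf{Limit of the error bound.} The subtle step is computing the limit of $\sqrt{\Im m(z)/\Im z}$, which is of indeterminate form at $z = -\lambda$. Using the Stieltjes integral representation \eqref{eq:stieltjes}, we obtain the identity $\Im m(z)/\Im z = \int d\varrho(x)/|x-z|^2$. Since $\varrho$ is supported on $[0, \infty)$ and $\lambda > 0$, for $z = -\lambda + \i\epsilon$ the integrand is bounded pointwise in $x$ by $(x + \lambda)^{-2}$, whose $\varrho$-integral equals $m'(-\lambda) < \infty$ (as $m$ is analytic at $-\lambda$). Dominated convergence then gives
\begin{equation*}
\lim_{\epsilon \to 0^+} \frac{\Im m(-\lambda + \i\epsilon)}{\Im(-\lambda + \i\epsilon)} = \int \frac{d\varrho(x)}{(x+\lambda)^2} = m'(-\lambda).
\end{equation*}
Differentiating $m(-\lambda) = 1/\kappa(\lambda, \n)$ in $\lambda$ yields $m'(-\lambda) = \kappa^{-2}\,\partial\kappa/\partial\lambda$, so taking square roots shows that the error term on the right-hand sides of \eqref{eq:global} and \eqref{eq:local} converges to $\n^{-1/2 + o(1)} \cdot \kappa^{-1}\sqrt{\partial\kappa/\partial\lambda}$, which is exactly the bound in \eqref{eq:global-real-conv} and \eqref{eq:local-real-conv}.

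\textbf{Conclusion and main obstacle.} Combining the two steps completes the argument: on the single high-probability event where Hypothesis \ref{hypothesis:local-mp-app} holds, taking $z \to -\lambda$ for each $\lambda$ with $-\lambda \in S \cap \R$ yields Hypothesis \ref{hypothesis:local-mp-formal}, with the uniformity over $\lambda$ inherited from uniformity over $z \in S \setminus \R$ because the implicit $\lesssim$-constants do not depend on the approach sequence $\{z_\epsilon\}$. The main obstacle is precisely the limit of the error bound, which is genuinely $0/0$ on the real axis; the Stieltjes representation together with dominated convergence resolves this, and the identification $m'(-\lambda) = \kappa^{-2}\,\partial\kappa/\partial\lambda$ then produces exactly the form of the bound demanded by Hypothesis \ref{hypothesis:local-mp-formal}. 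All other continuity checks (the sum $\frac{1}{\n}\sum_i(\hat\lambda_i - z)^{-1}$ and the two matrix resolvents) are straightforward because the relevant spectra lie in $[0, \infty)$ and $-\lambda < 0$.
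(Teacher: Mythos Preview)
Your proposal is correct and follows essentially the same approach as the paper: take the boundary limit $z = -\lambda + \i\eta \to -\lambda$ and identify the limiting error bound as $\sqrt{m'(-\lambda)} = \kappa^{-1}\sqrt{\partial\kappa/\partial\lambda}$. The only difference is how you evaluate $\lim_{\eta\to 0^+}\Im m(z)/\Im z$: the paper does it in one line via the Cauchy--Riemann equations (since $m$ is real on the negative reals, this limit is $\partial_\eta\Im m(-\lambda) = m'(-\lambda)$), whereas you instead invoke the Stieltjes integral representation and dominated convergence to reach the same value.
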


\begin{proof}
Fix $\lambda\in S$. Consider $z = -\lambda + \i\eta$ in the limit $\eta\to 0^+$. Because $S$ is open, \Cref{hypothesis:local-mp-app} holds for $z = -\lambda + \i\eta$ in a (complex) neighborhood of $\lambda$. Since $m$ maps reals to reals, $\lim_{\eta\to 0^+}{\Im(m(z))}/{\Im(z)} = \smash{\parfrac{}{\eta}}\Im(m(-\lambda)) = m'(-\lambda)$ by the Cauchy-Riemann equations. Moreover, $m'(-\lambda) = \smash{\frac{1}{\kappa^2}\parfrac{\kappa}{\lambda}}$. Hence
\begin{align*}
\abs*{m(-\lambda) - \frac{1}{\n}\sum_{i=1}^{\n} \frac{1}{\hat\lambda_i + \lambda}} 
&= \lim_{\eta\to 0^+}\,\abs*{m(z) - \frac{1}{\n}\sum_{i=1}^{\n} \frac{1}{\hat\lambda_i - z}} \lesssim \lim_{\eta\to 0^+}\n^{-\frac 12+ o(1)}\sqrt{\frac{\Im(m(z))}{\Im(z)}} = \n^{-\frac 12+ o(1)}\cdot\frac{1}{\kappa}\sqrt{\parfrac{\kappa}{\lambda}}
\end{align*}
for \eqref{eq:global-real-conv} and likewise for \eqref{eq:local-real-conv}.
\end{proof}

\subsubsection{The Omniscient Risk Estimate}

Recent works \cite{hastie21surprises, canatar21spectral, wu20optimal, jacot20kernel, loureiro21learning, richards21asymptotics, mel21arbitrary, simon21neural} have shown under a variety of random matrix assumptions that the generalization risk $\Risk(\hat\beta_\lambda)$ of ridge regression can be approximated by the \emph{omniscient risk estimate} $\OmniRisk^\lambda$:
\begin{equation}\label{eq:population-app}
\OmniRisk^\lambda\coloneqq\frac{\partial\kappa}{\partial\lambda}\cdot\kappa^2\sum_{i=1}^\p \biggl(\frac{\lambda_i}{(\kappa + \lambda_i)^2} \bigl(\beta^\transp v_i\bigr)^2 \biggr) = \parfrac{\kappa}{\lambda}\kappa^2\beta^\transp(\Sigma + \kappa I)^{-1}\Sigma(\Sigma + \kappa I)^{-1}\beta.
\end{equation}
The analysis of \citet{hastie21surprises} is the most general of these and establishes \eqref{eq:population-app} under a similar set of assumptions as \Cref{hypothesis:local-mp-app}, with approximation error proportional to $\norm{\beta}^2_2$.

However, in our empirical setting with effectively infinite $\norm{\beta}_2$, we need a stronger version of this result than was previously known. Thus, we improve the result of \citet{hastie21surprises} so that the error bound scales in the expected size of the label $\beta^\transp\Sigma\beta$ rather than the squared norm $\norm{\beta}_2^2$ (see \Cref{proposition:hastie-app}). To prove this generalization requires a more careful analysis, as the analysis of \citet{hastie21surprises} does not directly extend to large $\norm{\beta}_2$.

\subsection{Bounding the Derivative of a Bounded, Real Analytic Function}\label{sec:differentiation}

A key step of our analysis will be arguing that we may differentiate the local random matrix law, as in \eqref{eq:differentiation}, while preserving the approximate equality. In this section, we show a general lemma that lets us accomplish this. Concretely, we will bound the derivative of a bounded, real analytic function. Our approach here streamlines the argument of \citet{hastie21surprises}, allowing for sharper bounds while also being easier to apply. 

Let $h\colon U\to\R$, for some $U\subseteq\R$. (In applications, $h$ will represent the difference of two ``approximately equal'' functions.) Suppose $h$ is real analytic at $x_0$ with radius of convergence $R > 0$. Then $h$ has an analytic continuation $\tilde h$ to the open ball $V\coloneqq\{z\in\C : \abs{z - x_0} < R\}$. Let $K\subseteq V$ be the closed ball $\{z\in\C : \abs{z - x_0}\le\frac 12 R\}$. Given that $h$ and $\smash{\tilde h}$ are bounded on $K\cap\R$ and $K$, respectively, the next lemma bounds $h'(x_0)$ with only a \emph{logarithmic} dependence on the bound on \smash{$\tilde h$}. In our applications, this logarithmic dependence will be negligible: the dominant factor will be the ratio $\delta/R$.

\begin{lemma}\label{lemma:derivative}
Suppose $M\ge \delta > 0$, and $h\colon U\to\R$ is such that $\abs{h(x)}\le\delta$ on $K\cap\R$ and $\abs{\tilde h(z)}\le M$ on $K$. Then,
\[ \abs{h'(x_0)}\lesssim\frac{\delta}{R}\paren*{1 + \log\paren*{\frac M\delta}}^2. \]
\end{lemma}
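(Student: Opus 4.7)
My plan is to approximate $\tilde h$ near $x_0$ by a polynomial whose degree $N$ is chosen to balance two competing bounds, and then to invoke Markov's inequality for polynomials to control the derivative at $0$. After translating so that $x_0 = 0$, write $f = \tilde h$ and expand $f(z) = \sum_{n \ge 0} a_n z^n$; since $h$ is real analytic, the coefficients $a_n$ lie in $\R$. Applying Cauchy's estimates on the circle $\{|z| = R/2\} \subset K$ yields $|a_n| \le M\,(2/R)^n$.

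Next, set $f_N(z) = \sum_{n=0}^N a_n z^n$. For $|z| \le R/4$, the remainder satisfies
\[ |f(z) - f_N(z)| \le \sum_{n > N} M\,(2/R)^n\,(R/4)^n = M \cdot 2^{-N}. \]
Choosing $N = \lceil \log_2(M/\delta) \rceil$ forces this remainder to be at most $\delta$ throughout $\{|z| \le R/4\}$; note $N \lesssim 1 + \log(M/\delta)$. Restricting to the real interval $[-R/4, R/4] \subset K \cap \R$, the hypothesis $|f(x)| \le \delta$ combines with the tail bound to give $|f_N(x)| \le 2\delta$.

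Finally, I apply Markov's inequality for a real polynomial of degree $\le N$ on $[-R/4, R/4]$, which gives
\[ |f_N'(0)| \le \frac{N^2}{R/4} \cdot 2\delta = \frac{8 N^2 \delta}{R}. \]
Since $f'(0) = a_1 = f_N'(0)$, plugging in the choice of $N$ produces the target bound $|h'(x_0)| \lesssim (\delta/R)(1 + \log(M/\delta))^2$.

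The crux of the argument is selecting $N$ to properly balance the two opposing factors: the truncation error shrinks geometrically in $N$, which forces $N \sim \log(M/\delta)$ to bring $M \cdot 2^{-N}$ down to the level of $\delta$, while Markov's inequality contributes a factor of $N^2$. These together are precisely what produce the squared-log overhead $(1 + \log(M/\delta))^2$ in the statement; the only ``genuine'' small parameter is $\delta/R$. The easy edge case $M \approx \delta$ (where $N$ may be $0$ and one approximates $f$ by its constant term) is exactly what the ``$1 + $'' inside the logarithm is there to absorb, and the hypothesis $M \ge \delta$ ensures the logarithm is nonnegative so all of these bounds make sense.
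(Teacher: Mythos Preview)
Your proposal is correct and essentially identical to the paper's proof: both truncate the power series using Cauchy estimates on the circle of radius $R/2$, choose the degree $N\sim\log_2(M/\delta)$ so the tail is $\le\delta$ on $[x_0-R/4,x_0+R/4]$, and then apply the Markov brothers' inequality to the degree-$N$ polynomial. The only cosmetic difference is that the paper takes $k=\lfloor 1+\log_2(M/\delta)\rfloor\ge 1$, which automatically guarantees $f_N'(0)=a_1$; with your choice $N=\lceil\log_2(M/\delta)\rceil$ you may get $N=0$ when $M=\delta$, but then $|a_1|\le 2M/R=2\delta/R$ directly from Cauchy, so the bound still holds.
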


\begin{proof}
Given the power series expansion $h(x) = \sum_{j=0}^\infty c_j (x - x_0)^j$ of $h$ at $x_0$, the Cauchy integral formula tells us that
\[ \abs{c_j} = \abs*{\frac{1}{2\pi i}\int_{\partial K} \frac{\tilde h(z)}{(z - x_0)^{j+1}}\,dz}\le\paren*{\frac{2}{R}}^{j} M. \]
Let $h_k(x)\coloneqq\sum_{j=0}^k c_j (x - x_0)^j$ the $k$-th order Taylor expansion of $h$ at $x_0$. If $I\coloneqq \bigl[x_0 - \frac 14 R, x_0 + \frac 14 R\bigr]$ and $x\in I$, then
\[ \abs{h(x) - h_k(x)} = \abs*{\sum_{j=k+1}^\infty c_j (x - x_0)^j}\le\sum_{j=k+1}^\infty \abs{c_j}\abs{x - x_0}^j\le 2^{-k}M. \]
Let $\norm{\cdot}_\infty$ denote the sup norm for continuous functions $I\to\R$. Setting $k\coloneqq\floor{1 + \log_2(M/\delta)}$, we have by the triangle inequality that $\norm{h_k}_\infty\le\norm{h}_\infty + \norm{h - h_k}_\infty\le 2\delta$. Let $\mathcal{P}_k$ be the vector space of degree $k$ polynomial functions $I\to\R$. The Markov brothers' inequality says that the linear functional $\mathcal{P}_k\to\R$ given by $p\mapsto p'(x_0)$ has operator norm at most $4k^2/R$ with respect to $\norm{\cdot}_\infty$. Hence
\[ \abs{h'(x_0)} = \abs{h'_k(x_0)}\le \frac{4k^2}{R}\norm{h_k}_\infty\lesssim \frac{\delta}{R}\paren*{1 + \log\paren*{\frac{M}{\delta}}}^2. \qedhere \]
\end{proof}

\subsection{Proof of \Cref{proposition:gcv-app}}\label{sec:gcv-analysis}

To prove \Cref{proposition:gcv-app}, we follow the outline in \Cref{sec:theory}.
Define
\[ f(\lambda)\coloneqq\beta^\transp\beta - \beta^\transp\lambda\paren[\big]{\Sigmahat + \lambda I}^{-1}\beta\quad\text{and}\quad g(\lambda)\coloneqq\beta^\transp\beta - \beta^\transp\kappa\paren[\big]{\Sigma + \kappa I}^{-1}\beta. \]
The $\beta^\transp\beta$ terms in $f$ and $g$ ensure that $f$ and $g$ can be bounded, so that we may apply \Cref{lemma:derivative}.
Additionally, define
\[ h(\lambda)\coloneqq f(\lambda) - g(\lambda) = - \beta^\transp\lambda\paren[\big]{\Sigmahat + \lambda I}^{-1}\beta + \beta^\transp\kappa\paren[\big]{\Sigma + \kappa I}^{-1}\beta. \]
Note that $f$, $g$, and $h$ may be analytically continued to take complex arguments $w = \lambda - \i\eta$, since we may take $\kappa = 1 / m(-w)$. We will also need these extended functions when applying \Cref{lemma:derivative}.

Algebraically, the key drivers of our analysis are the relationships obtained from differentiating $f$ and $g$ with respect to $\lambda$:
\begin{align*}
f'(\lambda) &=\beta^\transp\paren[\big]{\Sigmahat + \lambda I}^{-1}\Sigmahat\paren[\big]{\Sigmahat + \lambda I}^{-1}\beta = \frac{1}{\lambda^2}\TrainRisk(\hat\beta_\lambda) = \paren*{\sum_{i=1}^\n\frac{1}{\lambda + \hat\lambda_i}}^2\GCV_\lambda  \\
g'(\lambda) &= \parfrac{\kappa}{\lambda}\beta^\transp\paren[\big]{\Sigma + \kappa I}^{-1}\Sigma\paren[\big]{\Sigma + \kappa I}^{-1}\beta = \frac{1}{\kappa^2}\OmniRisk^\lambda.
\end{align*}
The main technical steps in the analysis will be to bound $\abs{h'(\lambda)} =\abs{f'(\lambda) - g'(\lambda)}$ and $\abs{\kappa^2 f'(\lambda) - \GCV_\lambda}$, so that we may relate $\GCV_\lambda$ and $\OmniRisk^\lambda$. The former we will bound via \Cref{lemma:derivative}; the latter we will bound using \Cref{hypothesis:local-mp-formal}.

\subsubsection{Auxiliary Lemmas}

We now set up the lemmas that let us formalize our heuristic argument from \Cref{sec:theory}.

The next three lemmas note some basic properties of the effective regularization $\kappa$:

\begin{lemma}\label{lemma:kappa}
For all $\lambda > 0$, $\kappa = \kappa(\lambda, \n)$ satisfies
\[ 1\le\parfrac{\kappa}{\lambda}\le\frac{\kappa}{\lambda}\le 1 + \frac{\Tr(\Sigma)}{\n\lambda}. \]
\end{lemma}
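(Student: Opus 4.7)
The plan is to derive everything directly from the defining equation \eqref{eq:kappa-app}, namely
\[ 1 = \frac{\lambda}{\kappa} + \frac{1}{\n}\sum_{i=1}^{\p}\frac{\lambda_i}{\kappa+\lambda_i}, \]
by elementary manipulations and by implicitly differentiating in $\lambda$. There is no real obstacle; the chain of inequalities falls out once one writes down the correct algebraic rearrangements. I would split the argument into three short parts, one for each inequality in the chain.

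First, for the upper bound $\kappa/\lambda \le 1 + \Tr(\Sigma)/(\n\lambda)$, I would multiply \eqref{eq:kappa-app} through by $\kappa$ to get $\kappa = \lambda + \frac{1}{\n}\sum_i \frac{\kappa\lambda_i}{\kappa+\lambda_i}$ and then use $\kappa/(\kappa+\lambda_i)\le 1$ to bound each summand by $\lambda_i$, so that $\kappa \le \lambda + \Tr(\Sigma)/\n$. Dividing by $\lambda$ gives the claim.

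Second, for the two derivative inequalities, I would implicitly differentiate \eqref{eq:kappa-app} in $\lambda$ to obtain
\[ \parfrac{\kappa}{\lambda} = \frac{1/\kappa}{\dfrac{\lambda}{\kappa^{2}} + \dfrac{1}{\n}\sum_{i=1}^{\p}\dfrac{\lambda_i}{(\kappa+\lambda_i)^{2}}}. \]
Both bounds then come from comparing the denominator to a more convenient expression. For $\parfrac{\kappa}{\lambda} \le \kappa/\lambda$, I would just drop the non-negative sum from the denominator, leaving $\lambda/\kappa^{2}$. For $\parfrac{\kappa}{\lambda}\ge 1$, I would use the termwise inequality $\kappa\lambda_i/(\kappa+\lambda_i)^{2} \le \lambda_i/(\kappa+\lambda_i)$ (equivalent to $\lambda_i\ge 0$), so that
\[ \frac{\lambda}{\kappa^{2}} + \frac{1}{\n}\sum_{i=1}^{\p}\frac{\lambda_i}{(\kappa+\lambda_i)^{2}} \le \frac{1}{\kappa}\paren*{\frac{\lambda}{\kappa} + \frac{1}{\n}\sum_{i=1}^{\p}\frac{\lambda_i}{\kappa+\lambda_i}} = \frac{1}{\kappa}, \]
where the final equality is \eqref{eq:kappa-app} itself. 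Substituting back into the formula for $\parfrac{\kappa}{\lambda}$ yields $\parfrac{\kappa}{\lambda}\ge 1$.

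The only thing to double-check is that $\kappa$ is indeed smooth in $\lambda$ so that the implicit differentiation is legitimate; this follows from the implicit function theorem applied to $F(\kappa,\lambda) = \kappa - \lambda - \frac{1}{\n}\sum_i \kappa\lambda_i/(\kappa+\lambda_i)$, whose $\kappa$-derivative at the solution is strictly positive (it is precisely the reciprocal of $\partial\kappa/\partial\lambda$ computed above, which we have just shown is finite and positive). No step requires anything beyond these direct manipulations, so no genuine obstacle appears.
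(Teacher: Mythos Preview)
Your proof is correct and follows essentially the same approach as the paper: both use the defining equation \eqref{eq:kappa-app} directly for the bound on $\kappa/\lambda$, then implicitly differentiate to get $\partial\kappa/\partial\lambda$ and bound it via elementary termwise inequalities. The only cosmetic difference is that the paper differentiates the rearranged form $\kappa = \lambda + \frac{1}{\n}\sum_i \kappa\lambda_i/(\kappa+\lambda_i)$ to obtain $\partial\kappa/\partial\lambda = \bigl(1 - \frac{1}{\n}\sum_i \lambda_i^2/(\kappa+\lambda_i)^2\bigr)^{-1}$, from which $\partial\kappa/\partial\lambda\ge 1$ is immediate and $\partial\kappa/\partial\lambda\le\kappa/\lambda$ uses the defining equation; in your form the roles are reversed, but the two expressions for $\partial\kappa/\partial\lambda$ are equal by \eqref{eq:kappa-app} itself.
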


\begin{proof}
Rearranging \eqref{eq:kappa-app} gives us
\[ \kappa = \lambda + \frac{1}{\n}\sum_{i=1}^\p \lambda_i\paren*{1 - \frac{\lambda_i}{\kappa + \lambda_i}}\le\lambda + \frac{\Tr(\Sigma)}{\n}. \]
Dividing by $\lambda$ immediately yields \smash{$\frac{\kappa}{\lambda}\le 1 + \frac{\Tr(\Sigma)}{\n\lambda}$}. To get the first two inequalities, we compute $\parfrac{\kappa}{\lambda}$. By the implicit function theorem applied to \eqref{eq:kappa-app}, $\parfrac{\kappa}{\lambda}$ satisfies
\[ \parfrac{\kappa}{\lambda} = 1 + \parfrac{\kappa}{\lambda}\cdot\frac{1}{\n}\sum_{i=1}^\p\frac{\lambda_i^2}{(\kappa + \lambda_i)^2}. \]
Solving for $\parfrac{\kappa}{\lambda}$, we obtain
\begin{equation}\label{eq:dkappa}
\parfrac{\kappa}{\lambda} = \frac{1}{1 - \frac{1}{\n}\sum_{i=1}^\p \frac{\lambda_i^2}{(\kappa + \lambda_i)^2}}.
\end{equation}
From here, it is clear that $\parfrac{\kappa}{\lambda}\ge 1$. And the upper bound $\parfrac{\kappa}{\lambda}\le\frac{\kappa}{\lambda}$ follows from the fact that
\[ 1 - \frac{1}{\n}\sum_{i=1}^\p\frac{\lambda_i^2}{(\kappa + \lambda_i)^2}\ge 1 - \frac{1}{\n}\sum_{i=1}^{\p}\frac{\lambda_i}{\kappa + \lambda_i} = \frac{\lambda}{\kappa}. \qedhere \]
\end{proof}

\begin{lemma}\label{lemma:kappareal}
Suppose $\kappa = \kappa(\lambda, \n)$ and $\tilde\kappa = 1 / m(-\lambda + \i\eta)$ for $\lambda, \eta > 0$. Then $\Re(\tilde\kappa)\ge\kappa$.
\end{lemma}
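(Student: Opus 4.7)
The plan is to convert the defining equations for $\kappa$ and $\tilde\kappa$ into a common fixed-point form and then extract $\Re(\tilde\kappa) - \kappa$ from a single linear equation in $u \coloneqq \tilde\kappa - \kappa$. First, I would multiply \eqref{eq:kappa-app} through by $\kappa$ and use the identity $\lambda_j\kappa/(\kappa + \lambda_j) = \lambda_j - \lambda_j^2/(\kappa+\lambda_j)$ to rewrite the defining equation as
\[ \kappa = \lambda + \frac{\Tr(\Sigma)}{\n} - \frac{1}{\n}\sum_{j=1}^\p \frac{\lambda_j^2}{\kappa + \lambda_j}. \]
Applying the same manipulation to $1 = -zm + \tfrac{1}{\n}\sum_j m\lambda_j/(1+m\lambda_j)$ at $z = -\lambda + \i\eta$ with $\tilde\kappa = 1/m(z)$ produces the complex analogue
\[ \tilde\kappa = (\lambda - \i\eta) + \frac{\Tr(\Sigma)}{\n} - \frac{1}{\n}\sum_{j=1}^\p \frac{\lambda_j^2}{\tilde\kappa + \lambda_j}. \]

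Next, I would subtract these two identities and apply the telescoping $1/(\kappa+\lambda_j) - 1/(\tilde\kappa+\lambda_j) = u/[(\kappa+\lambda_j)(\tilde\kappa+\lambda_j)]$, which collapses everything into the scalar equation
\[ u(1 - A) = -\i\eta,\qquad A \coloneqq \frac{1}{\n}\sum_{j=1}^\p \frac{\lambda_j^2}{(\kappa + \lambda_j)(\tilde\kappa + \lambda_j)}. \]
The crux of the argument is then determining the sign of $\Im(A)$. By construction $m(z) \in \HH$, so $\Im(m) > 0$ and therefore $\Im(\tilde\kappa) = -\Im(m)/\abs{m}^2 < 0$. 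Each $\kappa + \lambda_j$ is real and positive, so $\Im\bigl(1/[(\kappa+\lambda_j)(\tilde\kappa+\lambda_j)]\bigr) = -\Im(\tilde\kappa)/[(\kappa+\lambda_j)\abs{\tilde\kappa+\lambda_j}^2] > 0$, which gives $\Im(A) > 0$.

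Finally, writing $1 - A = c + \i d$ with $d = -\Im(A) < 0$ and computing the real part of $u = -\i\eta/(1-A)$ yields
\[ \Re(\tilde\kappa) - \kappa = \Re(u) = \frac{\eta\,\Im(A)}{\abs{1-A}^2} \ge 0, \]
which is the desired inequality. Two minor points need to be checked along the way: that $1 - A \ne 0$ (automatic, since otherwise the scalar equation would force $\eta = 0$), and that the two fixed-point rewrites are truly equivalent to the original defining equations (a routine algebraic check). I expect the only delicate step to be this algebraic bookkeeping that reduces the two defining equations to the single linear equation $u(1-A) = -\i\eta$; once that is in hand, the sign analysis follows mechanically from $\Im(m(z)) > 0$.
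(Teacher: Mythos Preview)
Your proof is correct and takes a genuinely different route from the paper's. The paper takes the real part of the fixed-point equation for $\tilde\kappa$, applies the pointwise inequality $\Re\bigl(\lambda_j/(\tilde\kappa+\lambda_j)\bigr)\le \lambda_j/(\Re(\tilde\kappa)+\lambda_j)$, and then concludes by a monotonicity argument: the map $\phi(\kappa')=\lambda+\frac{1}{\n}\sum_j\lambda_j\bigl(1-\lambda_j/(\kappa'+\lambda_j)\bigr)$ satisfies $\kappa'<\phi(\kappa')$ for $\kappa'\in[0,\kappa)$, so the inequality $\Re(\tilde\kappa)\ge\phi(\Re(\tilde\kappa))$ forces $\Re(\tilde\kappa)\ge\kappa$. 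Your argument instead subtracts the two fixed-point equations, collapses the difference into the single scalar relation $u(1-A)=-\i\eta$, and reads off the sign of $\Re(u)$ from $\Im(A)>0$. This is more algebraic and has the bonus of producing the explicit formula $\Re(\tilde\kappa)-\kappa=\eta\,\Im(A)/\abs{1-A}^2$, whereas the paper's argument is qualitative. The paper's pointwise inequality tacitly uses $\Re(\tilde\kappa)+\lambda_j>0$; your route sidesteps that, needing only $\Im(\tilde\kappa)\ne 0$ to ensure each $\tilde\kappa+\lambda_j\ne 0$, which you already get from $\Im(m)>0$.
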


\begin{proof}
Note that $\Re(\tilde\kappa)$ satisfies
\[ \Re(\tilde\kappa) = \lambda + \frac{1}{\n}\sum_{i=1}^\p \lambda_i\paren*{1 - \Re\paren*{\frac{\lambda_i}{\tilde\kappa + \lambda_i}}}\ge \lambda + \frac{1}{\n}\sum_{i=1}^\p \lambda_i\paren*{1 - {\frac{\lambda_i}{\Re(\tilde\kappa) + \lambda_i}}}. \]
On the other hand, since $\kappa$ is the unique positive solution to \eqref{eq:kappa-app} and $0 < \lambda$, it holds for all $\kappa'\in [0, \kappa)$ that
\[ \kappa' < \lambda + \frac{1}{\n}\sum_{i=1}^\p \lambda_i\paren*{1 - \frac{\lambda_i}{\kappa' + \lambda_i}}. \]
Therefore, it must be the case that $\Re(\tilde\kappa)\ge\kappa$.
\end{proof}

\begin{lemma}\label{lemma:comparison}
Suppose $\lambda > 0$, and let $\kappa = \kappa(\lambda, \n)$. If $\lambda' > \frac 12\lambda$, then
\[ \frac{1}{\kappa(\lambda', \n)}\sqrt{\parfrac{\kappa}{\lambda}(\lambda', \n)} \lesssim\frac{1}{\kappa}\sqrt{\parfrac{\kappa}{\lambda}}. \]
\end{lemma}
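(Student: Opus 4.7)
The plan is to pass to the Stieltjes-transform representation of $\kappa$, where the quantity $\frac{1}{\kappa^2}\parfrac{\kappa}{\lambda}$ becomes an easily-comparable integral. Recall from \Cref{sec:theory-prelims} that $\kappa(\lambda, \n) = 1/m(-\lambda)$, where $m$ is the $\n$-sample asymptotic Stieltjes transform of $\Sigma$, and that, by \eqref{eq:stieltjes}, $m(-\lambda) = \int \frac{d\varrho(x)}{x+\lambda}$ for a positive measure $\varrho$ supported on $[0,\infty)$.

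First, I would differentiate this integral representation in $\lambda$ and combine with $\kappa = 1/m(-\lambda)$ via the chain rule to obtain the key identity
\[ \frac{1}{\kappa(\lambda,\n)^2}\parfrac{\kappa}{\lambda}(\lambda, \n) = \int_0^\infty \frac{d\varrho(x)}{(x+\lambda)^2}. \]
After squaring, the inequality to prove reduces to
\[ \int_0^\infty \frac{d\varrho(x)}{(x+\lambda')^2} \lesssim \int_0^\infty \frac{d\varrho(x)}{(x+\lambda)^2}. \]

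Next, I would establish this via a pointwise comparison on the integrand. For $x \ge 0$ and $\lambda' > \lambda/2$, one has $x+\lambda' > x + \lambda/2 \ge (x+\lambda)/2$ (the last step using $x/2 \ge 0$), hence $(x+\lambda')^{-2} \le 4(x+\lambda)^{-2}$. Integrating against $d\varrho$ and taking square roots then yields the claimed bound with an absolute constant (in fact, constant $2$).

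There is no real obstacle here beyond correctly identifying $\frac{1}{\kappa^2}\parfrac{\kappa}{\lambda}$ with the Stieltjes integral, which relies on the positivity and support of $\varrho$ built into the preliminaries. Everything else is an elementary pointwise estimate; note in particular that one does not need any input from \Cref{hypothesis:local-mp-formal}, since $m$ and $\varrho$ are defined purely in terms of $\Sigma$ and $\n$.
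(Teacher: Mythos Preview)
Your proof is correct and essentially identical to the paper's: both identify $\frac{1}{\kappa}\sqrt{\parfrac{\kappa}{\lambda}}=\sqrt{m'(-\lambda)}$, rewrite $m'(-\lambda)=\int\frac{d\varrho(x)}{(x+\lambda)^2}$ via the Stieltjes representation, and finish with the same pointwise integrand comparison (the paper phrases it as $\min(t,1)/(x+t\lambda)\le 1/(x+\lambda)$ for $t=\lambda'/\lambda$, which gives the same constant $4$).
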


\begin{proof}
The left- and right-hand sides of the desired inequality are simply $\sqrt{m'(-\lambda')}$ and $\sqrt{m'(-\lambda)}$, respectively. Define $t\coloneqq\lambda' / \lambda$. Then it suffices to show $m'(-t\lambda)\lesssim m'(-\lambda)$ for all $t > \frac 12$. By the integral representation \eqref{eq:stieltjes} of $m$,
\[ m'(-t\lambda) = \int \frac{d\varrho(x)}{(x + t\lambda)^2}\le \frac{1}{(\min(t, 1))^2}\int\frac{d\varrho(x)}{(x + \lambda)^2}\lesssim m'(-\lambda), \]
where we use the elementary inequality $\min(t, 1) / (x + t\lambda)\le 1 / (x + \lambda)$.
\end{proof}

Using properties of $\kappa$, we bound $\abs{f(w)}$ and $\abs{g(w)}$ for complex $w = \lambda - \ i\eta$ so that we may later apply \Cref{lemma:derivative}.

\begin{lemma}\label{lemma:gcv-bounded}
Suppose $\beta^\transp\Sigma\beta\le 1$. Then functions $f$ and $g$ satisfies the bounds
\[ \E\brack*{\sup_{\Re(w)\ge\lambda_0} \abs{f(w)}}\le\frac{1}{\lambda_0}\quad\text{and}\quad\sup_{\Re(w)\ge\lambda_0} \abs{g(w)}\le\frac{1}{\lambda_0}. \]
\end{lemma}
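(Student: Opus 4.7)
The plan is to first recast $f$ and $g$ in a form where the bound becomes transparent. Using the resolvent identity $w(\Sigmahat + wI)^{-1} = I - \Sigmahat(\Sigmahat + wI)^{-1}$, we obtain $f(w) = \beta^\transp\Sigmahat(\Sigmahat + wI)^{-1}\beta$, and the analogous manipulation gives $g(w) = \beta^\transp\Sigma(\Sigma + \tilde\kappa(w)I)^{-1}\beta$, where $\tilde\kappa(w)\coloneqq 1/m(-w)$ is the analytic continuation of $\kappa$. Both functions are holomorphic on the open half-plane $\{\Re(w) > 0\}$: for $f$ because $\Sigmahat \succeq 0$, and for $g$ because \Cref{lemma:kappareal} combined with \Cref{lemma:kappa} gives $\Re(\tilde\kappa(w)) \geq \kappa(\Re(w),\n) > 0$, keeping $\Sigma + \tilde\kappa(w) I$ invertible.

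For the $f$ bound, I would diagonalize $\Sigmahat$ and estimate pointwise:
\[ \abs{f(w)} \leq \sum_{i=1}^{\n} \frac{\hat\lambda_i}{\abs{\hat\lambda_i + w}}(\beta^\transp \hat v_i)^2. \]
Since $\hat\lambda_i \geq 0$ and $\Re(w) \geq \lambda_0 > 0$, we have $\abs{\hat\lambda_i + w} \geq \Re(\hat\lambda_i + w) \geq \lambda_0$, so each factor $\frac{\hat\lambda_i}{\abs{\hat\lambda_i + w}} \leq \frac{\hat\lambda_i}{\lambda_0}$. The bound collapses to $\abs{f(w)} \leq \beta^\transp\Sigmahat\beta/\lambda_0$, uniformly over the entire half-plane $\{\Re(w) \geq \lambda_0\}$. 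Taking the supremum and then the expectation, $\E[\beta^\transp\Sigmahat\beta] = \beta^\transp\Sigma\beta \leq 1$ yields the stated bound on $\E\bigl[\sup \abs{f(w)}\bigr]$.

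For the $g$ bound, I would diagonalize $\Sigma$ to get $\abs{g(w)} \leq \sum_i \frac{\lambda_i}{\abs{\lambda_i + \tilde\kappa(w)}}(\beta^\transp v_i)^2$. The key input is $\Re(\tilde\kappa(w)) \geq \lambda_0$, which follows from \Cref{lemma:kappareal} applied in the upper half-plane (and extended to the lower half-plane by Schwarz reflection, since $m(\bar z) = \overline{m(z)}$), chained with \Cref{lemma:kappa}'s inequality $\kappa(\Re(w),\n) \geq \Re(w) \geq \lambda_0$. Then $\abs{\lambda_i + \tilde\kappa(w)} \geq \lambda_i + \Re(\tilde\kappa(w)) \geq \lambda_0$, giving $\abs{g(w)} \leq \beta^\transp\Sigma\beta/\lambda_0 \leq 1/\lambda_0$. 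This bound is already deterministic, matching the statement.

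There is no substantive obstacle here: the argument is essentially eigenbasis bookkeeping combined with the positivity facts already provided by \Cref{lemma:kappa,lemma:kappareal}. The only point that requires a moment of care is the extension of \Cref{lemma:kappareal} to the lower half-plane, which is automatic from the real-analyticity of $m$ on $\R_{<0}$ via the Schwarz reflection principle.
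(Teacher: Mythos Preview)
Your proof is correct and essentially the same as the paper's. The paper rewrites $f(w)$ in kernel form as $y^\transp(XX^\transp + \n w I)^{-1}y$ and bounds via the operator norm of the normal matrix $(\tfrac1\n XX^\transp + wI)^{-1}$, arriving at $\tfrac{1}{\lambda_0}\cdot\tfrac1\n\norm{y}_2^2$; since $\tfrac1\n\norm{y}_2^2 = \beta^\transp\Sigmahat\beta$, this is exactly your eigenbasis bound. For $g$, the paper likewise uses $\norm{(\Sigma+\kappa I)^{-1}}_{\mathrm{op}}\le 1/\Re(\kappa)$ and invokes \Cref{lemma:kappareal,lemma:kappa} to get $\Re(\kappa)\ge\lambda_0$, which is your diagonalization argument phrased in operator-norm language.
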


\begin{proof}
We first bound $f(w)$ as follows:
\begin{align*}
\abs{f(w)}
&= \abs*{\beta^\transp \Sigmahat \paren[\big]{\Sigmahat + w I}^{-1} \beta}
= \abs*{y^\transp\paren*{XX^\transp + \n\cdot w I}^{-1}y}
\le \norm*{\paren*{XX^\transp + \n\cdot wI}^{-1}}_{\mathrm{op}}\norm{y}_2^2\le\frac{1}{\lambda_0}\frac{1}{\n}\sum_{i=1}^\n y_i^2.
\end{align*}
Note that we used the fact that $\paren[\big]{\frac{1}{\n}XX^\transp + wI}^{-1}$ is normal to bound its operator norm by its spectral radius.
Our bound on $\abs{f(w)}$ holds uniformly over all $w$ such that $\Re(w)\ge\lambda_0$. Hence, taking an expectation, we have
\[ \E\brack*{\sup_{\Re(w)\ge\lambda_0} \abs{f(w)}}\le\E\brack*{\frac{1}{\lambda_0}\frac{1}{\n}\sum_{i=1}^\n y_i^2} = \frac{1}{\lambda_0}\beta^\transp\Sigma\beta\le\frac{1}{\lambda_0}. \]
We also have
\[ \abs{g(w)} = \abs*{\beta^\transp\Sigma\paren[\big]{\Sigma + \kappa I}^{-1}} = \abs*{\beta^\transp\Sigma^{1/2}\paren[\big]{\Sigma + \kappa I}^{-1}\Sigma^{1/2}\beta}\le\beta^\transp\Sigma\beta\cdot\norm*{\paren[\big]{\Sigma + \kappa I}^{-1}}_{\mathrm{op}}\le\frac{1}{\Re(\kappa)}\le\frac{1}{\lambda_0}, \]
where the last inequality follows from \Cref{lemma:kappareal,,lemma:kappa}.
\end{proof}

\subsubsection{Proof of \Cref{proposition:gcv-app}}

\begin{proof}[Proof of \Cref{proposition:gcv-app}]
We first bound $\abs{h'(\lambda)} = \abs{f'(\lambda) - g'(\lambda)}$ by applying \Cref{lemma:derivative} to $h$ with $U \coloneqq (0, 2\lambda)$. By \Cref{hypothesis:local-mp-formal,,lemma:comparison}, we may take \[ \delta\lesssim \n^{-\frac 12 + o(1)}\frac{1}{\kappa}\sqrt{\parfrac{\kappa}{\lambda}}. \] And by \Cref{lemma:gcv-bounded}, $\abs{g(w)}\le 1/\lambda$ when $\Re(w)\ge\frac 12\lambda$. Setting $M = \n^D/\lambda$, \Cref{lemma:gcv-bounded} together with Markov's inequality gives us the high probability bound
\[ \Pr\brack*{\sup_{\Re(w)\ge\frac12\lambda}\abs{f(w)}\ge M}\le\n^{-D}. \]
Therefore, by \Cref{lemma:derivative},
\begin{equation}\label{eq:gcv-proof}
\abs*{f'(\lambda) - g'(\lambda)} = \abs*{h'(\lambda)}\lesssim\frac{\delta}{\lambda}\log\paren*{\frac{M}{\delta}}\lesssim\n^{-\frac 12 + o(1)}\cdot\frac{1}{\lambda\kappa}\sqrt{\parfrac{\kappa}{\lambda}}.
\end{equation}

We now bound the error of $\GCV_\lambda$.
Substituting the closed form \eqref{eq:closed} for $\TrainRisk(\hat\beta_\lambda)$ into the definition of $\GCV_\lambda$, we have that
\[ \GCV_\lambda = \paren*{\frac{1}{\n}\sum_{i=1}^\n\frac{1}{\lambda + \hat\lambda_i}}^{-2}\beta^\transp\paren[\big]{\Sigmahat + \lambda I}^{-1}\Sigmahat\paren[\big]{\Sigmahat + \lambda I}^{-1}\beta. \]
Let 
${\hat\kappa\coloneqq \Big(\frac{1}{\n}\sum_{i=1}^\n \frac{1}{\lambda + \hat\lambda_i}\smash{\Big)^{-1}}}$.
By \Cref{hypothesis:local-mp-formal},
\[ \abs*{1 - \frac{\kappa}{\hat\kappa}} \lesssim
\n^{-\frac 12 + o(1)}\sqrt{\parfrac{\kappa}{\lambda}}. \]
For sufficiently large $\n$, the right-hand side is less than $\frac 12$, which implies $\kappa\ge \frac 12\hat\kappa$. Therefore,
\begin{align*}
\abs*{\kappa^2 - \hat\kappa^{2}}
\le (\kappa + \hat\kappa)\cdot\abs*{\kappa - \hat\kappa}
\le 3\kappa\cdot\hat\kappa\cdot\abs*{1 - \frac{\kappa}{\hat\kappa}}
&\lesssim \kappa^2 \n^{-\frac12 + o(1)}\sqrt{\parfrac{\kappa}{\lambda}}.
\end{align*}
This yields the comparison
\begin{align*}
\Big|\GCV_\lambda - \kappa^2 f'(\lambda)\Big|
\lesssim f'(\lambda)\cdot\kappa^2\n^{-\frac12 + o(1)}\sqrt{\parfrac{\kappa}{\lambda}} 
\lesssim g'(\lambda)\cdot\kappa^2\n^{-\frac12 + o(1)}\sqrt{\parfrac{\kappa}{\lambda}} %
&\le \n^{-\frac 12 + o(1)}\paren*{\parfrac{\kappa}{\lambda}}^{3/2} %
\end{align*}
where we applied \eqref{eq:gcv-proof} to get the third expression. We further have from \eqref{eq:gcv-proof} that
\[ \abs*{\kappa^2 f'(\lambda) - \OmniRisk^\lambda}\lesssim\n^{-\frac 12 + o(1)}\cdot\frac{\kappa}{\lambda}\sqrt{\parfrac{\kappa}{\lambda}}. \]
Thus, the triangle inequality followed by \Cref{lemma:kappa} implies
\[ \abs*{\GCV_\lambda - \OmniRisk^\lambda}\lesssim \n^{-\frac12 + o(1)}\paren*{\paren*{\parfrac{\kappa}{\lambda}}^{3/2} + \frac{\kappa}{\lambda}\sqrt{\parfrac{\kappa}{\lambda}}}\lesssim \n^{-\frac 12 + o(1)}\paren*{1 + \frac{\Tr(\Sigma)}{\n\lambda}}^{3/2}. \qedhere \]
\end{proof}

\subsection{Proof of \Cref{proposition:hastie-app}}\label{sec:reanalysis}

As we did for \Cref{proposition:gcv-app}, we first outline a heuristic proof. Let $U \coloneqq (-\frac 12\norm{\Sigma}_{\mathrm{op}}^{-1}, \frac 12\norm{\Sigma}_{\mathrm{op}}^{-1})$. For $t\in U$ and $\lambda > 0$,
let $\tilde\kappa = \tilde\kappa(t, \lambda, \n)$ denote the asymptotic Stieltjes transform associated to the covariance matrix $\Sigma(I + t\Sigma)^{-1}$, and
define
\begin{align*}
f(t)&\coloneqq \beta^\transp \paren[\big]{I + t\Sigma}^{-1} \beta - \beta^\transp \lambda \paren[\big]{\Sigmahat + \lambda (I + t\Sigma)}^{-1} \beta, \\
g(t)&\coloneqq \beta^\transp \paren[\big]{I + t\Sigma}^{-1} \beta - \beta^\transp \tilde\kappa \paren[\big]{\Sigma + \tilde\kappa (I + t\Sigma)}^{-1} \beta,
\end{align*}
and 
\[ h(t)\coloneqq f(t) - g(t) = -\beta^\transp \lambda \paren[\big]{\Sigmahat + \lambda (I + t\Sigma)}^{-1} \beta + \beta^\transp \tilde\kappa \paren[\big]{\Sigma + \tilde\kappa (I + t\Sigma)}^{-1} \beta. \]
Letting $\widetilde m = 1 / \tilde\kappa$, note that
\[ f'(0)  = \lambda^2\beta^\transp\paren[\big]{\Sigmahat + \lambda I}^{-1}\Sigma\paren[\big]{\Sigmahat + \lambda I}^{-1}\beta = \Risk(\hat\beta_\lambda)\quad\text{and}\quad g'(0) = \paren*{1 + \parfrac{\widetilde m}{t}}\kappa^2\beta^\transp\paren[\big]{\Sigma + \kappa I}^{-1}\Sigma\paren[\big]{\Sigma + \kappa I}^{-1}\beta. \]
We will show that $1 + \parfrac{\widetilde m}{t} = \parfrac{\kappa}{\lambda}$ (see \Cref{lemma:mtilde}), in which case $g'(0) = \OmniRisk^\lambda$.
\Cref{proposition:hastie-app} thus follows, predicated on $h(t)\approx 0$ and differentiation preserving the approximate equality.

\subsubsection{Auxiliary Lemmas}

We now set up the lemmas that let us formalize this heuristic argument. First, we show that $h(t)\approx 0$.

\begin{lemma}\label{lemma:omni-equal}
Suppose $\beta^\transp\Sigma\beta\le 1$ and \Cref{hypothesis:local-mp-extended} holds over $S = (\frac 12\lambda, \frac 32\lambda)$. Then,
\[ \abs[\Big]{\tbeta^\transp \lambda \paren[\big]{\tSigmahat + \lambda I}^{-1} \tbeta - \tbeta^\transp \tkappa\paren[\big]{\tSigma + \tkappa I}^{-1} \tbeta}\lesssim \n^{-\frac 12 + o(1)}\cdot\frac{1}{\tkappa}\sqrt{\parfrac{\tkappa}{\lambda}}. \qedhere \]
\end{lemma}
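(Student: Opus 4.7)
The conclusion is exactly the anisotropic local law \eqref{eq:local-real-conv} applied to the \emph{transformed} problem, with test vector $v = \tbeta$. My plan is therefore a short bookkeeping argument: identify the transformed objects, verify the test-vector normalization, then invoke \Cref{hypothesis:local-mp-extended}.

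First, I would unpack the transformed setup. For $t$ in the range prescribed by \Cref{hypothesis:local-mp-extended}, the rescaled covariate $z = (I+t\Sigma)^{-1/2}x$ has population covariance $\tSigma = (I+t\Sigma)^{-1/2}\Sigma(I+t\Sigma)^{-1/2}$ (which equals $\Sigma(I+t\Sigma)^{-1}$ since the factors commute) and empirical second moment $\tSigmahat = (I+t\Sigma)^{-1/2}\Sigmahat(I+t\Sigma)^{-1/2}$. The label-preserving transform of the ground truth is $\tbeta = (I+t\Sigma)^{1/2}\beta$, so that $\tbeta^\transp z = \beta^\transp x$, and $\tkappa = \tkappa(t,\lambda,\n)$ is by construction the effective regularization for the $\tSigma$-problem at $\lambda$.

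Second, I would verify that $\tbeta$ satisfies the deterministic-vector hypothesis for the transformed distribution. Because $\Sigma$ commutes with $(I+t\Sigma)^{1/2}$, a short computation yields $\tbeta^\transp \tSigma \tbeta = \beta^\transp (I+t\Sigma)^{1/2}\tSigma(I+t\Sigma)^{1/2}\beta = \beta^\transp \Sigma \beta \le 1$, so $\tbeta$ is a valid test vector.

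Finally, \Cref{hypothesis:local-mp-extended} guarantees that \Cref{hypothesis:local-mp-formal} holds for the distribution of $z$ uniformly over $S = (\frac{1}{2}\lambda, \frac{3}{2}\lambda)$. Applying its anisotropic bound \eqref{eq:local-real-conv} at the given $\lambda$ with $v = \tbeta$ immediately produces the stated inequality, since the error factor $\frac{1}{\tkappa}\sqrt{\partial\tkappa/\partial\lambda}$ is by definition the local-law error term for the transformed problem at that $\lambda$. The argument is essentially obstruction-free: the lemma is a notational repackaging of \Cref{hypothesis:local-mp-extended}, and the only nontrivial step is the algebraic identity $\tbeta^\transp \tSigma \tbeta = \beta^\transp \Sigma \beta$ that certifies $\tbeta$ as a valid test vector. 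The real work---which this lemma sidesteps---is baked into \Cref{hypothesis:local-mp-extended} itself and will surface in subsequent lemmas that must differentiate the resulting identity in $t$ to recover $\OmniRisk^\lambda$.
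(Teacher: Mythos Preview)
Your overall plan---apply the anisotropic local law \eqref{eq:local-real-conv} from \Cref{hypothesis:local-mp-extended} to the transformed problem with test vector $\tbeta$---is exactly the paper's approach. However, you have misidentified $\tbeta$. In the paper's setup, $\tbeta \coloneqq (I+t\Sigma)^{-1/2}\beta$, not the label-preserving $(I+t\Sigma)^{1/2}\beta$ you propose. This choice is not incidental: with $Q = I+t\Sigma$, the paper's $\tbeta$ gives
\[
\tbeta^\transp \lambda\bigl(\tSigmahat+\lambda I\bigr)^{-1}\tbeta
= \beta^\transp Q^{-1/2}\,\lambda\bigl(Q^{-1/2}\Sigmahat Q^{-1/2}+\lambda I\bigr)^{-1}Q^{-1/2}\beta
= \beta^\transp \lambda\bigl(\Sigmahat+\lambda Q\bigr)^{-1}\beta,
\]
which is precisely the term in $h(t)=f(t)-g(t)$ that must later be differentiated in $t$ to produce $\Risk(\hat\beta_\lambda)$. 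Your $\tbeta = Q^{1/2}\beta$ would instead produce $\beta^\transp Q\,\lambda(\Sigmahat+\lambda Q)^{-1}Q\beta$, a different quantity that does not connect to $h(t)$.

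With the correct $\tbeta$, your normalization step must also change: the clean identity $\tbeta^\transp\tSigma\tbeta = \beta^\transp\Sigma\beta$ no longer holds. Instead one gets
\[
\tbeta^\transp\tSigma\tbeta = \beta^\transp Q^{-2}\Sigma\beta = \beta^\transp\Sigma^{1/2}Q^{-2}\Sigma^{1/2}\beta \le \norm{Q^{-1}}_{\mathrm{op}}^2\cdot\beta^\transp\Sigma\beta \le 4,
\]
using $Q\succeq\tfrac12 I$ for $t\in U$. This $O(1)$ constant is absorbed into $\lesssim$, and the remainder of your argument then goes through unchanged.
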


\begin{proof}
Let $Q\coloneqq I + t\Sigma$. That $t\in U$ implies $Q\succeq\frac 12 I$. Further, define $\tSigma\coloneqq Q^{-\frac 12}\Sigma Q^{-\frac 12}$, $\tX\coloneqq XQ^{-\frac 12}$, \smash{$\tSigmahat\coloneqq\frac{1}{\n} \tX^\transp\tX$}, and \smash{$\tbeta\coloneqq Q^{-\frac 12}\beta$}. Note that
\begin{align*}
h(t)
&= -\beta^\transp Q^{-\frac 12}\lambda \paren[\big]{Q^{-\frac 12}\Sigmahat Q^{-\frac12} + \lambda I}^{-1} Q^{-\frac12} \beta + \beta^\transp Q^{-\frac12} \tilde\kappa \paren[\big]{Q^{-\frac12}\Sigma Q^{-\frac12}+ \tilde\kappa I}^{-1} Q^{-\frac12}\beta \\
&= -\tbeta^\transp \lambda \paren[\big]{\tSigmahat + \lambda I}^{-1} \tbeta + \tbeta^\transp \tkappa\paren[\big]{\tSigma + \tkappa I}^{-1} \tbeta.
\end{align*}
Because $Q$ and $\Sigma$ commute, $\tbeta^\transp\tSigma\tbeta = \beta^\transp\Sigma^{\frac12}Q^{-2}\Sigma^{\frac12}\beta\le\norm{Q^{-1}}_{\mathrm{op}}^2\le 4$. By \Cref{hypothesis:local-mp-extended}, since $\tSigma = \Sigma(I + t\Sigma)^{-1}$,
\[ \abs[\Big]{\tbeta^\transp \lambda \paren[\big]{\tSigmahat + \lambda I}^{-1} \tbeta - \tbeta^\transp \tkappa\paren[\big]{\tSigma + \tkappa I}^{-1} \tbeta}\lesssim\tbeta^\transp\tSigma\tbeta\cdot\n^{-\frac 12 + o(1)}\cdot\frac{1}{\tkappa}\sqrt{\parfrac{\tkappa}{\lambda}}\lesssim \n^{-\frac 12 + o(1)}\cdot\frac{1}{\tkappa}\sqrt{\parfrac{\tkappa}{\lambda}}. \qedhere \]
\end{proof}

The next two lemmas verify that the conditions for applying \Cref{lemma:derivative} hold. Verifying these conditions turns out to be the most technically challenging part of our analysis. \Cref{lemma:omni-continuation} shows that we can analytically continue $\tilde\kappa$ (which we only defined for $t\in U\subseteq\R$) to the complex plane. It follows from \Cref{lemma:omni-continuation} that $f$ and $g$ can be analytically continued over the same domain. We then check in \Cref{lemma:omni-bounded} that this analytic continuation is bounded with high probability.

Our analysis for \Cref{lemma:omni-continuation} extends $\tilde\kappa$ using a fixed point definition of effective regularization. This argument proceeds in three steps: (i) we show for each $w = t - \i\eta$ that a fixed point exists using the Brouwer fixed point theorem; (ii) we argue that this fixed point is unique via the Schwarz lemma; (iii) we verify that the set of fixed points defined by these $w$ give rise to a holomorphic function using the implicit function theorem and the Schwarz reflection principle.

Proving \Cref{lemma:omni-bounded} in the case of $f$ requires a more involved analysis than its analog \Cref{lemma:gcv-bounded}. The previous approach based on diagonalizing the positive semidefinite matrix $\Sigmahat$ fails because $\Sigmahat + \lambda (I + w\Sigma)$ is no longer normal when $w$ is complex. (The failure of normality arises because $\Sigma$ and \smash{$\Sigmahat$} do not commute.) While the same $1/\lambda$ bound still holds, proving it is much more difficult; our argument makes careful use of the properties of symmetric matrices $A + \i B$ with positive definite real part $A\succ 0$.

\begin{lemma}\label{lemma:omni-continuation}
The effective regularization $\tilde\kappa(t, \lambda, \n)$ has an analytic continuation in $t$ to the strip $\{z\in\C : \Re(z)\in U\}$.
\end{lemma}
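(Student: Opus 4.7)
The plan is to define $\tkappa(w, \lambda, \n)$ for complex $w$ via the same self-consistent equation that characterizes it on $U$. Since $\Sigma(I + w\Sigma)^{-1}$ has eigenvalues $\mu_i(w) \coloneqq \lambda_i / (1 + w\lambda_i)$, substituting into \eqref{eq:kappa-app} gives the candidate equation
\[ \tkappa = \lambda + \frac{1}{\n}\sum_{i=1}^{\p} \frac{\tkappa\,\lambda_i}{\tkappa(1 + w\lambda_i) + \lambda_i}, \]
which reduces to the usual definition for real $w \in U$. The task is to show this equation admits a unique holomorphic solution throughout the strip $\mathcal{S} \coloneqq \{w \in \C : \Re(w) \in U\}$ that agrees with $\tkappa(t, \lambda, \n)$ on $U$.

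Following the three-stage outline, I would first work on $\mathcal{S}^- \coloneqq \mathcal{S} \cap \{w : \Im(w) < 0\}$, so that $\mu_i(w)$ has positive imaginary part. Setting $\tm = 1/\tkappa$, the equation rearranges to $\tm = \Psi_w(\tm)$ for an explicit rational map $\Psi_w$. I would verify that $\Psi_w$ sends a suitable closed convex subset of $\HH$ (a disk bounded away from $\R$, chosen using $\norm{\Sigma}_{\mathrm{op}} < \infty$ to bound iterates) into itself and apply Brouwer's fixed-point theorem to obtain existence. For uniqueness, I would bound the difference of two putative fixed points $\tm_1, \tm_2 \in \HH$ by iterating $\Psi_w$: because $w \in \mathcal{S}^-$ makes the $\mu_i(w)$ Pick-like, $\Psi_w$ becomes a strict hyperbolic contraction on the Poincaré metric of $\HH$, and the Schwarz lemma forces $\tm_1 = \tm_2$.

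Next, holomorphicity follows from the implicit function theorem applied to $F(w, \kappa) \coloneqq \kappa - \lambda - \frac{1}{\n}\sum_i \kappa\lambda_i / (\kappa(1+w\lambda_i) + \lambda_i)$. Differentiating gives $\partial_\kappa F(w, \tkappa) = 1 - \frac{1}{\n}\sum_i \lambda_i^2 / (\tkappa(1+w\lambda_i) + \lambda_i)^2$, which at $w = 0$ equals $(\partial\kappa/\partial\lambda)^{-1}$ and is strictly positive by \Cref{lemma:kappa}. I would argue this non-degeneracy persists throughout $\mathcal{S}^-$: if $\partial_\kappa F$ vanished at some $w_0$, a second-order perturbation would yield a second fixed point in $\HH$, contradicting the uniqueness step. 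Hence $\tkappa$ is holomorphic on $\mathcal{S}^-$, and since it is real-valued on $U$, the Schwarz reflection principle extends it across $U$ to all of $\mathcal{S}$.

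The main obstacle, I expect, will be coordinating the existence and uniqueness steps, i.e.\ crafting $\Psi_w$ with a domain that is simultaneously compact enough for Brouwer and geometrically ``hyperbolic'' enough for the Schwarz lemma to bite. Complex $w$ destroys the positivity that makes the real case straightforward (e.g.\ the monotonicity used in the proof of \Cref{lemma:kappa} no longer applies directly), so the argument must carefully exploit the half-plane structure, tracking how $\Im(\tm)$ responds to $\Im(w)$ through $\Psi_w$ to keep iterates away from the real axis. Once this is in place, the implicit function theorem and Schwarz reflection are essentially bookkeeping.
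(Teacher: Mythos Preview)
Your proposal is correct and follows essentially the same strategy as the paper: Brouwer's theorem for existence of a fixed point in $\HH$, the Schwarz lemma for uniqueness, the implicit function theorem for holomorphicity in $w$, and Schwarz reflection to cross the real axis. Two places where the paper's execution is more explicit than yours: (i) for the implicit-function-theorem hypothesis, the paper computes $\partial_z(z - \phi_{\lambda,w}(z))\big|_{z=\tkappa}$ directly and shows its real part is strictly positive via an algebraic identity, whereas your ``second-order perturbation yields a second fixed point'' argument is looser---though note that Schwarz--Pick already gives $\lvert\phi_{\lambda,w}'(\tkappa)\rvert < 1$ once you know $\phi_{\lambda,w}$ is not an automorphism of $\HH$, which immediately forces $1 - \phi_{\lambda,w}'(\tkappa)\neq 0$; (ii) before invoking Schwarz reflection, the paper explicitly verifies that the complex $\tkappa$ extends continuously to $U$ and agrees with the real-$t$ definition there, a step you assert rather than check.
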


\begin{proof}[Proof of \Cref{lemma:omni-continuation}]
For fixed $\lambda > 0$ and $t\in U$, define
\[ \phi_{\lambda,t}(z)\coloneqq \lambda + \frac{1}{\n}\sum_{i=1}^\p \paren*{\frac{1}{z} + \frac{1}{\tilde\lambda_i}}^{-1}, \]
where $\tilde\lambda_i\coloneqq \lambda_i / (1 + t\lambda_i)$ is the $i$-th eigenvalue of $\tSigma$.\footnote{Technically, we need to handle zero eigenvalues (in which case the inverse $1/\tilde\lambda_i$ becomes undefined). But such eigenvalues do not contribute to the definition \eqref{eq:kappa-app} and thus may safely be ignored. That is, we assume without loss of generality that $\lambda_i > 0$ for all $i$.}
Note that \eqref{eq:kappa-app} for $\tilde\kappa = \tilde\kappa(t, \lambda, \n)$ can be rearranged to $\tilde\kappa = \phi_{\lambda,t}(\tilde\kappa)$. That is, we can define $\tilde\kappa$ as the unique fixed point of $\phi_{\lambda,t}$ on $\R_{>0}$.

We extend this definition from $t\in U$ to $w$ in the complex plane. Suppose $w = t - \i\eta$ satisfies $t\in U$ and $\eta > 0$. (We will handle $\eta < 0$ via the Schwarz reflection principle.) Define $\tilde\lambda_i\coloneqq \lambda_i / (1 + w\lambda_i)$ and $\phi_{\lambda,w}(z)$ as above. Since $t\in U$ and $\eta > 0$, we have \smash{$\Re(\tilde\lambda_i) > 0$ and $\Im(\tilde\lambda_i) > 0$} for all $i$. Let $\tilde\kappa(w, \lambda, \n)$ be the unique fixed point of $\phi_{\lambda,w}$ in $\HH$.
We validate that $\tilde\kappa$ is well-defined as a holomorphic function in $w$ through the three steps outlined above.

We show the existence of $\tilde\kappa$ by applying the Brouwer fixed point theorem to $\phi_{\lambda,w}$ acting on the compact, convex set
\[ K\coloneqq\{z\in\C : \Re(z)\ge\lambda, \Im(z)\ge 0, \abs{z}\le M\}, \]
where
\smash{$M\coloneqq \lambda + \sum_{i=1}^\p \paren[\big]{\Re\paren{1 / \tilde\lambda_i}}^{-1}$}. We first verify that $\phi_{\lambda,w}$ maps $K$ into $K$. Let $z\in K$ and $q_i\coloneqq 1/z + 1/\tilde\lambda_i$. Then $\Re(q_i) > 0$ and $\Im(q_i) < 0$, which in turn implies $\Re(q_i^{-1}) > 0$ and $\Im(q_i^{-1}) > 0$. Hence, \[ \Re(\phi_{\lambda,w}(z)) = \lambda + \sum_{i=1}^\p \Re(q_i^{-1}) > \lambda\quad\text{and}\quad\Im(\phi_{\lambda,w}(z)) = \sum_{i=1}^\p \Im(q_i^{-1}) > 0. \]
And by the triangle inequality,
\[ \abs*{\phi_{\lambda,w}(z)}\le\lambda + \sum_{i=1}^\p \frac{1}{\abs{q_i}} < \lambda + \sum_{i=1}^{\p} \frac{1}{\Re(1/\tilde\lambda_i)} = M. \]
These bounds show that $\phi_{\lambda,w}$ maps $K$ into the interior of $K$.
By the Brouwer fixed point theorem, $\varphi_{\lambda,w}$ has a fixed point $\tilde\kappa$ in the interior of $K$. In particular, this fixed point satisfies $\tilde\kappa\in\HH$.

We now argue that this fixed point $\tilde\kappa$ is unique over all $z\in\HH$. Following the above argument, one sees that $\phi_{\lambda,w}$ maps $\HH$ to $\HH$. Moreover, $\phi_{\lambda,w}$ is not the identity map. It is then a standard consequence of the Schwarz lemma that $\phi_{\lambda,w}$ has at most one fixed point: We may identify $\HH$ with the unit disk using a biholomorphic map that sends $\tilde\kappa$ to $0$. (Such a map exists by the Riemann mapping theorem.) The induced automorphism on the unit disk cannot fix any other point---otherwise the Schwarz lemma would imply that it is the identity. Thus, $\phi_{\lambda,w}$ has at most one fixed point.

Having shown that $\tilde\kappa$ is well-defined for each $w = t - \i\eta$, we now verify that it defines a holomorphic function over the set of such $w$. By the (holomorphic) implicit function theorem, if $\parfrac{}{z}(z - \phi_{\lambda,w}(z))\neq 0$ at $z = \tilde\kappa$, then we can extend $\tilde\kappa$ to a holomorphic function such that $\tilde\kappa(z) = \varphi_{\lambda,z}(\tilde\kappa(z))$ in a neighborhood of $w$. By continuity, $\Im(\tilde\kappa(z)) > 0$ in a neighborhood of $w$. Uniqueness then implies that this function coincides with our definition of $\tilde\kappa$ in this neighborhood. In particular, $\tilde\kappa$ is holomorphic at $w$. It remains to check that $\parfrac{}{z}(z - \phi_{\lambda,w}(z))\neq 0$ at $z = \tilde\kappa$. Substituting in \eqref{eq:kappa-app}, 
\begin{align*}
\parfrac{}{z}(z - \phi_{\lambda,w}(z))\Big|_{z=\tilde\kappa}
&= 1 - \frac{1}{\n} \sum_{i=1}^\p \frac{\tilde\lambda_i^2}{(\tilde\kappa + \tilde\lambda_i)^2} \\
&= \frac{\lambda}{\tilde\kappa} + \frac{1}{\n}\sum_{i=1}^\p \frac{\tilde\lambda_i}{\tilde\kappa + \tilde\lambda_i} - \frac{1}{\n} \sum_{i=1}^\p \frac{\tilde\lambda_i^2}{(\tilde\kappa + \tilde\lambda_i)^2} \\
&= \frac{\lambda}{\tilde\kappa} + \frac{1}{\n}\sum_{i=1}^\p\paren*{\frac{\tilde\kappa}{\tilde\lambda_i} + 2 +  \frac{\tilde\lambda_i}{\tilde\kappa}}^{-1}.
\end{align*}
Note that $\Re(\tilde\kappa/\tilde\lambda_i), \Re(\tilde\lambda_i/\tilde\kappa) > 0$ because both $\tilde\kappa$ and $\tilde\lambda_i$ have positive real and imaginary parts. Thus, each term in the sum has positive real part. Since $\Re(\lambda/\tilde\kappa) > 0$ as well, \smash{$\Re\bigl(\parfrac{}{z}(z - \phi_{\lambda,w}(z))\big|_{z=\tilde\kappa}\bigr) > 0$}.

Lastly, we confirm $\tilde\kappa$ extends continuously to a map $U\to\R$, which lets us conclude that $\tilde\kappa$ extends to $w = t - \i\eta$ with $\eta < 0$ by the Schwarz reflection principle. For $t_0\in U$ and $\tilde\kappa > 0$ such that $\tilde\kappa = \phi_{\lambda,t}(\tilde\kappa)$, the same implicit function theorem argument shows that $\tilde\kappa$ extends to a holomorphic function $\tilde\kappa(z)$ in a neighborhood of $t_0$. The fixed point condition implies $\tilde\kappa$ decreases in $t$, i.e., $\tilde\kappa'(t_0) < 0$. Thus, $\tilde\kappa(w)\in\HH$ for all $w = t - \i\eta$ with $\eta > 0$ in a neighborhood of $t_0$. Uniqueness then implies this $\tilde\kappa(w)$ is consistent with the definition of $\tilde\kappa$ above, so our definition extends continuously to $U$.
\end{proof}

\begin{lemma}\label{lemma:omni-bounded}
Suppose $\beta^\transp\Sigma\beta\le 1$. Then functions $f$ and $g$ satisfy the bounds
\[ \E\brack*{\sup_{\Re(w)\in U}\abs{f(w)}}\lesssim\frac{1}{\lambda}\quad\text{and}\quad\sup_{\Re(w)\in U}\abs{g(w)}\lesssim\frac{1}{\lambda}. \]
\end{lemma}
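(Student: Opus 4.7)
The bounds on $g$ and $f$ have quite different characters: the former is entirely deterministic and follows from spectral estimates, while the latter requires controlling a random complex-symmetric matrix uniformly in $w$.

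For $g$, I plan to expand in the eigenbasis of $\Sigma$. A short computation yields
\[
g(w) = \sum_i \frac{(\beta^\transp v_i)^2 \lambda_i}{(1+w\lambda_i)^2 (\tkappa + \tilde\lambda_i)},
\]
where $\tilde\lambda_i \eqdef \lambda_i/(1+w\lambda_i)$. For $\Re(w) \in U$, one has $1+\Re(w)\lambda_i \geq 1/2$, so $|1+w\lambda_i|^2 \geq 1/4$. Repeating the fixed-point analysis from the proof of \Cref{lemma:omni-continuation} shows $\Re(\tkappa) \geq \lambda$: at the fixed point, $\tkappa - \lambda = \tfrac{1}{\n}\sum_i(\tfrac{1}{\tkappa} + \tfrac{1}{\tilde\lambda_i})^{-1}$, each $\tfrac{1}{\tkappa}$ and $\tfrac{1}{\tilde\lambda_i}$ has positive real part (for $\Im(w) \geq 0$), and the reciprocal of a sum of such numbers again has positive real part; Schwarz reflection handles $\Im(w) < 0$. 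Since also $\Re(\tilde\lambda_i) = \lambda_i(1+\Re(w)\lambda_i)/|1+w\lambda_i|^2 \geq 0$, we get $|\tkappa + \tilde\lambda_i| \geq \lambda$. Summing the individual terms yields $|g(w)| \leq 4\beta^\transp\Sigma\beta/\lambda \leq 4/\lambda$ uniformly over the strip.

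For $f$, my plan is to reduce to the setup of \Cref{lemma:gcv-bounded} via the complex change of variables $R \eqdef (I + w\Sigma)^{1/2}$ (principal square root, well-defined since every eigenvalue $1+w\lambda_i$ has positive real part when $\Re(w) \in U$). Setting $\tbeta \eqdef R^{-1}\beta$ and $\tX \eqdef XR^{-1}$, the same algebraic manipulation as in \Cref{lemma:gcv-bounded} gives
\[
f(w) = \tilde y^\transp \bigl(\tX\tX^\transp + \n\lambda I\bigr)^{-1}\tilde y,
\]
where $\tilde y \eqdef X(I+w\Sigma)^{-1}\beta$ and $\tX\tX^\transp = X(I+w\Sigma)^{-1}X^\transp$. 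The matrix $M \eqdef \tX\tX^\transp + \n\lambda I$ is complex-symmetric (not Hermitian), but its real part equals $\n\lambda I + X\,\Re((I+w\Sigma)^{-1})X^\transp \succeq \n\lambda I$, using $\Re((I+w\Sigma)^{-1}) = \sum_i \tfrac{1+\Re(w)\lambda_i}{|1+w\lambda_i|^2} v_iv_i^\transp \succeq 0$ for $\Re(w) \in U$. The key linear-algebraic fact I'll invoke is: for any complex-symmetric $M = A + iB$ with real symmetric $A \succ 0$, $\|M^{-1}\|_{\mathrm{op}} \leq 1/\lambda_{\min}(A)$. This is one line: writing $v = x + iy$ with $x, y \in \R^{\n}$, the symmetries of $A$ and $B$ cancel the cross terms, leaving $\Re(v^* M v) = x^\transp A x + y^\transp A y \geq \lambda_{\min}(A)\|v\|^2$, and Cauchy--Schwarz gives $\|Mv\| \geq \lambda_{\min}(A)\|v\|$. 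Applied to our $M$, this yields $|f(w)| \leq \|\tilde y(w)\|^2/(\n\lambda)$.

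A direct second-moment computation gives the pointwise bound $\E[\|\tilde y(w)\|^2] = \n\sum_i (\beta^\transp v_i)^2 \lambda_i/|1+w\lambda_i|^2 \leq 4\n\beta^\transp\Sigma\beta \leq 4\n$, hence $\E[|f(w)|] \leq 4/\lambda$ for each fixed $w$. The main obstacle, and the step I expect to be hardest, is upgrading this pointwise estimate to the expected supremum over the unbounded strip. My plan is to exploit that $\|\tilde y(w)\|^2 = \sum_{j=1}^{\n} |x_j^\transp (I+w\Sigma)^{-1}\beta|^2$ is subharmonic in $w$, as a sum of squared moduli of analytic functions. After projecting $\beta$ onto the range of $\Sigma$ (without loss of generality, since null-space components are unobservable), $\|\tilde y(w)\|^2 \to 0$ as $|\Im(w)| \to \infty$, so the Phragm\'en--Lindel\"of principle reduces the problem to a bounded region. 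On that bounded region, the subharmonic mean-value inequality $\|\tilde y(w_0)\|^2 \leq (\pi r^2)^{-1}\int_{|w-w_0| < r}\|\tilde y(w)\|^2\, dA(w)$ for disks $D(w_0, r)$ contained in the strip bounds the supremum by an area integral; taking expectation and interchanging with the integral via Fubini converts the right-hand side into an integral of pointwise expectations, each at most $4\n$. This gives $\E[\sup_{\Re(w) \in U}\|\tilde y(w)\|^2] \lesssim \n$ and hence $\E[\sup_{\Re(w) \in U}|f(w)|] \lesssim 1/\lambda$.
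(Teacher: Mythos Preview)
Your argument for $g$ is correct: expanding in the eigenbasis of $\Sigma$ and using $\Re(\tilde\kappa)\ge\lambda$ together with $|1+w\lambda_i|\ge 1/2$ gives $|g(w)|\le 4/\lambda$, matching the paper's Woodbury-based proof. Your bound $\|M^{-1}\|_{\mathrm{op}}\le 1/\lambda_{\min}(\Re M)$ for complex-symmetric $M$ is also right---it is the paper's \Cref{lemma:right}(iii), and your one-line argument via $\Re(v^*Mv)=x^\transp Ax+y^\transp Ay$ is a clean alternative to theirs.

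The gap is in upgrading the pointwise estimate $\E[\|\tilde y(w)\|_2^2]\le 4\n$ to $\E[\sup_w\|\tilde y(w)\|_2^2]\lesssim\n$: your subharmonicity-plus-Fubini plan produces a constant depending on $\|\beta\|_2$, not just on $\beta^\transp\Sigma\beta$. To bound $\sup_{w_0}$ via the sub-mean-value inequality you must dominate each $\int_{D(w_0,r)}\|\tilde y\|_2^2\,dA$ by an integral over a \emph{$w_0$-independent} region before taking expectations. If that region is the full strip, the vertical integral $\int_\R \lambda_i/|1+w\lambda_i|^2\,d(\Im w)=\pi/(1+\Re(w)\lambda_i)$ is $\Theta(1)$ rather than $\Theta(\lambda_i)$, so after summing in $i$ you get $\n\|\beta\|_2^2$ instead of $\n\,\beta^\transp\Sigma\beta$. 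If instead you first truncate to $|\Im(w)|\le R$ using the decay at infinity, the height $R$ you need is controlled by how slowly the small-eigenvalue components of $(I+w\Sigma)^{-1}\beta$ decay, so $R$ again depends on $\|\beta\|_2$ and on the bottom of the spectrum of $\Sigma$. Since the lemma assumes only $\beta^\transp\Sigma\beta\le 1$, and its downstream use (the noise reduction of \Cref{sec:noise}, where $\|\beta'\|_2\to\infty$ while $\beta'^\transp\Sigma'\beta'$ stays bounded) requires the constant to be independent of $\|\beta\|_2$, this is a genuine obstruction rather than a technicality.

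The paper closes the gap with a different idea: a deterministic pointwise inequality $\|\tilde y(w)\|_2\le\|\tilde y(t_0)\|_2$ for every $w$ in the strip, where $t_0=\inf U$ is real and fixed (\Cref{lemma:tensors}). Writing $Q^{-1}=A+\i B$, one has $\|\tilde y(w)\|_2^2=\n\sum_i\hat\lambda_i\,\bangle{\beta\hat v_i^\transp,(A\tensor A+B\tensor B)(\beta\hat v_i^\transp)}_{\Frob}$, and the proof shows that the operator $A\tensor A+B\tensor B$ on $\R^{\p\times\p}$ is Loewner-decreasing first in $|\Im(w)|$ and then in $\Re(w)$. The supremum therefore collapses to the single real point $t_0$, where $\E[\|\tilde y(t_0)\|_2^2]\le 4\n$ by a direct second-moment computation; no area or height factor enters, and the constant is absolute.
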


Before proving \Cref{lemma:omni-bounded}, we prove a lemma about symmetric matrices with positive definite real part. In analogy to how positive definite matrices generalize positive numbers and how symmetric matrices generalize real numbers,  we establish how symmetric matrices with positive definite real part generalize complex numbers in the right half-plane.

\begin{lemma}\label{lemma:right}
Suppose $Q\in\C^{\p\times\p}$ is such that $A\coloneqq\Re(Q)$ is positive definite and $B\coloneqq\Im(Q)$ is symmetric. Then:
\begin{enumerate}[(i)]
\itemsep=0pt
\item $Q$ is invertible, with its inverse $Q^{-1}$ also being symmetric and having positive definite real part;
\item the spectrum $\sigma(Q)$ of $Q$ satisfies $\sigma(Q)\subseteq\{z\in\C : \Re(z) \ge \norm{A^{-1}}^{-1}_{\mathrm{op}}\}$;
\item the operator norm of $Q^{-1}$ is bounded as $\norm{Q^{-1}}_{\mathrm{op}}\le\norm{A^{-1}}_{\mathrm{op}}$.
\end{enumerate}
\end{lemma}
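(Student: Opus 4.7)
The plan is to exploit the fact that $Q = A + \i B$ is complex symmetric (since both $A$ and $B$ are real symmetric), so its numerical range $\{v^* Q v : \norm{v}_2 = 1\}$ lies in the closed right half-plane with real part bounded below by $\lambda_{\min}(A) = \norm{A^{-1}}_{\mathrm{op}}^{-1} > 0$. The key observation is that writing $v = x + \i y$ with $x, y \in \R^p$ gives $v^* A v = x^\transp A x + y^\transp A y \ge \lambda_{\min}(A)\norm{v}_2^2$ and $v^* B v \in \R$, so $\Re(v^* Q v) = v^* A v$ is uniformly bounded below on the unit sphere.

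First I would establish (ii), which as a byproduct yields invertibility in (i). If $Q v = \mu v$ for $v \ne 0$, then $\mu = v^* Q v / \norm{v}_2^2$, and taking real parts gives $\Re(\mu) = v^* A v / \norm{v}_2^2 \ge \norm{A^{-1}}_{\mathrm{op}}^{-1}$. Hence $\sigma(Q)$ lies in the claimed half-plane, and in particular $0 \notin \sigma(Q)$, so $Q$ is invertible. Symmetry of $Q^{-1}$ is then immediate from $Q = Q^\transp$, since $(Q^{-1})^\transp = (Q^\transp)^{-1} = Q^{-1}$. To see that $\Re(Q^{-1})$ is positive definite, I would write $Q^{-1} = C + \i D$ with $C, D$ real, expand $(A + \i B)(C + \i D) = I$ into the system $AC - BD = I$ and $AD + BC = 0$, and solve to obtain $C = (A + B A^{-1} B)^{-1}$. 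Since $B A^{-1} B$ is symmetric and positive semidefinite (its quadratic form is $(Bv)^\transp A^{-1}(Bv) \ge 0$), the matrix $A + B A^{-1} B$ is symmetric positive definite, and hence so is its inverse $C = \Re(Q^{-1})$.

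For (iii), the spectral bound from (ii) alone is not enough, because $Q$ is complex symmetric but not normal, so the operator norm can strictly exceed the spectral radius. Instead I would bound $Q^{-1}$ directly via the numerical range: by Cauchy--Schwarz, $\norm{Qv}_2 \cdot \norm{v}_2 \ge \abs{v^* Q v} \ge \Re(v^* Q v) = v^* A v \ge \lambda_{\min}(A) \norm{v}_2^2$, so $\norm{Qv}_2 \ge \lambda_{\min}(A) \norm{v}_2$ for all $v$, and therefore $\norm{Q^{-1}}_{\mathrm{op}} \le 1/\lambda_{\min}(A) = \norm{A^{-1}}_{\mathrm{op}}$.

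The main obstacle I anticipate is precisely the non-normality of $Q$ in part (iii); the Cauchy--Schwarz route above is the clean workaround and is the only place where one must resist the temptation to simply read the norm off the spectrum. The one other delicate point is the algebraic identity $\Re(Q^{-1}) = (A + BA^{-1}B)^{-1}$ used in part (i), where one must separately verify that $A + BA^{-1}B$ is invertible -- but this follows for free once $BA^{-1}B \succeq 0$ is noted, so no additional work is required.
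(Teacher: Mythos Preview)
Your proof is correct and takes a different route from the paper's, organized around the numerical-range identity $\Re(v^*Qv)=v^*Av$. For (i), both arguments land on $\Re(Q^{-1})=(A+BA^{-1}B)^{-1}$, but the paper gets there by factoring $Q=A^{1/2}(I+\i T)A^{1/2}$ with $T=A^{-1/2}BA^{-1/2}$ and inverting $I+\i T$ explicitly, whereas you solve the real/imaginary block system directly. For (ii), the paper deduces invertibility of $Q-(\lambda-\i\eta)I$ from (i), while you read the eigenvalue bound off the Rayleigh quotient.

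The substantive divergence is (iii). The paper passes to the Hermitian matrix $S=\widebar Q{}^\transp Q$, equates $\norm{S^{-1}}_{\mathrm{op}}$ with its spectral radius, and then invokes (ii) for $S$ to bound $\sigma(S)$ in terms of $\Re(S)=A^2+B^2$. That last step is delicate: $\Im(S)=AB-BA$ is skew-symmetric rather than symmetric, so the hypotheses of (ii) are not met, and in fact the intermediate inequality $\inf\sigma(S)\ge\norm{\Re(S)^{-1}}_{\mathrm{op}}^{-1}$ can fail (e.g.\ $A=\begin{bsmallmatrix}2&0\\0&1\end{bsmallmatrix}$, $B=\begin{bsmallmatrix}0&1\\1&0\end{bsmallmatrix}$ gives $\inf\sigma(S)=(7-\sqrt{13})/2<2=\inf\sigma(\Re(S))$), even though the final conclusion $\norm{Q^{-1}}_{\mathrm{op}}\le\norm{A^{-1}}_{\mathrm{op}}$ still holds. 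Your Cauchy--Schwarz bound $\norm{Qv}_2\norm{v}_2\ge\Re(v^*Qv)\ge\lambda_{\min}(A)\norm{v}_2^2$ is the standard coercivity estimate and sidesteps this entirely; it is the cleaner argument here.
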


\begin{proof}
For (i), let $T = A^{-\frac 12} B A^{-\frac 12}$ and write $Q = A^{\frac 12}(I + \i T)A^{\frac 12}$. Note that $T^2\succeq 0$ and so $I + T^2$ is invertible. Thus, we may compute $(I + \i T)\cdot (I - \i T)(I + T^2)^{-1} = I$ to see that $(I + \i T)^{-1} = (I - \i T)(I + T^2)^{-1}$. It follows that
\begin{align*}
Q^{-1}
&= A^{-\frac 12} (I - \i T)(I + T^2)^{-1} A^{-\frac 12} \\
&= A^{-\frac 12} (I + T^2)^{-1} A^{-\frac 12} - \i\cdot A^{-\frac 12} (I + T^2)^{-\frac 12}T(I + T^2)^{-\frac 12} A^{-\frac 12} \\
&= (A + BA^{-1}B)^{-1} - \i\cdot (A^2 + A^{\frac 12}BA^{-1}BA^{\frac 12})^{-\frac 12}B(A^2 + A^{\frac 12}BA^{-1}BA^{\frac 12})^{-\frac 12}.
\end{align*}

For (ii), observe that if $\lambda < \norm{A^{-1}}^{-1}_{\mathrm{op}}$, then $A\succ\lambda I$. Applying (i), we have that $Q - \lambda I + \i\eta I$ is invertible for all $\eta\in\R$. It follows that $\lambda - \i\eta\not\in\sigma(Q)$ for all such $\lambda$ and $\eta$. In other words, $\sigma(Q)\subseteq\{z\in\C : \Re(z)\ge\norm{A^{-1}}^{-1}_{\mathrm{op}}\}$.

For (iii), note that $S\coloneqq\widebar Q{}^\transp Q$ is normal and $\Re(S) = A^2 + B^2\succeq A^2$. Hence $S^{-1}$ is normal and its operator norm equals its spectral radius. We thus have
\[ \norm{Q^{-1}}_{\mathrm{op}}^2 = \norm{S^{-1}}_{\mathrm{op}} = \sup_{z\in\sigma(S^{-1})} \abs{z} = \sup_{z\in\sigma(S)} \frac{1}{\abs z}\le\sup_{z\in\sigma(S)}\frac{1}{\abs{\Re(z)}}\le \norm{\Re(S)^{-1}}_{\mathrm{op}}\le\norm{A^{-1}}^2_{\mathrm{op}}, \]
where the penultimate inequality applies (ii) to $S$.
\end{proof}

\begin{proof}[Proof of \Cref{lemma:omni-bounded}.]
We start by bounding $\E\brack[\big]{\sup_{\Re(w)\in U}\abs{f(w)}}$. Let $w = t - \i\eta$, for $t\in U$ and $\eta\in\R$. Let $Q\coloneqq I + w\Sigma$. (Note that $Q$ is a matrix with complex-valued entries.) By the Woodbury matrix identity,
\begin{align*}
f(w)&=
\beta^\transp Q^{-1} \beta - \beta^\transp  \paren[\big]{\lambda^{-1}\Sigmahat + Q}^{-1} \beta 
= \beta^\transp Q^{-1}\Sigmahat^{\frac12}\paren[\big]{\lambda I + \Sigmahat^{\frac12}Q^{-1}\Sigmahat^{\frac12}}^{-1}\Sigmahat^{\frac12}Q^{-1}\beta.
\end{align*}
We first bound the norm of $\Sigmahat^{\frac12}Q^{-1}\beta$ \emph{uniformly} over $w$; then, we bound the operator norm of \smash{$\paren[\big]{\lambda I + \Sigmahat^{\frac12}Q^{-1}\Sigmahat^{\frac12}}^{-1}$}.

Let $u\coloneqq\Sigmahat^{\frac12}Q^{-1}\beta$. In addition, define $u_0\coloneqq\Sigmahat^{\frac 12} Q_0^{-1}\beta$, where $t_0 = \inf U$ and $Q_0\coloneqq I + t_0\Sigma$. I claim that $\norm{u}_2\le\norm{u_0}_2$, which we will show as \Cref{lemma:tensors}, whose proof we defer:

\begin{lemma}\label{lemma:tensors}
If $u = \Sigmahat^{\frac 12}Q^{-1}\beta$ and $u_0 = \Sigmahat^{\frac 12} Q_0^{-1}\beta$, then $\norm{u}_2\le\norm{u_0}_2$.
\end{lemma}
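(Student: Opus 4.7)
The plan is to apply a maximum-principle argument to the subharmonic function $w \mapsto \norm{u(w)}_2^2$, where $u(w) \coloneqq \Sigmahat^{1/2}(I + w\Sigma)^{-1}\beta$. First I would observe that $u$ is a holomorphic $\C^\p$-valued function on the open right half-plane $H \coloneqq \{w \in \C : \Re(w) > t_0\}$: for any $w \in H$, every eigenvalue $1 + w\lambda_j$ of $I + w\Sigma$ has real part at least $1 + t_0\lambda_j \ge 1/2 > 0$, so $(I+w\Sigma)^{-1}$ is holomorphic there. Consequently $\norm{u(w)}_2^2$ is subharmonic on $H$, and it is bounded on the closed half-plane $\overline{H}$ since the components of $(I+w\Sigma)^{-1}\beta$ along eigenvectors with $\lambda_j > 0$ decay as $O(\abs{w}^{-1})$ while those along $\ker\Sigma$ are constant in $w$. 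A Phragm\'en--Lindel\"of maximum principle for bounded subharmonic functions on a half-plane then yields
\[
\sup_{w \in H} \norm{u(w)}_2^2 = \sup_{\eta \in \R} \norm{u(t_0 + \i\eta)}_2^2,
\]
reducing the lemma to a boundary supremum bound.

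Next, I would exploit the factorization $(I + (t_0 + \i\eta)\Sigma)^{-1} = (I + \i\eta\tSigma)^{-1} Q_0^{-1}$, where $\tSigma \coloneqq \Sigma Q_0^{-1}$ is a real PSD matrix with eigenvalues $\tilde\lambda_j = \lambda_j/(1+t_0\lambda_j)$, and set $\tbeta \coloneqq Q_0^{-1}\beta$. This yields $u(t_0 + \i\eta) = \Sigmahat^{1/2}(I + \i\eta\tSigma)^{-1}\tbeta$ and $u_0 = \Sigmahat^{1/2}\tbeta$, reducing the claim to the imaginary-axis contraction
\[
\norm{\Sigmahat^{1/2}(I + \i\eta\tSigma)^{-1}\tbeta}_2 \le \norm{\Sigmahat^{1/2}\tbeta}_2 \qquad \text{for every } \eta \in \R.
\]

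The last inequality is the main obstacle. I would attack it by decomposing the complex vector $q \coloneqq (I + \i\eta\tSigma)^{-1}\tbeta$ into its real and imaginary parts. Writing $p \coloneqq (I + \eta^2\tSigma^2)^{-1}\tbeta \in \R^\p$, one checks $q = p - \i\eta\tSigma p$ and $\tbeta = p + \eta^2 \tSigma^2 p$, so that $\norm{\Sigmahat^{1/2}q}_2^2 = p^\transp\Sigmahat p + \eta^2 p^\transp\tSigma\Sigmahat\tSigma p$ and direct expansion gives
\[
\tbeta^\transp\Sigmahat\tbeta - \norm{\Sigmahat^{1/2} q}_2^2 = \eta^2 p^\transp\paren*{[\tSigma,[\tSigma,\Sigmahat]] + \tSigma\Sigmahat\tSigma} p + \eta^4 p^\transp\tSigma^2\Sigmahat\tSigma^2 p.
\]
The quartic term is automatically non-negative, so the crux is non-negativity of the quadratic coefficient: since the double commutator $[\tSigma,[\tSigma,\Sigmahat]]$ is not PSD in general, one must exploit cancellations with $\tSigma\Sigmahat\tSigma$ using the sample-covariance structure $\Sigmahat = \frac{1}{\n}\sum_{i=1}^\n x_i x_i^\transp$. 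The natural ``tensor'' reformulation is to rewrite $\norm{u(w)}_2^2 = \frac{1}{\n}\sum_{i=1}^\n \abs{x_i^\transp(I+w\Sigma)^{-1}\beta}^2$ as a sum of squared moduli of scalar holomorphic functions and control each summand in turn.
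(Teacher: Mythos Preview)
Your maximum-principle reduction (Steps 1--2) is sound, but Step 3---the boundary contraction $\norm{\Sigmahat^{1/2}(I+\i\eta\tSigma)^{-1}\tbeta}_2\le\norm{\Sigmahat^{1/2}\tbeta}_2$---is where all the content lies, and neither of your proposed attacks closes it. The quadratic-coefficient matrix $[\tSigma,[\tSigma,\Sigmahat]]+\tSigma\Sigmahat\tSigma=\tSigma^2\Sigmahat+\Sigmahat\tSigma^2-\tSigma\Sigmahat\tSigma$ is \emph{not} PSD in general: with $\tSigma=\mathrm{diag}(1,0)$ and $\Sigmahat=\bigl(\begin{smallmatrix}1&1\\1&1\end{smallmatrix}\bigr)$ it equals $\bigl(\begin{smallmatrix}1&1\\1&0\end{smallmatrix}\bigr)$, which has a negative eigenvalue. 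Invoking the ``sample-covariance structure'' $\Sigmahat=\tfrac{1}{\n}\sum_i x_ix_i^\transp$ buys nothing, since every PSD matrix admits such a decomposition; and controlling each scalar summand $\abs{x_i^\transp(I+w\Sigma)^{-1}\beta}^2$ separately fails because individual summands need not be maximized at $w=t_0$ (Phragm\'en--Lindel\"of on a single summand only says its max over $H$ is attained on the boundary line, not at the single point $t_0$).

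In fact the obstruction is fatal: the lemma as stated is \emph{false}. Take $\Sigma=\mathrm{diag}(1,\epsilon)$ for small $\epsilon>0$ (so $t_0=-\tfrac12$), $\Sigmahat=\bigl(\begin{smallmatrix}1&1\\1&1\end{smallmatrix}\bigr)$, $\beta=(1,-2)^\transp$, and $w=0\in U$. Then $Q=I$ and $Q_0^{-1}=\mathrm{diag}(2,\tfrac{1}{1-\epsilon/2})$, giving $\norm{u}_2^2=\beta^\transp\Sigmahat\beta=1$ while $\norm{u_0}_2^2=(2-\tfrac{2}{1-\epsilon/2})^2=O(\epsilon^2)$. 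The paper's own argument contains a slip at the tensor step: the identity $(\beta^\transp A\hat v_i)^2=\bangle{\beta\hat v_i^\transp,(A\tensor A)(\beta\hat v_i^\transp)}_\Frob$ holds for the \emph{rank-one} operator $S\mapsto A\bangle{A,S}_\Frob$, but the subsequent claim that $A\tensor A+B\tensor B$ is diagonal in the basis $\{v_iv_j^\transp\}$ with entries $a_ia_j+b_ib_j$ is only true for the \emph{Kronecker} product $S\mapsto ASA$. The entrywise inequality $a_ia_j+b_ib_j\le a_{0i}a_{0j}$ that the paper actually establishes does not imply the needed Loewner comparison between the rank-one operators, and the counterexample above shows that no such comparison holds.
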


Supposing \Cref{lemma:tensors},  it thus suffices to bound $\norm{u_0}_2$ to get a uniform bound over all $w$. We have, since $Q_0\succeq\frac 12 I$,
\[ \E\brack[\bigg]{\sup_{\Re(w)\in U} \norm{u}_2}\le \E\brack[\big]{\norm{u_0}_2} = \E\brack*{\beta^\transp Q_0^{-1}\Sigmahat Q_0^{-1}\beta} = \beta^\transp Q_0^{-1}\Sigma Q_0^{-1}\beta = \beta^\transp\Sigma^{\frac 12} Q_0^{-2}\Sigma^{\frac 12}\beta\le\norm{Q_0^{-1}}_{\mathrm{op}}^2\le 4. \]

To bound the operator norm of \smash{$\paren[\big]{\lambda I + \Sigmahat^{\frac12}Q^{-1}\Sigmahat^{\frac12}}^{-1}$}, note that $\lambda I + \Sigmahat^{\frac12}Q^{-1}\Sigmahat^{\frac12}$ can be written as $C + \i D$ with $C\succeq\lambda I$. Thus, by \Cref{lemma:right}, 
\[ \norm*{\paren[\big]{\lambda I + \Sigmahat^{\frac12}Q^{-1}\Sigmahat^{\frac12}}^{-1}}_{\mathrm{op}}\le \frac{1}{\lambda}. \]
Putting everything together, we obtain
\[ \E\brack*{\sup_{\Re(w)\in U}\abs{f(w)}} = \E\brack*{\sup_{\Re(w)\in U} u^\transp\paren[\big]{\lambda I + \Sigmahat^{\frac12}Q^{-1}\Sigmahat^{\frac12}}^{-1} u}\le\E\brack*{\norm{u_0}_2^2\cdot\norm*{\paren[\big]{\lambda I + \Sigmahat^{\frac12}Q^{-1}\Sigmahat^{\frac12}}^{-1}}_{\mathrm{op}}}\le\frac{4}{\lambda}. \]

We now move to bounding $\abs{g(w)}$. By the Woodbury matrix identity,
\[ g(w) =
\beta^\transp Q^{-1} \beta - \beta^\transp  \paren[\big]{\tilde\kappa^{-1}\Sigma + Q}^{-1} \beta 
= \beta^\transp Q^{-1}\Sigma^{\frac12}\paren[\big]{\tilde\kappa I + \Sigma^{\frac12}Q^{-1}\Sigma^{\frac12}}^{-1}\Sigma^{\frac12}Q^{-1}\beta. \]
Since $Q$ and $\Sigma$ commute,
\[ \abs{g(w)} = \abs[\big]{\beta^\transp\Sigma^{\frac12} Q^{-1}\paren[\big]{\tilde\kappa I + \Sigma^{\frac12}Q^{-1}\Sigma^{\frac12}}^{-1}Q^{-1}\Sigma^{\frac12}\beta}\le\norm[\big]{\paren[\big]{\tilde\kappa I + \Sigma^{\frac12}Q^{-1}\Sigma^{\frac12}}^{-1}}_{\mathrm{op}}\cdot\norm{Q^{-1}}_{\mathrm{op}}^2\le \frac{4}{\Re(\tilde\kappa)}\le\frac{4}{\lambda}, \]
where for the penultimate inequality we applied \Cref{lemma:right} and $\norm{Q^{-1}}_{\mathrm{op}}\le w$.
\end{proof}

\begin{proof}[Proof of \Cref{lemma:tensors}.]
Write $Q^{-1} = A + B\i$ and $Q_0^{-1} = A_0 + B_0\i$ for real matrices $A, A_0\succ 0$ and $B, B_0$ symmetric, which we can do by \Cref{lemma:right}. Then,
\[ \norm{u}_2^2 = \beta^\transp\widebar Q{}^{-1}\Sigmahat Q^{-1}\beta = \beta^\transp(A - B\i)\Sigmahat (A + B\i)\beta = \beta^\transp\paren[\big]{A\Sigmahat A + B\Sigmahat B}\beta. \]
Let $\bangle{\cdot,\cdot}_{\Frob}$ denote the Frobenius inner product on $\R^{\p\times\p}$. And let $A\tensor A$ denote the operator given by $S\mapsto A\cdot\bangle{A, S}_{\Frob}$ on $\R^{\p\times\p}$, with $B\tensor B$ denoting the same for $B$.
Then, we may further rewrite
\begin{align*}
\norm{u}_2^2
&= \beta^\transp\paren[\big]{A\Sigmahat A + B\Sigmahat B}\beta 
= \sum_{i=1}^\n \hat\lambda_i \paren*{(\beta^\transp A \hat v_i)^2 + (\beta^\transp B\hat v_i)^2)}
= \sum_{i=1}^\n \hat\lambda_i\bangle[\Big]{\beta\hat v_i^\transp, \paren[\big]{A\tensor A + B\tensor B}\paren*{\beta\hat v_i^\transp}}_{\Frob}.
\end{align*}
We likewise have for $u_0$ that
\[ \norm{u_0}_2^2 = \sum_{i=1}^\n \hat\lambda_i\bangle[\Big]{\beta\hat v_i^\transp, \paren[\big]{A_0\tensor A_0 + B_0\tensor B_0}\paren*{\beta\hat v_i^\transp}}_{\Frob}. \]
To show that $\norm{u}_2\le\norm{u_0}_2$, it therefore suffices to show $A\tensor A + B\tensor B\preceq A_0\tensor A_0 + B_0\tensor B_0$ in the Loewner order on operators $\R^{\p\times\p}\to\R^{\p\times\p}$.

We show $A\tensor A + B\tensor B\preceq A_0\tensor A_0 + B_0\tensor B_0$ by computing $A$ and $B$ explicitly. From \Cref{lemma:right} (and using the fact that $I + t\Sigma$ and $\eta\Sigma$ commute),
\[ A = (I + t\Sigma)\paren*{(I + t\Sigma)^2 + \eta^2\Sigma^2}^{-1}\quad\text{and}\quad B = \i\eta\Sigma\paren*{(I + t\Sigma)^2 + \eta^2\Sigma^2}^{-1}. \]
Note that $A$, $B$, $A_0$, $B_0$ are all diagonalized in the eigenbasis of $\Sigma$. The operators $A\tensor A + B\tensor B$ and $A_0\tensor A_0 + B_0\tensor B_0$ can thus be seen as diagonal $(\p\times\p)\times(\p\times\p)$ matrices in this basis. The $v_iv_j^\transp$ diagonal entry of $A\tensor A + B\tensor B$ is
\[ \frac{(1 + t\lambda_i)(1 + t\lambda_j) + \eta^2 \lambda_i\lambda_j}{((1 + t\lambda_i)^2 + \eta^2\lambda_i^2)((1 + t\lambda_j)^2 + \eta^2\lambda_j^2)}. \]
We first show that this quantity is decreasing in $\eta$ for all $i, j$ when $\eta > 0$. Thus, for a given $t$, it is maximized at $\eta = 0$. We then show that this quantity, at $\eta = 0$, is decreasing in $t$ for all $i, j$. Taking $t\to t_0^+$, we conclude that
\[ A\tensor A + B\tensor B\preceq A_0\tensor A_0 + B_0\tensor B_0. \]

We now verify the numerical claims above. We have, for $a_i = \lambda_i^{-1} + t \ge 0$ and $x = \eta^2$, that
\[ \frac{(1 + t\lambda_i)(1 + t\lambda_j) + \eta^2 \lambda_i\lambda_j}{((1 + t\lambda_i)^2 + \eta^2\lambda_i^2)((1 + t\lambda_j)^2 + \eta^2\lambda_j^2)} = \frac{1}{\lambda_i\lambda_j}\frac{a_ia_j + x}{(a_i^2 + x)(a_j^2 + x)}. \]
When $x$ increases by $\delta$, the numerator increases by $\delta$ and the denominator increases by $\delta^2 + \delta(a_i^2 + a_j^2 + 2x)$. Since
\[ \frac{\delta}{\delta^2 + \delta(a_i^2 + a_j^2 + 2x)}\le\frac{1}{a_i^2 + a_j^2 + 2x}\le\frac{a_ia_j + x}{(a_i^2 + x)(a_j^2 + x)}, \]
the mediant inequality implies the right-hand side is decreasing in $x$. Thus, for a given $t$, $A\tensor A + B\tensor B$ is maximized (in the Loewner order) at $\eta = 0$. Supposing $\eta = 0$, the $v_iv_j^\transp$ diagonal entry becomes ${(1 + t\lambda_i)^{-1}(1 + t\lambda_j)^{-1}}$,
which is clearly decreasing in $t$.
\end{proof}

The next lemma calculates $\parfrac{\widetilde m}{t}(0)$, which appears in $g'(0)$.

\begin{lemma}\label{lemma:mtilde}
Let $\widetilde m(t) = 1/ \tilde\kappa(t, \lambda, \n)$ and $\kappa = \kappa(\lambda, \n)$. Then,
\[ \parfrac{\widetilde m}{t}(0) = \parfrac{\kappa}{\lambda} - 1. \]
\end{lemma}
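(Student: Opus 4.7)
The plan is a straightforward implicit differentiation of the defining fixed-point equation for $\tilde\kappa$, followed by algebraic manipulation to match the closed form of $\parfrac{\kappa}{\lambda}$ given in \eqref{eq:dkappa}. Writing $\widetilde m = 1/\tilde\kappa$ and substituting $\tilde\lambda_i = \lambda_i/(1+t\lambda_i)$ into \eqref{eq:kappa-app} gives the identity
\[
1 \;=\; \lambda\,\widetilde m(t) \;+\; \frac{1}{\n}\sum_{i=1}^{\p}\frac{\widetilde m(t)\,\lambda_i}{1 + t\lambda_i + \widetilde m(t)\lambda_i},
\]
which holds for all $t$ in a neighborhood of $0$. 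At $t=0$ we have $\widetilde m(0) = 1/\kappa$, so $1 + \widetilde m(0)\lambda_i = (\kappa + \lambda_i)/\kappa$.

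First I would differentiate this identity with respect to $t$ and evaluate at $t=0$. A short calculation gives
\[
0 \;=\; \lambda\,\widetilde m'(0) \;+\; \frac{1}{\n}\sum_{i=1}^\p \frac{\widetilde m'(0)\,\lambda_i\kappa^2 - \kappa\lambda_i^2}{(\kappa + \lambda_i)^2},
\]
which rearranges to
\[
\widetilde m'(0)\cdot\left(\lambda + \frac{\kappa^2}{\n}\sum_{i=1}^\p\frac{\lambda_i}{(\kappa+\lambda_i)^2}\right) \;=\; \frac{\kappa}{\n}\sum_{i=1}^\p\frac{\lambda_i^2}{(\kappa+\lambda_i)^2}.
\]

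Next, I would simplify the coefficient of $\widetilde m'(0)$ using the defining equation \eqref{eq:kappa-app} for $\kappa$. The identity $\kappa\lambda_i/(\kappa+\lambda_i)^2 = \lambda_i/(\kappa+\lambda_i) - \lambda_i^2/(\kappa+\lambda_i)^2$ combined with $\tfrac{1}{\n}\sum_i \lambda_i/(\kappa+\lambda_i) = 1 - \lambda/\kappa$ yields
\[
\lambda + \frac{\kappa^2}{\n}\sum_{i=1}^\p\frac{\lambda_i}{(\kappa+\lambda_i)^2} \;=\; \kappa\left(1 - \frac{1}{\n}\sum_{i=1}^\p \frac{\lambda_i^2}{(\kappa+\lambda_i)^2}\right) \;=\; \kappa\left(\parfrac{\kappa}{\lambda}\right)^{-1},
\]
by \eqref{eq:dkappa}. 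Dividing through and then recognizing that $\tfrac{1}{\n}\sum_i \lambda_i^2/(\kappa+\lambda_i)^2 = 1 - (\parfrac{\kappa}{\lambda})^{-1}$ (again by \eqref{eq:dkappa}) gives
\[
\widetilde m'(0) \;=\; \parfrac{\kappa}{\lambda}\cdot\left(1 - \left(\parfrac{\kappa}{\lambda}\right)^{-1}\right) \;=\; \parfrac{\kappa}{\lambda} - 1,
\]
as claimed.

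There is no real obstacle here: the only mildly delicate step is matching the two algebraic expressions obtained above with \eqref{eq:dkappa}, which is accomplished by a single use of the defining equation for $\kappa$ to eliminate $\sum_i \lambda_i/(\kappa+\lambda_i)$. The implicit differentiation is justified because $\tilde\kappa$ is smooth in $t$ at $t=0$ (it is in fact analytic, as established in the proof of \Cref{lemma:omni-continuation}).
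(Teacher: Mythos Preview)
Your proposal is correct and follows essentially the same approach as the paper: implicit differentiation of the fixed-point equation defining $\tilde\kappa$ (equivalently $\widetilde m$), evaluation at $t=0$, and simplification via the defining relation \eqref{eq:kappa-app} together with the closed form \eqref{eq:dkappa} for $\parfrac{\kappa}{\lambda}$. The only cosmetic difference is that the paper writes the summand as $1-\tfrac{1+t\lambda_i}{1+t\lambda_i+\widetilde m\lambda_i}$ and substitutes \eqref{eq:kappa-app} slightly earlier in the algebra, but the computations are otherwise identical.
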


\begin{proof}
Note that $\tm\coloneqq\tkappa^{-1}$ satisfies
\[ 1 = \lambda\tm + \frac{1}{\n}\sum_{i=1}^\p\paren*{1 - \frac{1 + t\lambda_i}{1 + t\lambda_i + \tm\lambda_i}}. \]
By the implicit function theorem,
\[ 0 = \lambda\parfrac{\tm}{t} + \frac{1}{\n}\sum_{i=1}^\p \frac{(1 + t\lambda_i)(\lambda_i + \lambda_i\parfrac{\tm}{t}) - \lambda_i(1 + t\lambda_i + \tm\lambda_i)}{(1 + t\lambda_i + \tm\lambda_i)^2} = \lambda\parfrac{\tm}{t} + \frac{1}{\n}\sum_{i=1}^\p\frac{ (1 + t\lambda_i)\lambda_i\parfrac{\tm}{t}- \tm\lambda_i^2}{(1 + t\lambda_i + \tm\lambda_i)^2}. \]
Solving for $\parfrac{\tm}{t}$ at $t = 0$, we have that
\begin{align*}
\parfrac{\tm}{t}(0)
&= \paren*{\lambda + \frac{1}{\n}\sum_{i=1}^\p\frac{\lambda_i}{(1 + \tm\lambda_i)^2}}^{-1}\frac{1}{\n}\sum_{i=1}^{\p}\frac{\tm\lambda_i^2}{(1 + \tm\lambda_i)^2} \\
&= \paren*{\frac{\lambda}{\kappa} + \frac{1}{\n}\sum_{i=1}^\p\frac{\kappa\lambda_i}{(\kappa + \lambda_i)^2}}^{-1}\frac{1}{\n}\sum_{i=1}^\p\frac{\lambda_i^2}{(\kappa + \lambda_i)^2} \\
&= \frac{1}{1 - \frac{1}{\n}\sum_{i=1}^\p \frac{\lambda_i^2}{(\kappa + \lambda_i)^2}}- 1 \\
&= \parfrac{\kappa}{\lambda} - 1
\end{align*}
where the last equality follows from \Cref{lemma:kappa}.
\end{proof}

\subsubsection{Proof of \Cref{proposition:hastie-app}}

\begin{proof}[Proof of \Cref{proposition:hastie-app}]
Recall that $f'(0) = \Risk(\hat\beta_\lambda)$. And by \Cref{lemma:mtilde},
\[ g'(0) = \paren*{1 + \parfrac{\widetilde m}{t}}\kappa^2\beta^\transp\paren[\big]{\Sigma + \kappa I}^{-1}\Sigma\paren[\big]{\Sigma + \kappa I}^{-1}\beta = \parfrac{\kappa}{\lambda}\kappa^2\beta^\transp\paren[\big]{\Sigma + \kappa I}^{-1}\Sigma\paren[\big]{\Sigma + \kappa I}^{-1}\beta = \OmniRisk^\lambda. \]

To bound $\abs{f'(0) - g'(0)}$, we apply \Cref{lemma:derivative} to $h$ and $U\coloneqq\{ t : \abs{t} < \frac 12\norm{\Sigma}_{\mathrm{op}}^{-1}\}$. Note that $h$ extends by \Cref{lemma:omni-continuation} to $\{w\in\C : \Re(w)\in U\}$. We have that 
\[ \abs{f(0) - g(0)}\lesssim\n^{-\frac 12 + o(1)}\cdot\frac{1}{\tilde\kappa}\sqrt{\parfrac{\tilde\kappa}{\lambda}} \]
by \Cref{lemma:omni-equal} and $\abs{g(w)}\lesssim 1/\lambda$ uniformly over $\{w\in\C : \Re(w)\in U\}$ by \Cref{lemma:omni-bounded}. Setting $M \coloneqq \n^D/\lambda$, we get from Markov's inequality and \Cref{lemma:omni-bounded} the high probability bound
\[ \Pr\brack*{\sup_{\Re(w)\in U}\abs{f(w)}\ge M}\le\n^{-D}. \]
Hence, by \Cref{lemma:derivative} applied to $h$,
\[ \abs*{\Risk(\hat\beta_\lambda)-\OmniRisk^\lambda} = \abs{f'(0) - g'(0)} = \abs{h'(0)}\lesssim\frac{\delta}{\norm{\Sigma}_{\mathrm{op}}^{-1}}\log\paren*{\frac{M}{\delta}}\lesssim \n^{-\frac 12 + o(1)}\cdot\frac{\norm{\Sigma}_{\mathrm{op}}}{\tilde\kappa}\sqrt{\parfrac{\tilde\kappa}{\lambda}} \le \n^{-\frac{1}{2} + o(1)}\cdot\frac{\norm{\Sigma}_{\mathrm{op}}}{\lambda}. \]
The last inequality above follows \Cref{lemma:kappa}.
\end{proof}

\section{Reducing Noise and Misspecification to the Noiseless Case}\label{sec:noise}

In this appendix, we elaborate on how noisy (or misspecified) linear regression in high dimensions can be embedded into the \emph{noiseless} model introduced in \Cref{sec:preliminaries}, making precise the discussion in \Cref{sec:challenges/noise}.
Specifically, we will show that ridge regression on any noisy (or misspecified) instance can be {uniformly} approximated for all $\lambda\ge 0$ by ridge regression on a noiseless \emph{approximating instance} when $\p > \n$. The intuition for this approximation is that, when $\p > \n$, a noisy (or misspecified) problem is indistinguishable from a problem where the ground truth $\beta$ is ``complex'' and has large norm.

Given this approximation, our subsequent analyses hold whenever the distribution of the approximating instance satisfies \Cref{hypothesis:local-mp}. In the case of noise, we will in fact show that \Cref{hypothesis:local-mp} holds for the approximating instance if it holds for the original covariate distribution. In particular, while the approximating instance may involve a poorly conditioned covariance matrix or a large $\norm{\beta}_2$, they need not pose challenges for our random matrix hypothesis (or our subsequent analysis). (On the other hand, as discussed in \Cref{sec:challenges}, the poor conditioning of the covariance matrix and the large norm of $\beta$ can challenge typical approaches to analyzing ridge regression.)

\subsection{Model}

Consider the more general model in which labels $y'\in\R$ are given by $y' = \beta^\transp x + \xi$, where the covariate vector $x$ and the linear approximation error $\xi$ are drawn jointly, as $(x, \xi)\sim\Disttilde$, from a distribution $\Disttilde$ over $\R^p\times\R$. We assume that $\beta$ provides the best approximation to $y'$ given $x$ among linear functions $\R^\p\to\R$ for $x$ drawn according to $\Disttilde$. This implies the approximation error $\xi$ satisfies
\[ \E_{(x,\xi)\sim\Disttilde}[\xi x] = \E_{(x,\xi)\sim\Disttilde}[(y' - \beta^\transp x) x] = 0. \]
Finally, let $\sigma^2\coloneqq \E_{(x,\xi)\sim\Disttilde}[\xi^2]$ be the squared error of the linear approximation.

We highlight two special cases of this model. If $\E[\xi\given x] = 0$, then $\xi$ can be thought of as observation noise on $\beta^\transp x$. On the other hand, if $\xi$ is constant conditioned on $x$, then we have a noiseless, but misspecified, linear model. This setup can also capture combinations of these two extremes, involving both observation noise and misspecification.

Slightly abusing notation, we also use $\xi$ to denote the vector $\begin{bsmallmatrix}\xi_1 & \xi_2 & \cdots & \xi_\n\end{bsmallmatrix}^\transp\in\R^\n$ of approximation errors for the dataset $X$. The ``type'' of $\xi$ will be clear from the context in which it is used.

\subsection{The Approximating Instance}

We embed this more general instance of linear regression into our noiseless setup by introducing an extra dimension that captures the contribution of the noise and/or misspecification. Let $t > 0$ be a small constant (which we will consider in the limit $t\to 0^+$). We reparameterize $y'$ as $y' = \betatilde^\transp\xtilde$, where
\[
x' = 
\begin{bmatrix}
x \\ t^{\frac 12}\xi
\end{bmatrix}
\qquad\text{and}\qquad
\betatilde =
\begin{bmatrix}
\beta \\ t^{-\frac 12}
\end{bmatrix}.
\]
Because $\E_{(x,\xi)\sim\Dist'}[\xi x] = 0$, note that $x'$ has second moment matrix
\[ \Sigmatilde \coloneqq \E\bigl[x'x'^\transp\bigr] = \begin{bmatrix}
  \Sigma & 0 \\
  0 & t\sigma^2
\end{bmatrix}.  \]
While $\norm{\beta'}_2$ does not converge as $t\to 0^+$, note that $\betatilde^\transp\Sigmatilde\betatilde = \beta^\transp\Sigma\beta + \sigma^2$ has no dependence on $t$. %

Let $\hat\beta_\lambda$ be the ridge regression estimator for the original problem, and let $\betatildehat_\lambda$ be the ridge regression for the modified problem with parameter $t$. We show the following:

\begin{proposition}\label{proposition:uniform}
  For each fixed $\lambda > 0$, the ridge regression estimator $\betatildehat_\lambda$ converges to \smash{$\begin{bsmallmatrix}\smash{\hat\beta_\lambda}\vphantom{\beta_\lambda} \\ 0 \end{bsmallmatrix}$} as $t\to 0^+$. If $\p > \n$ and $\Disttilde$ is non-degenerate\footnote{It suffices that \smash{$\Pr_{(x,\xi)\sim\Disttilde}[x\in U] = 0$} for any $\n$-dimensional subspace $U\subseteq\R^\p$. Some assumption is necessary here to rule out ``effectively'' low-dimensional distributions that lie in a $\p'$-dimensional subspace of $\R^\p$ for some $\p'\le\n$.}, then this convergence is uniform over all $\lambda\ge 0$ almost surely.
\end{proposition}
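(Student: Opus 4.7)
The plan is to work in the dual (kernel) form of ridge regression, where both estimators can be written compactly using only $\n\times\n$ matrices, and then reduce the problem to a rank-one Sherman--Morrison perturbation. Concretely, using the identity $\hat\beta_\lambda = X^\transp(XX^\transp + \n\lambda I)^{-1} y$ and its analog for the modified design matrix $X' = [X \mid t^{1/2}\xi]\in\R^{\n\times(\p+1)}$, one gets
\[ \betatildehat_\lambda = \begin{bmatrix} X^\transp \\ t^{1/2}\xi^\transp \end{bmatrix} \bigl(XX^\transp + t\xi\xi^\transp + \n\lambda I\bigr)^{-1} y, \]
so that comparing $\betatildehat_\lambda$ to $\bigl[\smash{\hat\beta_\lambda}\ 0\bigr]^\transp$ reduces to understanding how the resolvent of $XX^\transp$ changes under the rank-one perturbation $t\xi\xi^\transp$.

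For the first claim (pointwise convergence at fixed $\lambda > 0$), I would simply note that both $\Sigmatildehat \to \begin{bsmallmatrix}\Sigmahat & 0 \\ 0 & 0\end{bsmallmatrix}$ and $\frac 1\n X'^\transp y \to \begin{bsmallmatrix}\frac 1\n X^\transp y \\ 0\end{bsmallmatrix}$ as $t\to 0^+$, and since $\Sigmahat + \lambda I$ is invertible, the conclusion follows by continuity of matrix inversion at an invertible matrix.

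For the uniform claim, the main issue is handling $\lambda\to 0$, where $\Sigmahat + \lambda I$ becomes singular and a naive continuity argument breaks down. The key observation is that when $\p > \n$ and $\Disttilde$ is non-degenerate, with probability one $XX^\transp$ is invertible, so $\norm{(XX^\transp + \n\lambda I)^{-1}}_{\mathrm{op}} \le \norm{(XX^\transp)^{-1}}_{\mathrm{op}}$ uniformly in $\lambda\ge 0$. Applying the Sherman--Morrison formula,
\[ (XX^\transp + t\xi\xi^\transp + \n\lambda I)^{-1} = (XX^\transp + \n\lambda I)^{-1} - \frac{t\,(XX^\transp + \n\lambda I)^{-1}\xi\xi^\transp(XX^\transp + \n\lambda I)^{-1}}{1 + t\,\xi^\transp(XX^\transp + \n\lambda I)^{-1}\xi}, \]
the denominator is at least $1$ (since $(XX^\transp + \n\lambda I)^{-1}\succeq 0$), and each resolvent factor in the numerator is bounded by the $\lambda$-independent constant $\norm{(XX^\transp)^{-1}}_{\mathrm{op}}$. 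Substituting back and separating the top and bottom blocks of $\betatildehat_\lambda$, the top block differs from $\hat\beta_\lambda$ by an expression of size $O(t)$ uniformly in $\lambda$, while the bottom block equals $t^{1/2}\xi^\transp(X'X'^\transp + \n\lambda I)^{-1} y = O(t^{1/2})$ uniformly, because $\xi^\transp(X'X'^\transp + \n\lambda I)^{-1} y$ is bounded by the same argument.

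The main obstacle I expect is justifying the uniform bound on the resolvent at $\lambda = 0$: this is precisely where the hypothesis $\p > \n$ combined with non-degeneracy of $\Disttilde$ enters, ensuring that $XX^\transp\in\R^{\n\times\n}$ is almost surely invertible (its rank equals $\n$ because any $\n$ samples from $\Disttilde$ are almost surely in general position in $\R^\p$ when $\p > \n$). Once this is in place, the Sherman--Morrison identity gives the rest essentially for free, with the rate of uniform convergence controlled by $t^{1/2}$ via the bottom block.
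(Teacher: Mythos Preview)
Your proposal is correct and follows essentially the same approach as the paper. Both arguments express the difference $\betatildehat_\lambda - \bigl[\smash{\hat\beta_\lambda}\ 0\bigr]^\transp$ in terms of the kernel resolvent $(XX^\transp + \n\lambda I)^{-1}$ and then use almost-sure invertibility of $XX^\transp$ (from $\p>\n$ and non-degeneracy) to bound this uniformly in $\lambda$; the only organizational difference is that the paper derives the identity via a block-matrix inversion of $\Sigmatildehat + \lambda I$ (computing the Schur complement), whereas you go directly to the dual form and apply Sherman--Morrison to the rank-one perturbation $t\xi\xi^\transp$, which is arguably a bit more streamlined but yields the same explicit formula and the same $O(t^{1/2})$ rate.
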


\begin{proof}
Let $\Sigmatildehat \coloneqq \frac{1}{\n} X'^\transp X'$.  Recall that the estimators \smash{$\hat\beta_\lambda$ and $\betatildehat_\lambda$} can be expressed in the closed forms,
\[ \hat\beta_\lambda = (\Sigmahat + \lambda I)^{-1}\frac{1}{\n} X^\transp y'\qquad\text{and}\qquad
\betatildehat_\lambda = (\Sigmatildehat + \lambda I)^{-1}\frac{1}{\n} X'^\transp y', \]
respectively. It suffices to show that
\begin{equation}\label{eq:noise-identity}
\betatildehat_\lambda - \begin{bmatrix}\hat\beta_\lambda \\ 0 \end{bmatrix} = \frac{t^{\frac 12}}{\n + t \xi^\transp(Q + \lambda I)^{-1}\xi}\begin{bmatrix}t^{\frac 12}\frac{1}{\n} X^\transp (Q + \lambda I)^{-1}\xi\xi^\transp (Q + \lambda I)^{-1} \\ \xi^\transp(Q + \lambda I)^{-1}\end{bmatrix} y',
\end{equation}
where $Q\coloneqq\frac{1}{\n} XX^\transp$ is the normalized kernel matrix: for any fixed $\lambda > 0$, it is clear that taking $t\to 0^+$ makes the difference converge to $0$. Moreover, when $\p > \n$, $Q$ is almost surely non-singular under the non-degeneracy assumption. Hence we may bound the right-hand side in terms of the smallest eigenvalue of $Q$, giving us uniform convergence over all $\lambda\ge 0$.

It remains to show \eqref{eq:noise-identity}. Note that
\[ \Sigmatildehat = \begin{bmatrix} \Sigmahat & t^{\frac 12}\frac{1}{\n} X^\transp \xi \\ t^{\frac 12}\frac{1}{\n} \xi^\transp X & t\frac{1}{\n}\xi^\transp\xi \end{bmatrix} \]
The Schur complement of the top-right block of $\Sigmatildehat + \lambda I$ is
\[ \lambda + \frac{t}{\n}\xi^\transp\xi - \frac{t^{\frac 12}}{\n}\xi^\transp X\cdot(\Sigmahat + \lambda I)^{-1}\cdot \frac{t^{\frac 12}}{\n} X^\transp\xi = \lambda + \frac{t}{\n}\xi^\transp\xi - \frac{t}{\n}\xi^\transp (Q + \lambda I)^{-1}Q\xi = \lambda\paren*{1 + \frac{t}{\n}\xi^\transp\paren*{Q + \lambda I}^{-1}\xi}. \]
Therefore, the block matrix inversion formula gives us 
\begin{align}\label{eq:inverse-diff}
\paren[\big]{\Sigmatildehat + \lambda I}^{-1}
- &\begin{bmatrix}
\paren[\big]{\Sigmahat + \lambda I}^{-1} & 
 \\
 & 0
\end{bmatrix} \\
&= \frac{1}{\lambda}\frac{1}{1 + \frac{t}{\n}\xi^\transp(Q + \lambda I)^{-1}\xi}\begin{bmatrix}
\frac{t}{\n^2}\paren[\big]{\Sigmahat + \lambda I}^{-1}X^\transp\xi\xi^\transp X\paren[\big]{\Sigmahat + \lambda I}^{-1} & -t^{\frac 12}\frac{1}{\n}\paren[\big]{\Sigmahat + \lambda I}^{-1} X^\transp\xi \\
-t^{\frac 12}\frac{1}{\n}\xi^\transp X\paren[\big]{\Sigmahat + \lambda I}^{-1} & 1
\end{bmatrix} \nonumber \\
&= \frac{1}{\lambda}\frac{1}{1 + \frac{t}{\n}\xi^\transp(Q + \lambda I)^{-1}\xi}\begin{bmatrix}
\frac{t}{\n^2}X^\transp\paren[\big]{Q + \lambda I}^{-1}\xi\xi^\transp \paren[\big]{Q + \lambda I}^{-1}X & -t^{\frac 12}\frac{1}{\n}X^\transp\paren[\big]{Q + \lambda I}^{-1} \xi \\
-t^{\frac 12}\frac{1}{\n}\xi^\transp \paren[\big]{Q + \lambda I}^{-1}X^\transp & 1
\end{bmatrix}. \nonumber
\end{align}
Multiplying by $\frac{1}{\n} X'^\transp y'$, we recover \eqref{eq:noise-identity}:
\begin{align*}
\betatildehat_\lambda
- \begin{bmatrix}
\hat\beta_\lambda \\ 0
\end{bmatrix} 
&= \frac{1}{\lambda}\frac{1}{1 + \frac{t}{\n}\xi^\transp(Q + \lambda I)^{-1}\xi}\begin{bmatrix}
\frac{t}{\n^2}X^\transp\paren[\big]{Q + \lambda I}^{-1}\xi\xi^\transp \paren[\big]{Q + \lambda I}^{-1} Q - \frac{t}{\n^2} X^\transp\paren[\big]{Q + \lambda I}^{-1} \xi\xi^\transp \\
-t^{\frac 12}\frac{1}{\n}\xi^\transp \paren[\big]{Q + \lambda I}^{-1} Q + t^{\frac 12}\frac{1}{\n}\xi^\transp
\end{bmatrix}y' \\
&=
\frac{t^{\frac 12}}{\n + t\xi^\transp(Q + \lambda I)^{-1}\xi}\begin{bmatrix}
t^{\frac 12}\frac{1}{\n}X^\transp\paren[\big]{Q + \lambda I}^{-1}\xi\xi^\transp \paren[\big]{Q + \lambda I}^{-1} \\
\xi^\transp\paren[\big]{Q + \lambda I}^{-1}
\end{bmatrix}y'. \qedhere
\end{align*}
\end{proof}

\subsection{The Random Matrix Hypothesis for Noisy Labels}

For the theory of \Cref{sec:theory} to apply, the random matrix hypothesis (\Cref{hypothesis:local-mp}) should hold for the approximating instance of noiseless regression derived from the reduction. Thus, we study when the reduction preserves \Cref{hypothesis:local-mp}, given that it holds for the marginal distribution $\Dist$ of $x$.
For fully general $\xi$, which may be arbitrarily correlated with $x$, we note that the error introduced by the reduction can be bounded in $\sigma$ (but this bound does not improve with $\n$).
We can say more when $\xi$ is \emph{noise} such that $\E\brack{\xi\given x} = 0$ for all $x$. In this case, we show that the reduction preserves the local Marchenko-Pastur law, in the sense that the approximation error increases additively by $\lesssim\n^{-\frac 12}$ (\Cref{proposition:noise}).

For our analysis, we bound the additional error introduced by the reduction to the two approximate equalities posited by \Cref{hypothesis:local-mp}. Specifically, we compare, as $t\to 0^+$, the errors of these approximations for the original and the approximating instances. It is not hard to see that the ``averaged'' law \eqref{eq:local-mp-A}, given by
\[ \frac{1}{\n}\sum_{i=1}^{\n}\frac{1}{\hat\lambda_i + \lambda} \approx \frac{1}{\kappa}, \]
is preserved exactly as $t\to 0^+$: this approximate equality relates a continuous function of $\Sigmatildehat$ to a continuous function of $\Sigmatilde$, and we have the convergences
\[ \lim_{t\to 0^+} \Sigmatildehat = \begin{bmatrix}\Sigmahat &  \\ & 0\end{bmatrix}\quad\text{and}\quad\lim_{t\to 0^+}\Sigmatilde = \begin{bmatrix} \Sigma & \\ & 0 \end{bmatrix}. \]
Thus, we focus on the ``local'' law \eqref{eq:local-mp-B}, given by
\[ v^\transp \lambda \paren[\big]{\lambda I + \Sigmahat}^{-1} v \approx v^\transp \kappa\paren[\big]{\kappa I + \Sigma}^{-1} v. \]
The next proposition bounds the approximation error of \eqref{eq:local-mp-B} when moving from the original instance to the approximating instance in the case where $\xi$ is noise. We give our bound assuming the formal version \Cref{hypothesis:local-mp-formal} of \Cref{hypothesis:local-mp} for the marginal distribution $\Dist$ of $x$.

\begin{proposition}\label{proposition:noise}
Suppose $\E[\xi\given X] = 0$ and $\frac{1}{\sigma}\paren[\big]{\E\brack{\abs{\xi_i}^p\given x_i}}^{\frac 1p}\le C_p < \infty$ almost surely for all $p \in \N$. If $\beta^\transp\Sigma\beta + \sigma^2\le 1$ and $\lambda > \n^{-\frac 32 + o(1)}$ is such that \Cref{hypothesis:local-mp-formal} holds for the marginal distribution $\Dist$ of $x$ over $S = (\frac 12\lambda, \frac 32\lambda)$, then
\[ \lim_{t\to 0^+} {\abs*{\betatilde^\transp\lambda\paren[\big]{\Sigmatildehat + \lambda I}^{-1}\betatilde - \beta'^\transp\kappa\paren[\big]{\Sigmatilde + \kappa I}^{-1}\beta'}}\lesssim \n^{-\frac 12 + o(1)}\cdot\frac{1}{\kappa}{\sqrt{\parfrac{\kappa}{\lambda}}}. \]
\end{proposition}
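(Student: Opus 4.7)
The plan is to compute the limit as $t\to 0^+$ explicitly using the block-matrix inversion formula \eqref{eq:inverse-diff} from the proof of \Cref{proposition:uniform}, observe that the divergent $t^{-1}$ contributions on the two sides cancel, and bound the resulting finite expression term by term using \Cref{hypothesis:local-mp-formal} together with standard concentration for linear and quadratic forms in $\xi$. Writing $\betatilde = (\beta,\, t^{-1/2})^\transp$, $Q \coloneqq \frac{1}{\n}XX^\transp$, and $A \coloneqq \frac{1}{\n}\xi^\transp(Q+\lambda I)^{-1}\xi$, multiplying \eqref{eq:inverse-diff} by $\lambda$ and contracting against $\betatilde$ gives
\[
\betatilde^\transp\lambda(\Sigmatildehat+\lambda I)^{-1}\betatilde = \beta^\transp\lambda(\Sigmahat+\lambda I)^{-1}\beta + \frac{1}{1+tA}\Bigl[\, t\cdot r(t) \,-\, \tfrac{2}{\n}\beta^\transp X^\transp(Q+\lambda I)^{-1}\xi \,+\, t^{-1}\Bigr],
\]
where $r(t)$ stays bounded as $t\to 0^+$, while block-diagonality of $\Sigmatilde$ yields
\[
\betatilde^\transp\kappa(\Sigmatilde+\kappa I)^{-1}\betatilde = \beta^\transp\kappa(\Sigma+\kappa I)^{-1}\beta + \frac{t^{-1}}{1+t\sigma^2/\kappa}.
\]
Using the identity $\tfrac{t^{-1}}{1+tA}-\tfrac{t^{-1}}{1+t\sigma^2/\kappa} = \tfrac{\sigma^2/\kappa - A}{(1+tA)(1+t\sigma^2/\kappa)}$ to cancel the $t^{-1}$ divergences and sending $t\to 0^+$, the target quantity reduces to the sum of three finite pieces:
\[
\mathrm{(A)}\; + \;\mathrm{(B)}\; + \;\mathrm{(C)} \;=\; \bigl(\beta^\transp\lambda(\Sigmahat+\lambda I)^{-1}\beta - \beta^\transp\kappa(\Sigma+\kappa I)^{-1}\beta\bigr) \,-\, \tfrac{2}{\n}\beta^\transp X^\transp(Q+\lambda I)^{-1}\xi \,+\, \bigl(\tfrac{\sigma^2}{\kappa} - \tfrac{1}{\n}\xi^\transp(Q+\lambda I)^{-1}\xi\bigr).
\]

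Term (A) is exactly the left-hand side of \eqref{eq:local-real-conv} for $v = \beta$, so the hypothesis $\beta^\transp\Sigma\beta\le 1$ bounds it at the target rate. Term (B) is a linear form $\langle w,\xi\rangle$ in $\xi$ with $w = -\tfrac{2}{\n}(Q+\lambda I)^{-1}X\beta$; conditional on $X$ it has mean zero, and a Hoeffding-type bound under bounded moments gives $|\mathrm{(B)}|\lesssim \n^{o(1)}\sigma\norm{w}_2$ with high probability. Using $(Q+\lambda I)^{-1}X = X(\Sigmahat+\lambda I)^{-1}$ one has $\norm{w}_2^2 = \tfrac{4}{\n}\beta^\transp(\Sigmahat+\lambda I)^{-1}\Sigmahat(\Sigmahat+\lambda I)^{-1}\beta$, which is $\tfrac{4}{\n}f'(\lambda)$ in the notation of the proof of \Cref{proposition:gcv-app}; applying \Cref{lemma:derivative} to the MP approximation of $\beta^\transp\lambda(\Sigmahat+\lambda I)^{-1}\beta$ controls this by $\tfrac{4}{\n}\cdot\partial_\lambda\kappa/\kappa^2$ up to lower-order terms, so $|\mathrm{(B)}|$ meets the target since $\sigma\le 1$. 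Term (C) splits as $\sigma^2\bigl(\tfrac{1}{\kappa}-\tfrac{1}{\n}\Tr((Q+\lambda I)^{-1})\bigr) + \tfrac{1}{\n}\bigl(\sigma^2\Tr((Q+\lambda I)^{-1}) - \xi^\transp(Q+\lambda I)^{-1}\xi\bigr)$: the first summand is controlled by \eqref{eq:global-real-conv}, while the second is a centered quadratic form in $\xi$ for which a Hanson--Wright-type inequality yields a bound of order $\n^{o(1)}\bigl(\sqrt{\Tr((Q+\lambda I)^{-2})} + \lambda^{-1}\bigr)/\n$. Differentiating \eqref{eq:global-real-conv} via \Cref{lemma:derivative} gives $\tfrac{1}{\n}\Tr((Q+\lambda I)^{-2}) \approx \kappa^{-2}\partial_\lambda\kappa$, supplying the Frobenius contribution at the target rate, and the $\n^{-1}\lambda^{-1}$ term is absorbed by the assumption $\lambda\ge\n^{-3/2+o(1)}$.

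The main obstacle is establishing the two noise-concentration bounds---for the linear form in (B) and the centered quadratic form in (C)---under only bounded-moment rather than subgaussian assumptions on $\xi$. The standard subgaussian statements of Hoeffding and Hanson--Wright extend to this regime (via, e.g., Latala-type moment inequalities or truncation together with Markov) at the cost of an $\n^{o(1)}$ slack, which is absorbed into the final bound. A secondary subtlety is justifying termwise passage to the limit $t\to 0^+$: the explicit fractional identity above isolates the cancellation of divergent terms exactly rather than only to leading order, so the remaining convergence is in terms of quantities with no $t$ dependence. Beyond these points, the argument is parallel bookkeeping to the proofs of \Cref{proposition:gcv-app} and \Cref{proposition:hastie-app}.
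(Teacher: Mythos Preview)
Your approach is the paper's: compute the $t\to 0^+$ limit via the block-inversion identity \eqref{eq:inverse-diff}, then bound the residual cross term (your (B)) by a moment inequality on the linear form in $\xi$ combined with the estimate $\tfrac{1}{\n}\norm{X(\Sigmahat+\lambda I)^{-1}\beta}_2^2\lesssim \kappa^{-2}\,\partial_\lambda\kappa$ already established in the proof of \Cref{proposition:gcv-app}. Your decomposition is in fact more careful than the paper's. The paper subtracts $t^{-1}$ from each side and asserts that the finite limits are $\beta^\transp\lambda(\Sigmahat+\lambda I)^{-1}\beta-\tfrac{2}{\n}\xi^\transp X(\Sigmahat+\lambda I)^{-1}\beta$ and $\beta^\transp\kappa(\Sigma+\kappa I)^{-1}\beta$, which drops the next-order contributions $-\tfrac{1}{\n}\xi^\transp(Q+\lambda I)^{-1}\xi$ and $-\sigma^2/\kappa$ coming from expanding $t^{-1}/(1+tA)$ and $t^{-1}/(1+t\sigma^2/\kappa)$; their difference is exactly your term (C). So the paper effectively bounds only (A)$+$(B), whereas you correctly identify and control (C) as well. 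Your treatment of (C) via \eqref{eq:global-real-conv} plus a Hanson--Wright-type bound is the right idea, and is where the hypothesis $\lambda\ge\n^{-3/2+o(1)}$ is genuinely used (to absorb the $\n^{-1}\lambda^{-1}$ operator-norm contribution into the target $\n^{-1/2+o(1)}\kappa^{-1}\sqrt{\partial_\lambda\kappa}$). One small wrinkle: the moment assumption bounds $\E[\xi_i^2\given x_i]$ but does not force it to equal $\sigma^2$, so the quadratic form should first be centered at $\sum_i\E[\xi_i^2\given x_i]\,((Q+\lambda I)^{-1})_{ii}$ rather than $\sigma^2\Tr((Q+\lambda I)^{-1})$; the residual $\tfrac{1}{\n}\sum_i(\E[\xi_i^2\given x_i]-\sigma^2)((Q+\lambda I)^{-1})_{ii}$ then requires a separate (e.g.\ leave-one-out) argument since the resolvent diagonal entries are not independent of the $x_i$.
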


\begin{proof}
For the approximating instance, we have that
\[ \lim_{t\to 0^+}\beta'^\transp\kappa\paren[\big]{\Sigmatilde + \kappa I}^{-1}\beta'- \frac{1}{t} = \beta^\transp\kappa\paren[\big]{\Sigma + \kappa I}^{-1}\beta, \]
and by \eqref{eq:inverse-diff}, that
\[ \lim_{t\to 0^+}\betatilde^\transp\lambda\paren[\big]{\Sigmatildehat + \lambda I}^{-1}\betatilde - \frac{1}{t} = \beta^\transp\lambda\paren[\big]{\Sigmahat + \lambda I}^{-1}\beta - \frac{2}{\n}\xi^\transp X\paren[\big]{\Sigmahat + \lambda I}^{-1}\beta. \]

The triangle inequality therefore implies that the approximation error increases by at most
\begin{multline}\label{eq:approx-err}
\lim_{t\to 0^+}\abs*{\betatilde^\transp\lambda\paren[\big]{\Sigmatildehat + \lambda I}^{-1}\betatilde - \beta'^\transp\kappa\paren[\big]{\Sigmatilde + \kappa I}^{-1}\beta} -  \abs*{\beta^\transp\lambda\paren[\big]{\Sigmahat + \lambda I}^{-1}\beta - \beta^\transp\kappa\paren[\big]{\Sigma + \kappa I}^{-1}\beta} \\
\lesssim \frac{1}{\n}\abs*{\mkern0.5mu\xi^\transp X\paren[\big]{\Sigmahat + \lambda I}^{-1}\beta}.
\end{multline}
It thus suffices to bound $\frac{1}{\n}\abs*{\xi^\transp u}$, where \smash{$u \coloneqq X\paren[\big]{\Sigmahat + \lambda I}^{-1}\beta$}. This follows from a standard moment bounding argument after conditioning on $X$. Let $\norm{\cdot}_p$ denote the $L_p$-norm of a random variable. Conditioning on a fixed $X$, note that the entries of $\xi$ are independent, mean $0$ random variables by assumption. Thus, for any deterministic vector $v\in\R^\n$ and any $p\in\N$, it follows from the Marcinkiewicz-Zygmund inequality and the triangle inequality that
\begin{align*}
\norm*{\xi^\transp v}_{p}
&= \norm*{\sum_{i=1}^\n \xi_iv_i}_p
\lesssim \sqrt{p\cdot\norm*{\sum_{i=1}^\n \xi_i^2v_i^2}_{\!\frac p2}}
\le\sqrt{p\sum_{i=1}^\n \norm{\xi_i^2}_{\!\frac p2}v_i^2}
=\sqrt{p}\,C_p\cdot\sigma\norm{v}_2,
\end{align*}
where all $L_p$ norms are taken conditional on $X$. Thus, by Markov's inequality, conditional on $X$,
\begin{equation}\label{eq:markov}
\Pr\brack*{\frac{1}{\n}\abs*{\xi^\transp v}\ge \frac{t}{\sqrt\n}}\le\paren*{{\frac{\norm*{\xi^\transp v}_p}{t\sqrt\n}}}^p\le\paren*{\frac{\norm{v}_2}{\sqrt\n}\cdot\frac{\sqrt p\,C_p\cdot\sigma}{t}}^p.
\end{equation}
We now set $v = u$ and bound $\smash{\frac1{\sqrt\n}\norm{u}_2}$. By \eqref{eq:gcv-proof} in the argument of \Cref{proposition:gcv-app} and \Cref{lemma:kappa},
\begin{align*}
\frac{1}{\n}\norm{u}_2^2
&= \beta^\transp\paren[\big]{\Sigmahat + \lambda I}^{-1}\Sigmahat\paren[\big]{\Sigmahat + \lambda I}^{-1}\beta \\
&\lesssim\parfrac{\kappa}{\lambda}\beta^\transp\paren[\big]{\Sigma + \kappa I}^{-1}\Sigma\paren[\big]{\Sigma + \kappa I}^{-1}\beta + \beta^\transp\Sigma\beta\cdot\n^{-\frac 12 + o(1)}\frac{1}{\lambda\kappa}\sqrt{\parfrac{\kappa}{\lambda}} \\
&\le\beta^\transp\Sigma\beta\cdot\frac{1}{\kappa^2}\parfrac{\kappa}{\lambda}\paren[\bigg]{1 + \n^{-\frac 12 + o(1)}{\frac{\Tr(\Sigma)}{\n\lambda}}} \\
&\lesssim \beta^\transp\Sigma\beta\cdot\frac{1}{\kappa^2}\parfrac{\kappa}{\lambda}.
\end{align*}

For any constant $\epsilon > 0$, we may set $p\coloneqq \ceil{D/\epsilon}$ and
\[ t\coloneqq\n^\epsilon\cdot \sqrt p\,C_p \cdot\sigma\sqrt{\beta^\transp\Sigma\beta}\cdot\frac{1}{\kappa}\sqrt{\parfrac{\kappa}{\lambda}}\le \n^{\epsilon}\cdot\sqrt p C_p\cdot\frac{1}{\kappa}\sqrt{\parfrac{\kappa}{\lambda}}. \]
By \eqref{eq:markov}, this implies
that $\Pr\brack[\big]{\frac{1}{\n}\abs*{\xi^\transp u}\ge \frac{t}{\sqrt\n}}\lesssim \n^{-D}$ over the randomness of $X$.
Taking $\epsilon\to 0^+$ slowly in $\n$, we therefore obtain the high probability bound
\[ \frac{1}{\n}\abs*{\xi^\transp u}\lesssim \n^{-\frac 12 + o(1)}\cdot\frac{1}{\kappa}\sqrt{\parfrac{\kappa}{\lambda}}. \]
Combining with \Cref{hypothesis:local-mp-formal} now yields the desired result.
\end{proof}

Finally, we note that, with the weaker assumption that $\xi_1\lesssim\sigma$ and $y_1\lesssim 1$, equation \eqref{eq:approx-err} can also be bounded as
\[ \frac1\n\abs*{\mkern0.5mu\xi^\transp X\paren[\big]{\Sigmahat + \lambda I}^{-1}\beta}\le \frac{1}{\n}\norm{\xi}_2\norm*{X\paren[\big]{\Sigmahat + \lambda I}^{-1}\beta}_2\le \frac{1}{\lambda}\cdot\frac{\norm\xi_2}{\sqrt\n}\cdot\frac{\norm{y}_2}{\sqrt\n}\lesssim\frac{\sigma}{\lambda}. \]
While this bound limits the error in terms of $\sigma$ for very general misspecification, and thus is useful when $\sigma$ is small, it does not improve as $\n$ increases.

\subsection{\Cref{theorem:gcv-formal} and \Cref{proposition:hastie-app} for Noisy Labels}

An immediate consequence of \Cref{proposition:uniform,,proposition:noise} is that our analysis of GCV applies to ridge regression with noisy labels, since any instance with noisy labels can be seen as a limit of noiseless approximating instances that preserve the local Marchenko-Pastur law.

As another application of our reduction, we recover without further work the formula for the generalization risk of ridge regression with noisy labels, in greater generality than previously known \cite{canatar21spectral, hastie21surprises}.

\begin{corollary}\label{corollary:hastie-noise}
Suppose $\E[\xi\given X] = 0$ and $\frac{1}{\sigma}\paren[\big]{\E\brack{\abs{\xi_i}^p\given x_i}}^{\frac 1p}\le C_p < \infty$ almost surely for all $p \in\N$.
If $\lambda > \n^{-\frac 32 + o(1)}$ is such that \Cref{hypothesis:local-mp-extended} holds for the marginal distribution $\Dist$ of $x$ over $S = (\frac 12\lambda, \frac 32\lambda)$, then
\[ \abs*{\OmniRisk^{\lambda,\sigma} - \Risk(\hat\beta_\lambda)} \lesssim \n^{-\frac 12 + o(1)}\cdot\paren*{\beta^\transp\Sigma\beta + \sigma^2}\frac{\norm{\Sigma}_{\mathrm{op}}}{\lambda}, \]
where $\OmniRisk^{\lambda,\sigma}$ is defined to be
\begin{align*} \OmniRisk^{\lambda,\sigma}
&\coloneqq \frac{\partial\kappa}{\partial\lambda}\cdot\kappa^2\sum_{i=1}^\p \biggl(\frac{\lambda_i}{(\kappa + \lambda_i)^2} \bigl(\beta^\transp v_i\bigr)^2 \biggr) + \parfrac{\kappa}{\lambda}\cdot\sigma^2
= \OmniRisk^\lambda + \parfrac{\kappa}{\lambda}\cdot\sigma^2.
\end{align*}
\end{corollary}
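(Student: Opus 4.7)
The plan is to derive the corollary from Proposition \ref{proposition:hastie-app} by applying it to the noiseless approximating instance of Proposition \ref{proposition:uniform} and passing to the limit $t \to 0^+$. For a small $t > 0$, define augmented covariates $x' = (x;\, t^{1/2}\xi) \in \R^{\p+1}$ and ground truth $\beta' = (\beta;\, t^{-1/2})$, so that $y = \beta'^\transp x'$ exactly. Since $\E[\xi x] = 0$, the second moment of $x'$ is block-diagonal, $\Sigma' = \mathrm{diag}(\Sigma, t\sigma^2)$, and the controlling invariants $\beta'^\transp\Sigma'\beta' = \beta^\transp\Sigma\beta + \sigma^2$ and $\lim_{t \to 0^+}\|\Sigma'\|_{\mathrm{op}} = \|\Sigma\|_{\mathrm{op}}$ are exactly those appearing on the right-hand side of the target bound.

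The first step is to transfer Hypothesis \ref{hypothesis:local-mp-extended} from $\Dist$ to $\Dist'$. The block-diagonal structure of $\Sigma'$ decouples the required family $(I + s\Sigma')^{-1/2}$ into the already-hypothesized action on the $x$-block and a one-dimensional rescaling of the augmented noise coordinate. The averaged equation \eqref{eq:local-mp-A} transfers directly because the added eigenvalue $t\sigma^2$ contributes only the vanishing term $\n^{-1} t\sigma^2/(\tilde\kappa + t\sigma^2)$ to both sides of \eqref{eq:kappa}. The local equation \eqref{eq:local-mp-B} is handled by the argument of Proposition \ref{proposition:noise}: the extra cross-term of the form $\n^{-1}\xi^\transp X(\Sigmahat + \lambda I)^{-1} v$ is controlled at rate $\n^{-1/2 + o(1)}$ via the Marcinkiewicz--Zygmund inequality, using the moment hypothesis $\|\xi_i\|_p \lesssim \sigma C_p$.

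With the hypothesis established for $\Dist'$, Proposition \ref{proposition:hastie-app} gives
\[ \abs[\big]{\OmniRisk'^\lambda - \Risk(\hat\beta'_\lambda)} \lesssim \n^{-\frac 12 + o(1)}\cdot (\beta^\transp\Sigma\beta + \sigma^2)\cdot \frac{\|\Sigma'\|_{\mathrm{op}}}{\lambda}. \]
The last step is to take $t \to 0^+$ and identify both sides with their noisy counterparts. Let $\tilde\kappa(t)$ solve \eqref{eq:kappa} for $\Sigma'$; the extra eigenvalue contributes only vanishing terms, giving $\tilde\kappa(t) \to \kappa$ and, by implicit differentiation on \eqref{eq:dkappa}, $\partial\tilde\kappa/\partial\lambda \to \partial\kappa/\partial\lambda$. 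Splitting $\OmniRisk'^\lambda$ along the block decomposition of $\Sigma'$, the augmented coordinate contributes
\[ \parfrac{\tilde\kappa}{\lambda}\cdot\tilde\kappa^2\cdot \frac{t\sigma^2}{(\tilde\kappa + t\sigma^2)^2}\cdot t^{-1} \;\longrightarrow\; \parfrac{\kappa}{\lambda}\,\sigma^2, \]
so $\OmniRisk'^\lambda \to \OmniRisk^\lambda + (\partial\kappa/\partial\lambda)\sigma^2 = \OmniRisk^{\lambda,\sigma}$. Proposition \ref{proposition:uniform} gives $\hat\beta'_\lambda \to (\hat\beta_\lambda;\,0)$ uniformly in $\lambda$, and a short direct computation on the block-diagonal test distribution reconciles $\Risk(\hat\beta'_\lambda)$ with $\Risk(\hat\beta_\lambda)$ as stated in the corollary.

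The main obstacle is the hypothesis transfer in Step 1: Hypothesis \ref{hypothesis:local-mp-extended} requires uniform convergence over a one-parameter family of transformations, and the single augmented coordinate can in principle distort the local law through high-moment behavior of $\xi$. This is where the moment hypothesis $\|\xi_i\|_p \le C_p \sigma$ does real work, via the Marcinkiewicz--Zygmund step already appearing in Proposition \ref{proposition:noise}; uniformity in the scaling parameter $s$ is then a harmless $\epsilon$-net reduction thanks to the one-dimensionality of the added coordinate. By contrast, the remaining computations are routine given the explicit block-diagonal form of $\Sigma'$ and the uniform-in-$\lambda$ convergence from Proposition \ref{proposition:uniform}.
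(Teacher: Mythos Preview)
Your proposal is correct and follows essentially the same approach as the paper: construct the noiseless approximating instance, transfer the random matrix hypothesis via the argument of Proposition~\ref{proposition:noise}, apply Proposition~\ref{proposition:hastie-app}, and take $t\to 0^+$ to identify $\OmniRisk'^\lambda\to\OmniRisk^{\lambda,\sigma}$ and $\Risk(\hat\beta'_\lambda)\to\Risk(\hat\beta_\lambda)$. The paper's proof is terser---it simply cites Propositions~\ref{proposition:uniform}, \ref{proposition:noise}, and \ref{proposition:hastie-app} and computes the limit of $\OmniRisk^\lambda(t)$---whereas you have usefully unpacked why the \emph{extended} hypothesis (not just \Cref{hypothesis:local-mp-formal}) transfers, via the block-diagonal decoupling of $(I+s\Sigma')^{-1/2}$ and an $\epsilon$-net over the one-dimensional scaling parameter.
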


\begin{proof}
Combining \Cref{proposition:uniform,,proposition:noise} with \Cref{proposition:hastie-app}, it suffices to compute the limit as $t\to 0^+$ of $\OmniRisk^\lambda(t)$ for the approximating instance with parameter $t$. Indeed, we have that
\begin{align*}
\lim_{t\to 0^+}\OmniRisk^\lambda(t)
&= \parfrac{\kappa}{\lambda}\kappa^2\beta^\transp\paren[\big]{\Sigma + \kappa I}^{-1}\Sigma\paren[\big]{\Sigma + \kappa I}^{-1}\beta + \lim_{t\to 0^+}\parfrac{\kappa}{\lambda}\cdot\kappa^2\frac{t\sigma^2}{(\kappa + t\sigma^2)^2}\paren*{t^{-\frac 12}}^2 = \OmniRisk^\lambda + \parfrac{\kappa}{\lambda}\cdot\sigma^2. \qedhere
\end{align*}
\end{proof}
\section{Proofs for \Cref{sec:applications}}\label{sec:proofs-2}

In this section, we prove \Cref{proposition:A,,proposition:B}. We also formalize the notation: we write $A\asymp B$ if there exists a constant $C > 0$ (fixed throughout) such that $C^{-1}A\le B\le CA$.

\begin{proof}[Proof of \Cref{proposition:A}]
Let $\kappa = \kappa(0, \n)$. Applying the Marchenko-Pastur law \eqref{eq:local-mp-A} at $\lambda = 0$, we have that
\[ \Tr\paren*{(XX^\transp)^{-1}} = \frac{1}{\n}\sum_{i=1}^\n \frac{1}{\hat\lambda_i}\approx\frac{1}{\kappa}. \]
Moreover, $\kappa$ satisfies $\n = \sum_{i=1}^\p \frac{\lambda_i}{\kappa + \lambda_i}$ by \eqref{eq:kappa}. Let $i^*$ be the smallest index $i$ such that $\kappa > \lambda_i$. Then, \smash{$\kappa\asymp (i^*)^{-1-\gamma}$} by the eigenvalue decay assumption. Therefore,
\[ N = \sum_{i=1}^\p \frac{\lambda_i}{\kappa + \lambda_i}\asymp i^* + \frac{1}{\kappa}\sum_{i=i^*}^\p\lambda_i\asymp i^* + \frac{1}{\kappa}\int_{i^*}^{\p} x^{-1-\gamma}\,dx \asymp i^* + \frac{1}{\kappa} (i^*)^{-\gamma}\asymp i^*. \]
It follows that $\kappa\asymp\n^{-1-\gamma}$ and
$\n^{-1}\Tr\paren*{(XX^\transp)^{-1}}\asymp\frac{1}{\n\kappa}\asymp\n^\gamma$.
\end{proof}

\begin{proof}[Proof of \Cref{proposition:B}]
Let $\kappa = \kappa(\lambda, \n)$. By the fact that $y = X\beta$ and the local Marchenko-Pastur law \eqref{eq:local-mp-B}, we have that
\[ y^\transp\paren*{XX^\transp + \n\lambda I}^{-1}y = \beta^\transp\Sigmahat(\Sigmahat + \lambda I)^{-1}\beta \approx \beta^\transp\Sigma(\Sigma + \kappa I)^{-1} \beta = \sum_{i=1}^\p \frac{\lambda_i}{\lambda_i + \kappa}(\beta^\transp v_i)^2. \]
Let $i^*$ be the smallest index $i$ such that $\kappa > \lambda_i$. Then, $\kappa\asymp (i^*)^{-1-\gamma}$ by the eigenvalue decay assumption.
Therefore, we may approximate the right-hand side as
\[ \sum_{i=1}^\p \frac{\lambda_i}{\lambda_i + \kappa}(\beta^\transp v_i)^2\asymp\sum_{i=1}^{i^*} (\beta^\transp v_i)^2 + \frac{1}{\kappa}\sum_{i=i^*}^\p \lambda_i(\beta^\transp v_i)^2 \asymp \int_1^{i^*} x^{-\delta}\,dx + \frac{1}{\kappa}\int_{i^*}^{\p} x^{-1-\gamma-\delta}\,dx. \]
Using the fact that $\delta < 1$, we further approximate
\[ \int_1^{i^*} x^{-\delta}\,dx + \frac{1}{\kappa}\int_{i^*}^{\p} x^{-1-\gamma-\delta}\,dx\asymp (i^*)^{1-\delta} + \frac{1}{\kappa} (i^*)^{-\gamma-\delta}\asymp (i^*)^{1-\delta}\asymp\kappa^{-\frac{1-\delta}{1 + \gamma}}. \]
Composing the above approximations proves the proposition.
\end{proof}

\section{Characterizing Classical vs.\ Non-classical Ridge Regression via the Train-Test Gap}\label{sec:classical}

Building on our theoretical analysis of \Cref{sec:theory,,sec:proofs}, we identify a precise and intuitive separation between the ``classical'' and ``non-classical'' regimes of ridge regression: we argue that the separation is characterized by the ratio between the generalization and empirical risks of the estimator $\hat\beta_\lambda$. We then discuss how our empirical setting belongs to the non-classical regime, whereas many previous non-asymptotic analyses of GCV (and ridge regression) \cite{golub79generalized, hsu14ridge, jacot20kernel} only apply in the classical regime.

A salient feature of overparameterized machine learning environments is the possibility of a large gap between the empirical and generalization risks. Thus, this gap serves as a natural candidate for characterizing ``non-classical'' learning problems. For ridge regression, our developments in \Cref{sec:theory,,sec:proofs} let us precisely discuss this gap. \Cref{theorem:gcv} implies that the ratio between the generalization and empirical risks of $\hat\beta_\lambda$ can be approximated as
\[ \frac{\Risk(\hat\beta_\lambda)}{\TrainRisk(\hat\beta_\lambda)}\approx\frac{\GCV_\lambda}{\TrainRisk(\hat\beta_\lambda)} = \paren*{\sum_{i=1}^\n \frac{\lambda}{\lambda + \hat\lambda_i}}^{-2}\approx \paren*{\frac{\kappa}{\lambda}}^2, \]
where the last approximation follows from \eqref{eq:local-mp-A} of \Cref{hypothesis:local-mp}. In particular, the ratio $\kappa/\lambda$ between the effective and the explicit regularizations determines the (multiplicative) train-test gap.

We say that a ridge regression instance is \emph{non-classical} if $\kappa / \lambda\gg 1$, for $\kappa = \kappa(\lambda, \n)$, and \emph{classical} otherwise. (Note that \Cref{lemma:kappa} implies $\kappa/\lambda\ge 1$ always.) Thus, non-classical instances are characterized by having a large train-test gap. The quantity $\kappa/\lambda$ shows up in several places besides the train-test gap: it arises in the definition \eqref{eq:kappa-app} of $\kappa$, and also in our bound for \Cref{proposition:gcv-app} relating $\GCV_\lambda$ and $\OmniRisk^\lambda$\footnote{Note that the multiplier on $\n^{-\smash{\frac 12} + o(1)}$ in the error bound can also be bounded by $(\kappa/\lambda)^{3/2}$.}. Generally, it appears that problems with a larger $\kappa/\lambda$ are more challenging to understand: this ratio determines the ``constant'' factor as $\n$ grows in our bounds; for other analyses, we will observe that that they in fact \emph{do not apply} once $\kappa/\lambda$ exceeds a constant and thus are limited to the classical regime.

\begin{remark}
Note that while $\p\ge\n$ is {necessary} for a problem to lie in the non-classical regime, it is not sufficient. Even in high dimensions, if we take $\lambda$ to be sufficiently large, we will find ourselves back in the classical regime. However, this can be far from optimal in terms of generalization (see, e.g., \Cref{fig:train-test}).
\end{remark}

The quantity $\kappa/\lambda$ connects to our empirical setting via the train-test gap. As can be seen from the empirical and generalization risk curves for \NTK{} regression on pretrained ResNet-34 representations of CIFAR-100 in \Cref{fig:train-test}, the optimal regularization is such that ratio between generalization and empirical risk is much larger than $1$ (meaning that $\kappa/\lambda$ is large as well), with this trend holding consistently across models and datasets. Thus, for a theoretical analysis to be applicable to our empirical setting, it should work when $\kappa/\lambda$ is large.

We next discuss how this feature of large $\kappa/\lambda$ can be challenging for more ``classical'' analyses of GCV and ridge regression:
\begin{description}
\itemsep=0pt
\item[Fixed design.] The first analyses of GCV (and ridge regression) \cite{craven78smoothing, golub79generalized} were for the setting of \emph{fixed design}, where the estimator $\hat\beta_\lambda$ is both trained and evaluated on the same dataset $x_1,\ldots,x_\n\in\R^\p$, but with noisy labels $y_i = \beta^\transp x_i + \epsilon_i$ resampled between train and evaluation time. Without noise, the generalization risk would simply be the empirical risk. Thus, when specialized to the noiseless case, such arguments for the consistency of the GCV estimator would imply the empirical risk approximates the generalization risk, which we know to be false.

To concretely see which assumption fails in such an analysis, we note that \citet{golub79generalized} require in their proof of the consistency of GCV that \smash{$\frac{1}{\n}\Tr\paren[\big]{\Sigmahat(\Sigmahat + \lambda I)^{-1}}\to 0$}. However, we also have that
\[ \frac{1}{\n}\Tr\paren[\big]{\Sigmahat(\Sigmahat + \lambda I)^{-1}} = \frac{1}{\n}\sum_{i=1}^\n \frac{\hat\lambda_i}{\hat\lambda_i + \lambda} = 1 - \lambda\cdot\frac{1}{\n}\sum_{i=1}^\n \frac{1}{\hat\lambda_i + \lambda}\approx 1 - \frac{\lambda}{\kappa}, \]
where the last approximation follows from \eqref{eq:local-mp-A} of \Cref{hypothesis:local-mp}. Thus, their assumption also implies $\kappa/\lambda\to 1$.

\item[Convergence of $\Sigmahat\to\Sigma$.] One approach to bounding generalization in the setting of \emph{random design} (i.e., as described in \Cref{sec:model}) is to show $\Sigmahat\approx\Sigma$ in an appropriate sense \cite{hsu14ridge, steinhardt21notes, bach21learning}. Being able to do so, however, often implies that $\TrainRisk(\hat\beta_\lambda)\approx\Risk(\hat\beta_\lambda)$, since the formulas for empirical and generalization risk can be obtained from each other by swapping a $\smash{\Sigmahat}$ for a $\Sigma$.

Concretely, the analyses of \citet{hsu14ridge} and \citet{steinhardt21notes} assume $\n\ge 2\sum_{i=1}^\p\frac{\lambda_i}{\lambda + \lambda_i}$. Now, since $\kappa\ge\lambda$, we have by \eqref{eq:kappa-app} that these analyses apply only when
\[ \frac{\kappa}{\lambda} = \paren*{1 - \frac{1}{\n}\sum_{i=1}^\p \frac{\lambda_i}{\kappa + \lambda_i}}^{-1}\le\paren*{1 - \frac{1}{\n}\sum_{i=1}^\p \frac{\lambda_i}{\lambda + \lambda_i}}^{-1}\le 2. \]
Similarly, \citet{bach21learning} assumes $\n\lambda\ge 2\Tr(\Sigma)$, in which case $\frac\kappa\lambda\le 1 + \frac{\Tr(\Sigma)}{\n\lambda} < 2$ by \Cref{lemma:kappa}.

\item[Classical random matrix theory.] Finally, we note that more classical random matrix theory techniques, e.g., those used by \citet{jacot20kernel}, which were originally developed for asymptotic analyses in the fixed dimensional ratio limit \cite{marchenkopastur}, can also struggle in the $\kappa/\lambda\gg 1$ regime. For instance, the bounds of \citet{jacot20kernel} are only non-vacuous when \smash{$\frac{\Tr(\Sigma)}{\n\lambda}\le 1$}, in which case $\frac{\kappa}{\lambda}\le 2$ by \Cref{lemma:kappa}. (In contrast, \Cref{hypothesis:local-mp} is motivated by recent developments in random matrix theory \cite{erdos17dynamical, knowles17anisotropic} that provide fine-grained control over the resolvent via fluctuation averaging arguments.)
\end{description}

\section{Details of the Experimental Setup}\label{sec:details}

\subsection{Evaluating GCV}

Recall from \Cref{sec:setup} that, for each model-dataset pair, we compute a kernel $K\in\R^{\n_0\times\n_0}$, where $\n_0$ is the dataset size, from the model's \NTK{} representations of the dataset, and that we approximate the full \NTK{} by $I\tensor K\in\R^{(\n_0\times\c)\times(\n_0\times\c)}$. To solve our classification tasks, we perform kernel regression on the one-hot labels $y_i\in\R^\c$ corresponding to each data point $x_i$, after normalizing each label to have mean $0$. Using our approximation, we have the decomposition of this task into $\c$ independent kernel regression problems, one for each class.

To aggregate risk, we simply sum the mean squared error over the $\c$ output dimensions. Observe that the normalization is such that predicting $0$ trivially obtains risk $\le 1$. To implement GCV for $\c$-dimensional output, we do the same, summing independent estimates of generalization risk for each of the $\c$ output dimensions.

For consistent comparisons across dataset sizes, we evaluate for each dataset size $\n$ the $\lambda$ values $\{\lambda_0/\n : \lambda_0\in\Lambda_0\}$ for each $\n$, where $\Lambda_0\subseteq\R_{\ge 0}$ is a set of base values chosen in proportion to \smash{$\norm{\Sigmahat}_{\mathrm{op}}$}. The range of $\Lambda_0$ is chosen to be the smallest one so that the generalization risk approximately converges at both extremes across all dataset sizes.

To solve the kernel regression problems for many regularization levels $\lambda$, we first diagonalize the kernel matrix. Doing so also allows for efficient computation of $\GCV_\lambda$ over multiple values of $\lambda$. The largest kernel matrices that we work with are obtained from the Food-101 dataset and have size $75750\times 75750$. We note that, while these matrices are substantial in size, they are much smaller than the \NTK{} representations before applying the kernel trick: a ResNet-101 has 44 million parameters, and thus, the matrix of \NTK{} representations would be of size approximately $75750\times 44\cdot 10^6$.

\subsection{Comparing GCV to Alternate Approaches}

To estimate $\alpha$ and $\sigma$ for $\SpecRisk$, we first note that the risk estimate is linear in $\alpha^2$ and $\sigma^2$. Thus, we fit $\alpha^2$ and $\sigma^2$ to minimize the mean squared error of the estimates over the set of $(\n, \lambda)$ pairs considered. We use these estimated $\alpha$ and $\sigma$ for all downstream evaluations.

For the correlation benchmark, we simply compute the Pearson correlation coefficient between each set of predictions over all pairs $(\n, \lambda)$ and corresponding values observed for generalization risk. Observe that correlation is (up to sign) invariant under affine transformations of the predictions.

For the scaling law benchmark, we first find for each $\n$ the $\lambda^*_\n$ that minimizes the generalization risk of ridge regression. Given a predictor, let \smash{$\widehat{\mathcal R}^*_\n$} be the risk prediction corresponding to $\n$ and $\lambda^*$. To estimate the rate $\hat\alpha$ of optimal scaling from each predictor, we fit the slope of the pairs $(\n, \smash{\widehat{\mathcal R}^*})$ on a log-log plot. To estimate the true scaling rate, we apply the same procedure to the observed generalization risks $\smash{\Risk(\hat\beta_{\lambda^*_\n})}$.
\section{Deriving the Spectrum-only Estimate}\label{sec:isotropic}

The result of \citet{dobriban18high} can be recovered from \Cref{corollary:hastie-noise} by assuming an isotropic prior $\mathcal{N}(0, {\alpha^2} I)$ over $\beta$. Indeed, we have that 
\[ \E_{\beta\sim\mathcal{N}(0, \alpha^2 I)}\brack*{\OmniRisk^{\lambda,\sigma}} = \alpha^2\cdot\parfrac{\kappa}{\lambda}\kappa^2\sum_{i=1}^\p \frac{\lambda_i}{(\kappa + \lambda_i)^2} + \sigma^2\cdot\parfrac{\kappa}{\lambda}. \]
To obtain an estimate for the first term, by \Cref{theorem:gcv-formal}, we can use the GCV estimate for the noiseless case:
\begin{align*}
\alpha^2\cdot\parfrac{\kappa}{\lambda}\kappa^2\sum_{i=1}^\p \frac{\lambda_i}{(\kappa + \lambda_i)^2}
&= \alpha^2\cdot\kappa^2\parfrac{}{\lambda}\paren*{-\Tr\paren[\Big]{\Sigma\paren[\big]{\Sigma + \kappa I}^{-1}}} \\
&\approx\alpha^2\cdot\hat\kappa^2\parfrac{}{\lambda}\paren*{-\Tr\paren[\Big]{\Sigmahat\paren[\big]{\Sigmahat + \lambda I}^{-1}}} \\
&= \alpha^2\cdot\hat\kappa^2\sum_{i=1}^\n\frac{\hat\lambda_i}{(\lambda + \hat\lambda_i)^2}.
\end{align*}
And for the second term, using the fact that $\kappa \approx \hat\kappa$, we have
\[ \sigma^2\cdot\parfrac{\kappa}{\lambda}\approx\sigma^2\cdot\parfrac{\hat\kappa}{\lambda} = \frac{\sigma^2}{\n}\cdot\hat\kappa^2\sum_{i=1}^\n\frac{1}{(\lambda + \hat\lambda_i)^2}. \]
This recovers the expressions used in \Cref{sec:comparison}.
\section{Empirical Evidence for the Local Marchenko-Pastur Law}\label{sec:verifying-mp}

In this section, we present evidence for the validity of \Cref{hypothesis:local-mp} in our empirical setting. Our findings here give further support to random matrix effects being a central driver of the phenomena surrounding overparameterized generalization.

While it is impossible to directly verify \Cref{hypothesis:local-mp} due to the high dimensionality of our empirical setting, we can still check whether direct consequences of this hypothesis hold. In particular, consider
\[ f(\lambda, \n)\coloneqq y^\transp\paren*{XX^\transp + \n\lambda I}^{-1}y = \beta^\transp\Sigmahat(\Sigmahat + \lambda I)^{-1}\beta \approx \beta^\transp\Sigma(\Sigma + \kappa I)^{-1} \beta, \]
where the approximate equality holds by \eqref{eq:local-mp-B}. Thus, if \Cref{hypothesis:local-mp} holds, then $f(\lambda, \n)$ should be determined by $\kappa(\lambda, \n)$. By \eqref{eq:local-mp-A} of \Cref{hypothesis:local-mp}, we may also estimate $\kappa(\lambda, \n)$ as
\[ \kappa(\lambda, \n)\approx\paren*{\sum_{i=1}^\n \frac{1}{\lambda + \hat\lambda_i}}^{-1}\eqqcolon\hat\kappa(\lambda, \n). \]
To check the consistency of \Cref{hypothesis:local-mp}, we can therefore examine whether the curves traced out by $(\hat\kappa(\lambda, \n), f(\lambda, \n))$ for varying $\lambda$ coincide across values of $\n$. We plot a version of this in \Cref{fig:coincide}, where we multiply $f$ by $\hat\kappa$ for normalization.

\begin{figure}
    \centering
    \includegraphics[width=\columnwidth]{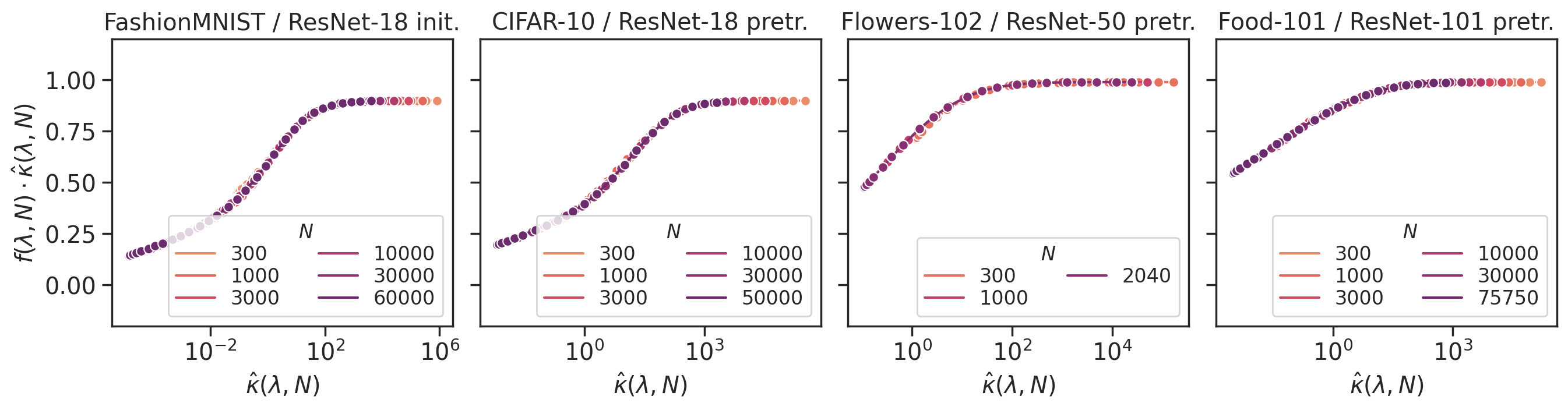}
    \caption{Plotting $(\hat\kappa(\lambda, \n), f(\lambda, \n)\cdot\hat\kappa(\lambda, \n))$ for varying values of $\lambda$ and $\n$}
    \label{fig:coincide}
\end{figure}

Examining \Cref{fig:coincide}, we find that the curves traced out for different values of $\n$ almost coincide, as predicted by \Cref{hypothesis:local-mp}, with this holding across a range of models and datasets. Thus, we find support for the local Marchenko-Pastur law being valid in our empirical setting.

\clearpage

\section{Additional Experiments and Figures}\label{sec:additional}

\subsection{Growth of $\norm{\hat\beta_0}_2/\sqrt\n$ in $\n$}

In this section, we provide additional examples of when the norm-based estimate $\norm{\hat\beta_0}_2/\sqrt\n$ increases as $\n$ increases and the generalization risk decreases in \Cref{fig:ng}, showing that this observation is consistent across models and datasets.

\begin{figure}[H]
    \centering
    \includegraphics[width=0.45\textwidth]{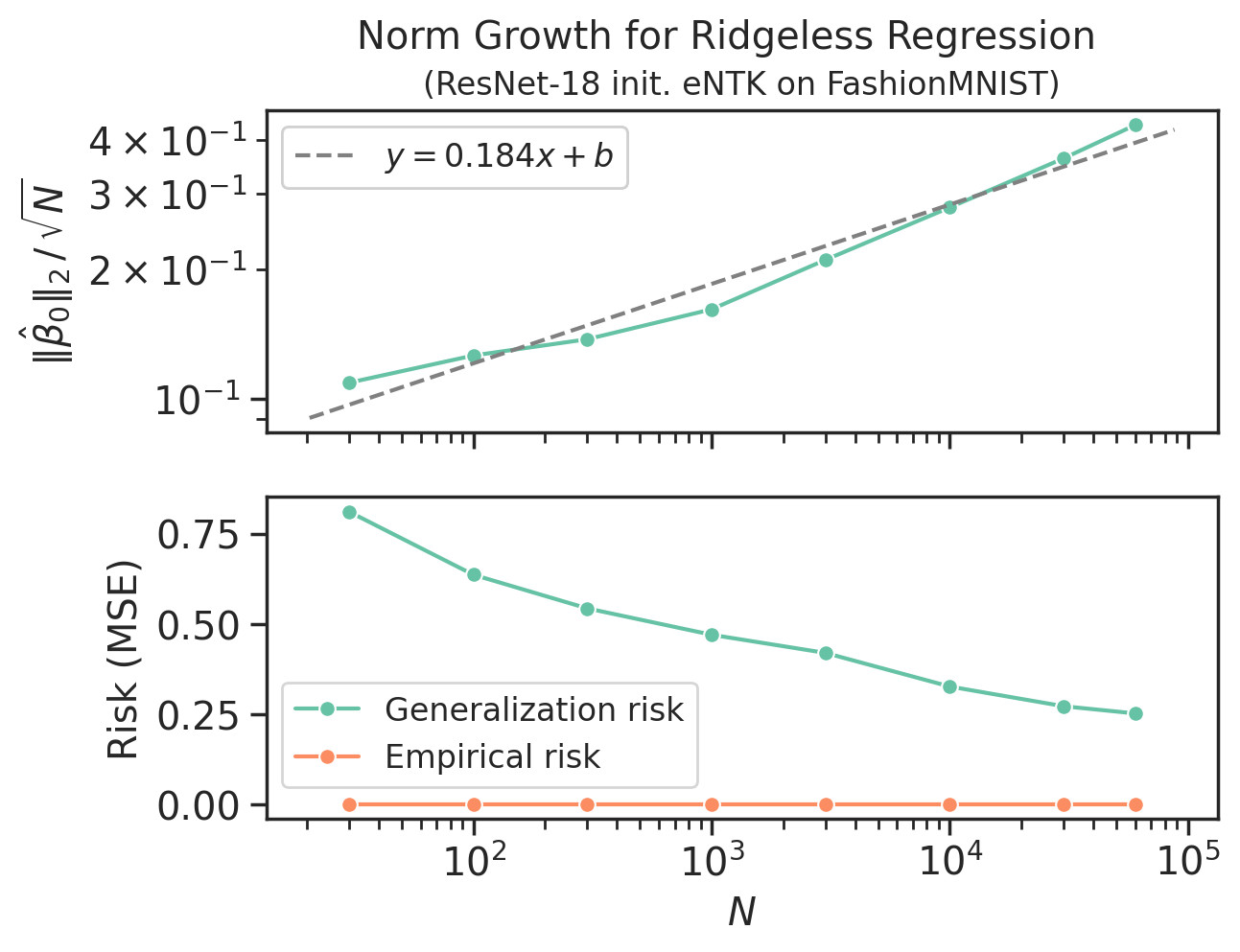}
    \includegraphics[width=0.475\textwidth]{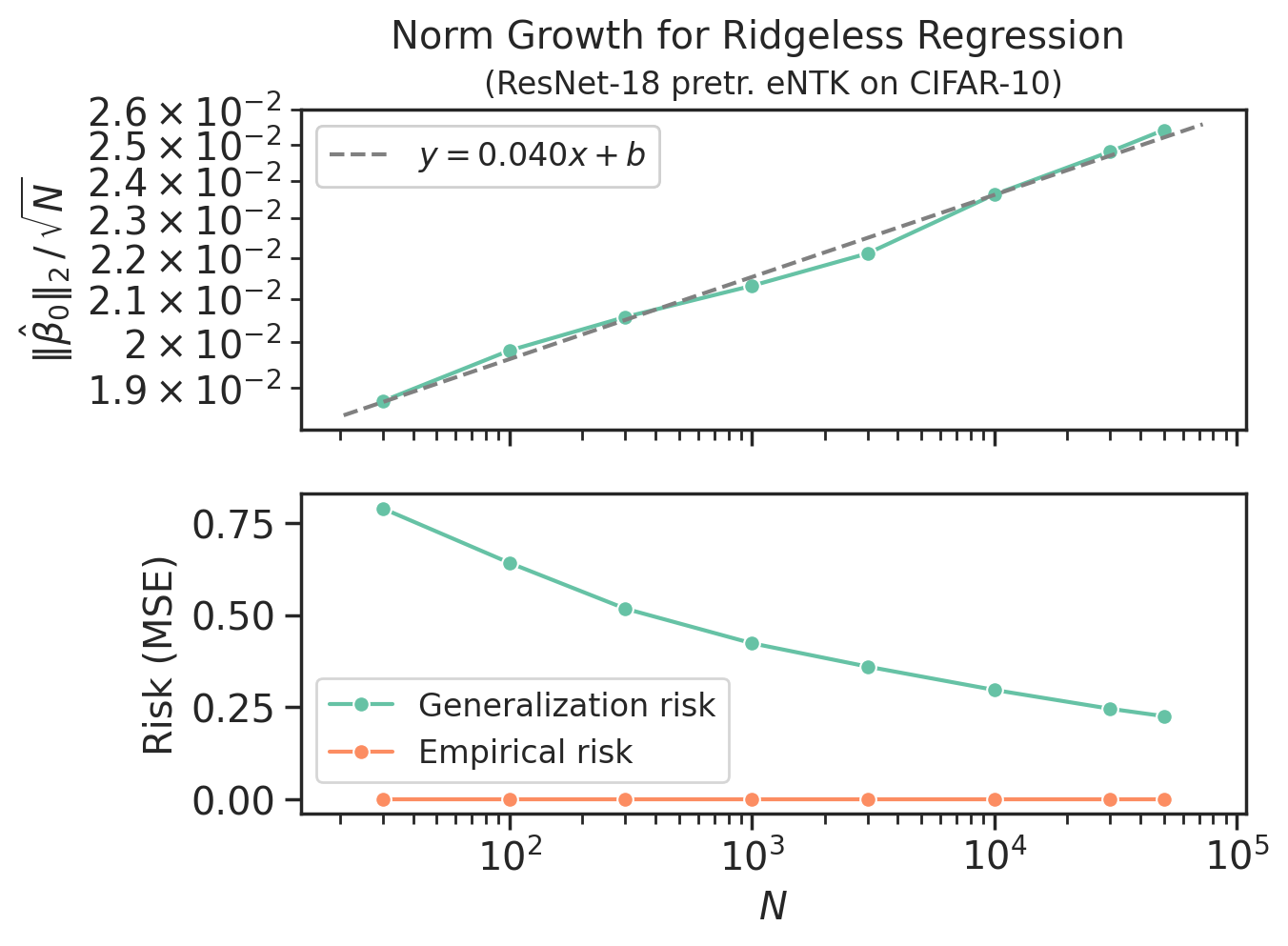}
    \includegraphics[width=0.45\textwidth]{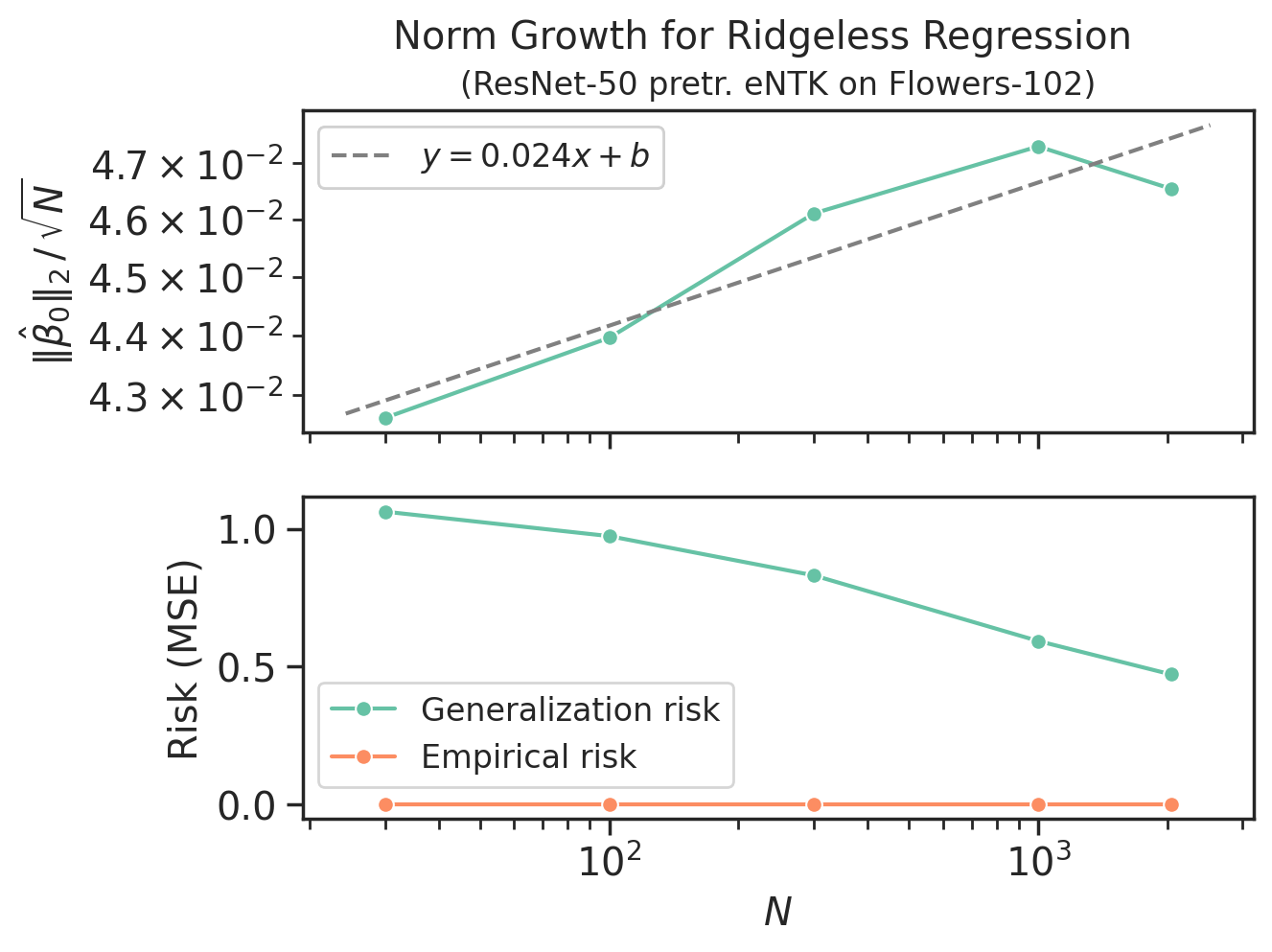}
    \includegraphics[width=0.45\textwidth]{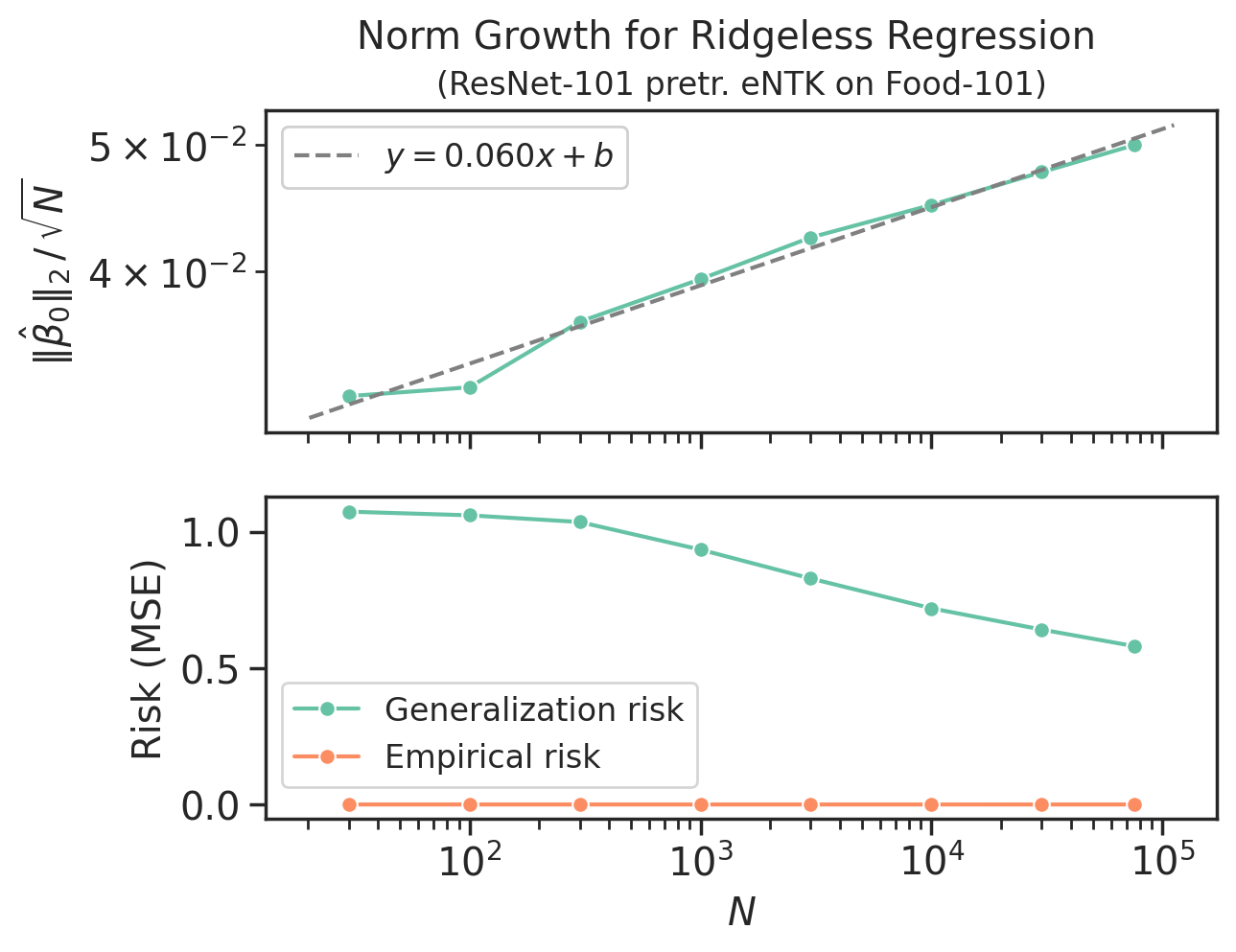}
    \caption{Additional plots showing the growth of the norm $\norm{\hat\beta_0}_2/\sqrt\n$ for ridge regression on the \NTK{}s additional models and datasets.}
    \label{fig:ng}
\end{figure}

\clearpage

\subsection{Spectrum Comparisons}

In this section, we provide additional examples of the slow convergence of the spectrum (\Cref{fig:spectr}) and of pretrained models having higher effective dimension (\Cref{fig:pretr}), showing that these trends also hold over a variety of datasets and models.

\begin{figure}[H]
    \centering
    \includegraphics[width=0.7\textwidth]{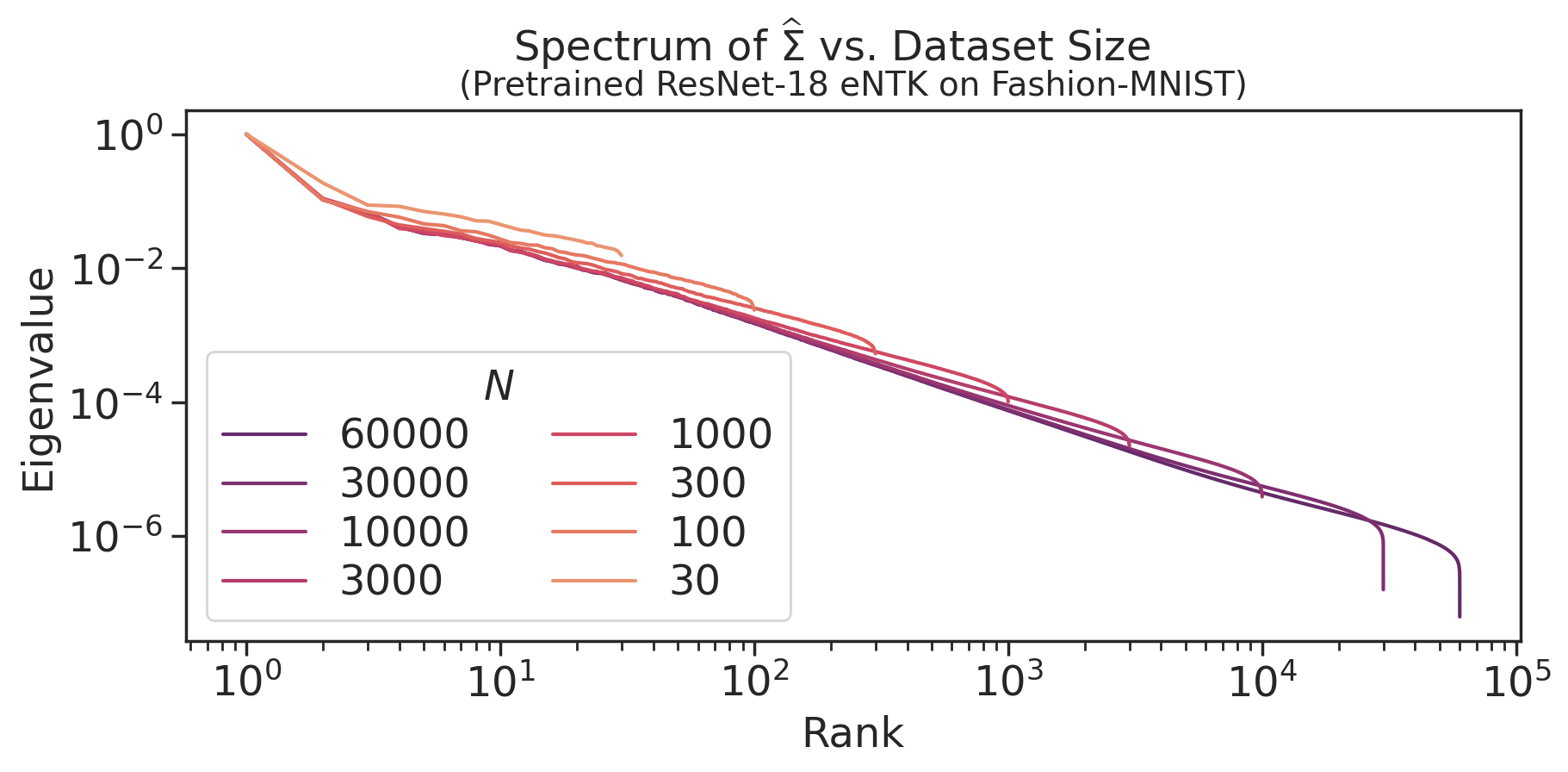}
    \includegraphics[width=0.7\textwidth]{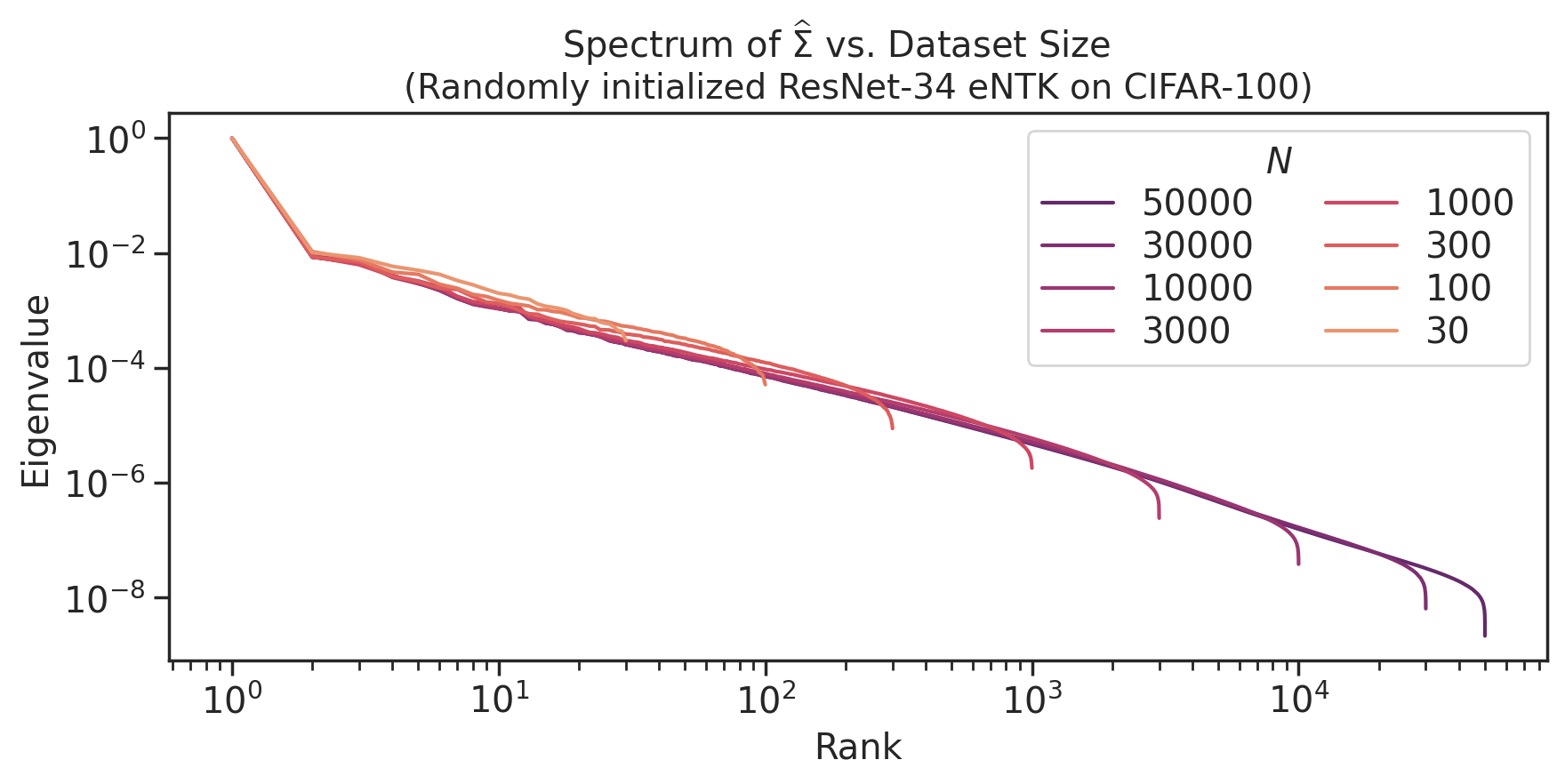}
    \includegraphics[width=0.7\textwidth]{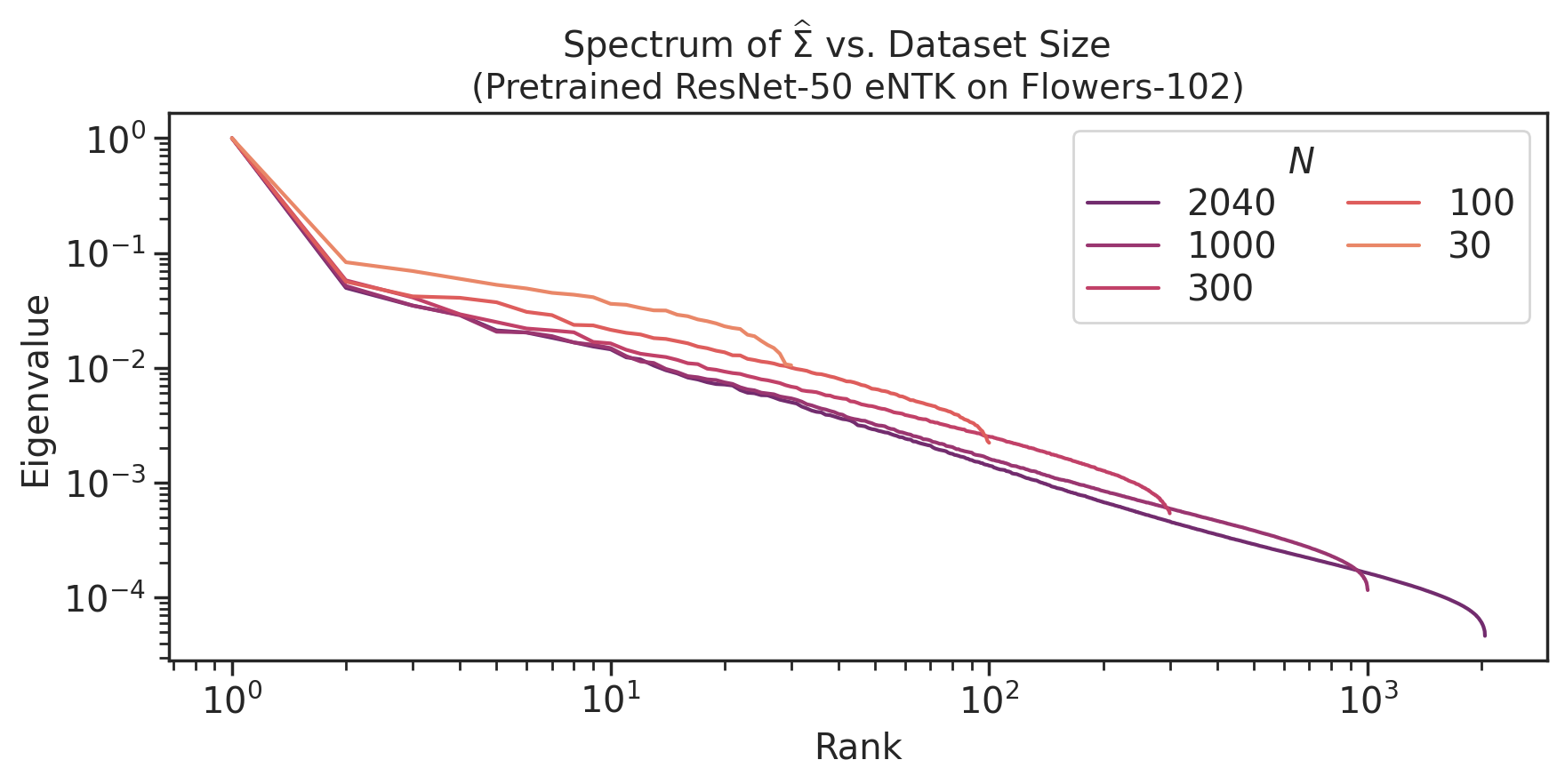}
    \caption{Additional plots showing the slow convergence of the empirical eigenvalue spectrum to the population eigenvalue spectrum.}
    \label{fig:spectr}
\end{figure}

\begin{figure}[H]
    \centering
    \includegraphics[scale=0.8]{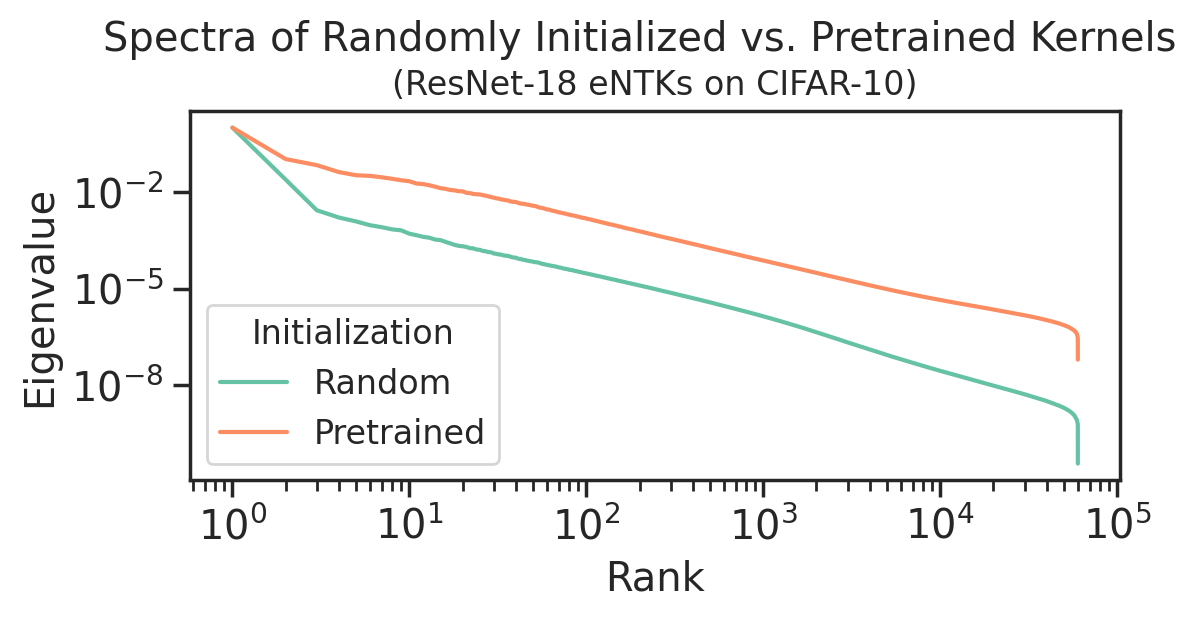}
    \includegraphics[scale=0.8]{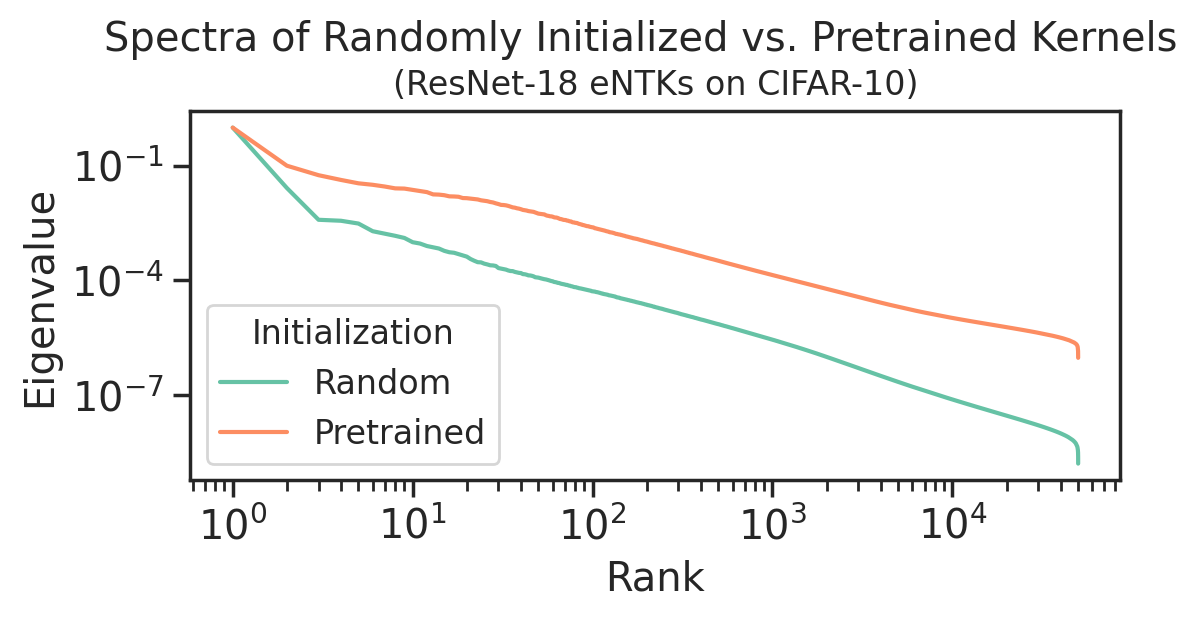}
    \includegraphics[scale=0.8]{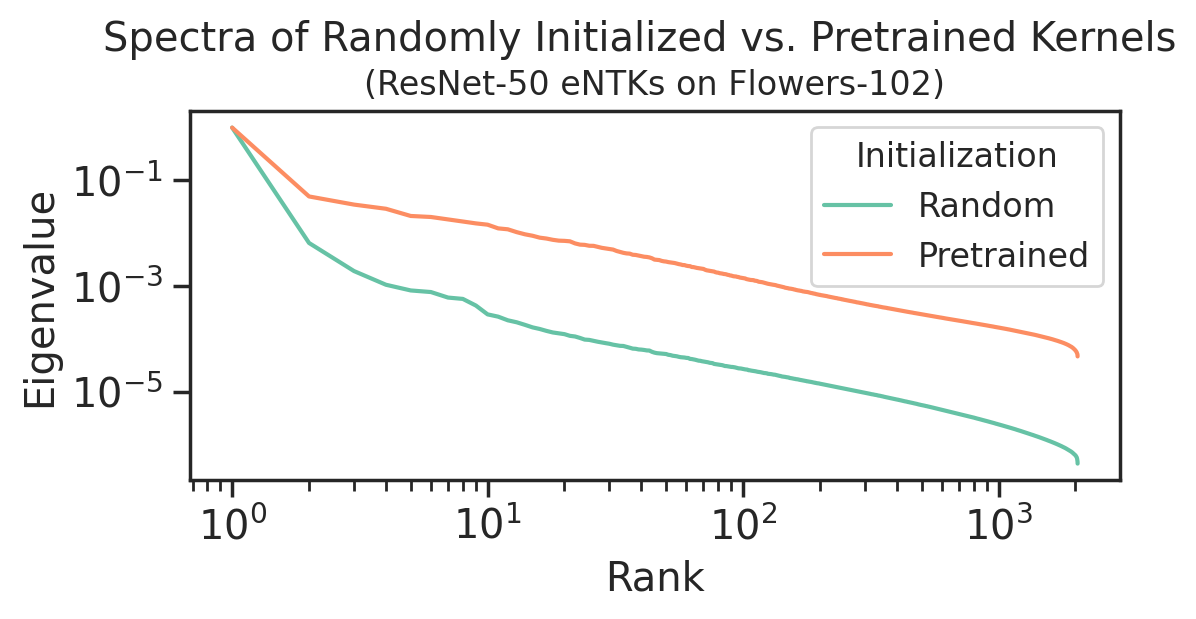}
    \caption{Additional plots showing that pretrained representations have slower eigendecay and thus higher effective dimension.}
    \label{fig:pretr}
\end{figure}

\clearpage

\subsection{Regression on Last Layer Activations}

In this section, we consider predicting the generalization risk of ridge regression on the last layer activations of pretrained models. \Cref{fig:all-last} plots the results of these experiments. These plots show that, in this lower-dimensional setting that spans the under- and overparameterized regimes, the GCV estimator continues to perform well.

\begin{figure}[H]
    \centering
    \includegraphics[width=\textwidth]{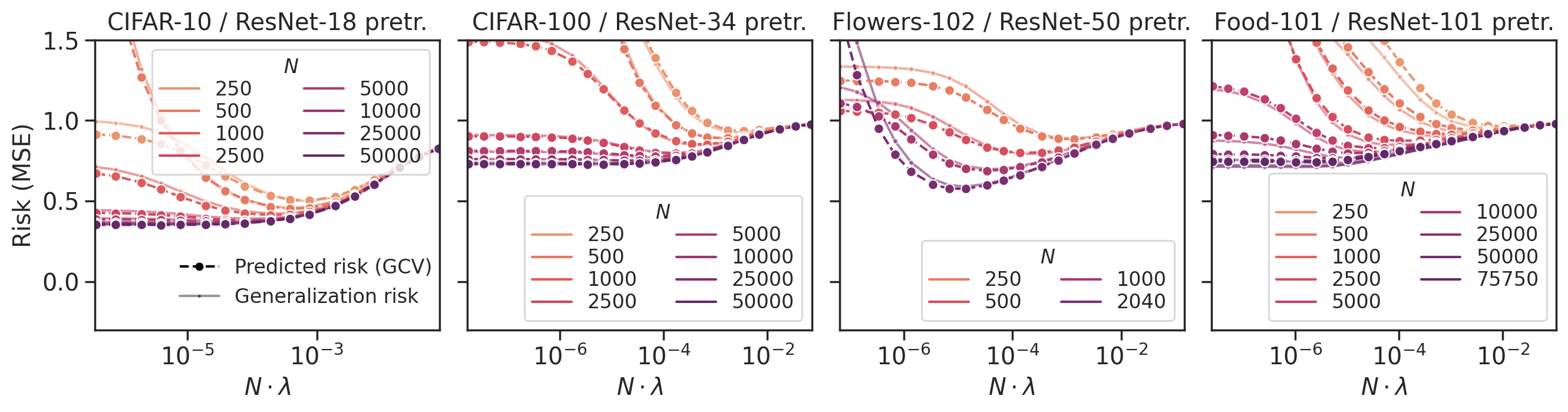}
    \caption{Generalization risk vs.\ the GCV prediction for regression on the \emph{last-layer} activations, for various datasets and networks, across sample sizes $\n$ and regularization levels $\lambda$}
    \label{fig:all-last}
\end{figure}

\clearpage

\subsection{Plots for the Norm- and Spectrum-Based Predictors}

To provide further intuition about the predictors $\NormRisk$ and $\SpecRisk$, we provide plots of the predictions that they make for our empirical setting in \Cref{fig:all-norm,,fig:all-spec}.

\begin{figure}[H]
    \centering
    \includegraphics[width=\textwidth]{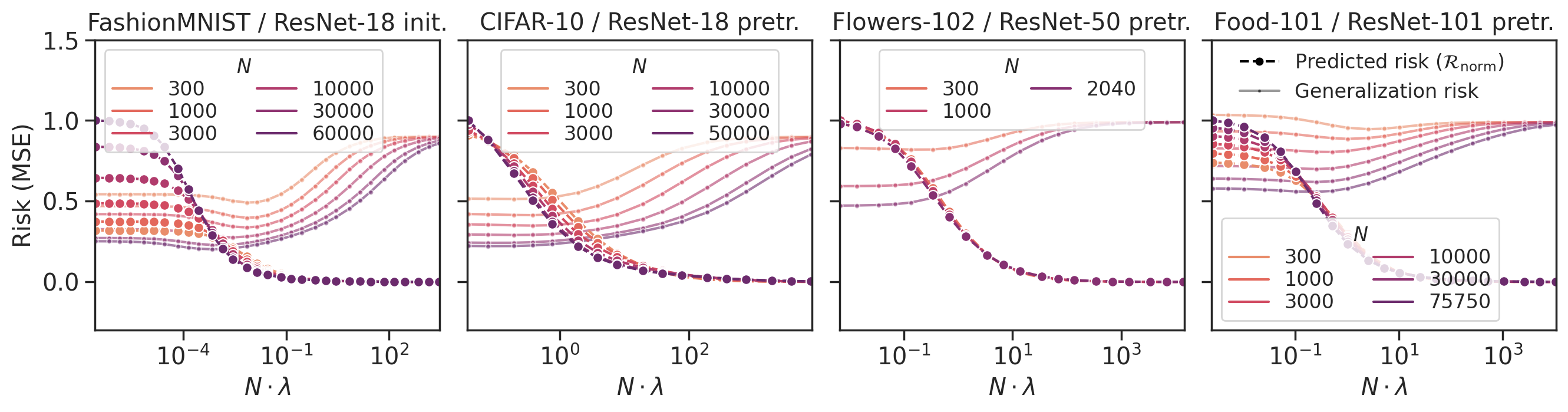}
    \caption{Plots of the norm-based predictor $\norm{\hat\beta_\lambda}_2/\sqrt\n$ against the generalization risk for various datasets and architectures. We normalize the predictions so that the maximum prediction in any graph is $1$. Note that the prediction tends to be negatively correlated with the actual test risk.}
    \label{fig:all-norm}
\end{figure}

\begin{figure}[H]
    \centering
    \includegraphics[width=\textwidth]{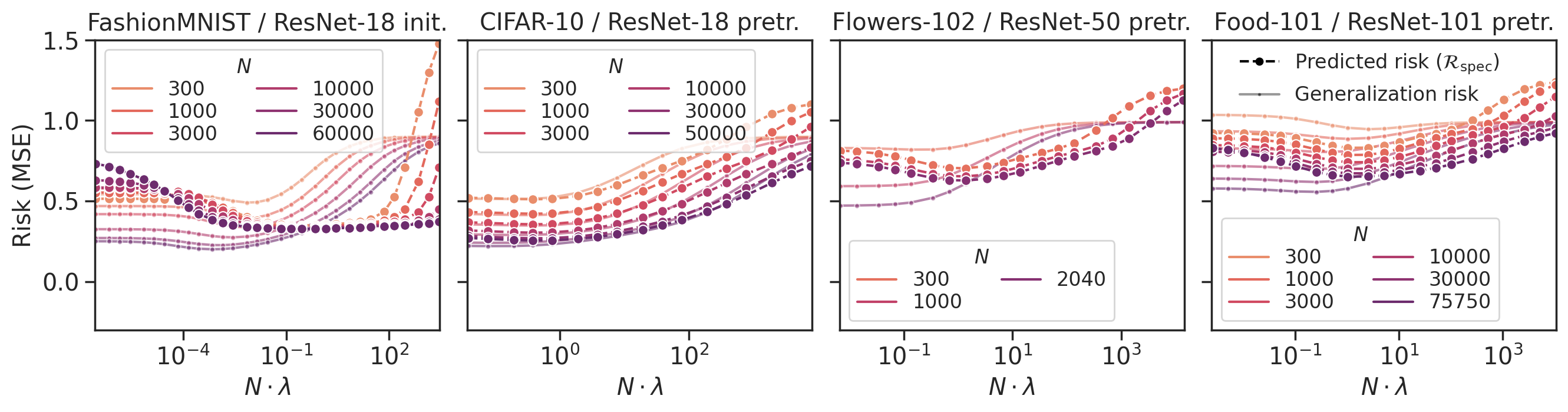}
    \caption{Plots of the $\SpecRisk^{\alpha,\sigma}$ for $\alpha$, $\sigma$ fitted as per \Cref{sec:details} against the generalization risk for various datasets and architectures. Note that this approach has trouble in particular fitting the randomly initialized setting.}
    \label{fig:all-spec}
\end{figure}

\end{document}